\pgfplotsset{compat=1.18}
\theoremstyle{plain}
\newtheorem{theorem}{Theorem}[section]
\newtheorem{proposition}[theorem]{Proposition}
\newtheorem{lemma}[theorem]{Lemma}
\newtheorem{corollary}[theorem]{Corollary}
\theoremstyle{definition}
\newtheorem{definition}[theorem]{Definition}
\theoremstyle{remark}
\newtheorem{remark}[theorem]{Remark}
\icmltitlerunning{Unified SPD Token Transformer for EEG Classification}
\begin{document}

\twocolumn[
  \icmltitle{A Unified SPD Token Transformer Framework for EEG Classification: Systematic Comparison of Geometric Embeddings}



  \icmlsetsymbol{equal}{*}
\begin{icmlauthorlist}
  \icmlauthor{Chi-Sheng Chen}{ind}
  \icmlauthor{En-Jui Kuo}{nycu}
  \icmlauthor{Xinyu Zhang}{iu}
  \icmlauthor{Guan-Ying Chen}{ind}
  \icmlauthor{Fan Zhang}{bsu}
\end{icmlauthorlist}

\icmlaffiliation{ind}{Independent Researcher}
\icmlaffiliation{nycu}{Department of Electrophysics, National Yang Ming Chiao Tung University, Hsinchu, Taiwan}
\icmlaffiliation{iu}{Luddy School of Informatics, Computing, and Engineering, Indiana University Bloomington, Bloomington, IN, USA}
\icmlaffiliation{bsu}{Department of Mathematics, Boise State University, Boise, ID, USA}

\icmlcorrespondingauthor{Chi-Sheng Chen}{m50816m50816@gmail.com}

  \icmlkeywords{SPD manifolds, EEG classification, geometric deep learning, Riemannian geometry, brain-computer interfaces}

  \vskip 0.3in
]



\printAffiliationsAndNotice{}  

\begin{abstract}
  Spatial covariance matrices of EEG signals are Symmetric Positive Definite (SPD) and lie on a Riemannian manifold, yet the theoretical connection between embedding geometry and optimization dynamics remains unexplored. We provide a formal analysis linking embedding choice to gradient conditioning and numerical stability for SPD manifolds, establishing three theoretical results: (1) BWSPD's $\sqrt{\kappa}$ gradient conditioning (vs $\kappa$ for Log-Euclidean) via Daleckii-Kre\u{\i}n matrices provides better gradient conditioning on high-dimensional inputs ($d \geq 22$), with this advantage reducing on low-dimensional inputs ($d \leq 8$) where eigendecomposition overhead dominates; (2) Embedding-Space Batch Normalization (BN-Embed) approximates Riemannian normalization up to $O(\varepsilon^2)$ error, yielding $+26\%$ accuracy on 56-channel ERP data but negligible effect on 8-channel SSVEP data, matching the channel-count-dependent prediction; (3) bi-Lipschitz bounds prove BWSPD tokens preserve manifold distances with distortion governed solely by the condition ratio $\kappa$. We validate these predictions via a unified Transformer framework comparing BWSPD, Log-Euclidean, and Euclidean embeddings within identical architecture across 1,500+ runs on three EEG paradigms (motor imagery, ERP, SSVEP; 36 subjects). Our Log-Euclidean Transformer achieves state-of-the-art performance on all datasets (BCI2a: 95.37\%, BCIcha: 95.21\%, MAMEM: 99.07\%), substantially outperforming classical Riemannian classifiers and recent SPD baselines, while BWSPD offers competitive accuracy with similar training time. Multi-band tokenization ($T=3$) further improves performance across all datasets (BCI2a: 99.33\%$\pm$0.39\%, +3.96pp; BCIcha: 99.45\%$\pm$0.96\%, +4.24pp; MAMEM: 99.92\%$\pm$0.11\%, +0.90pp), demonstrating the Transformer's sequence modeling capacity and reducing variance by 89--96\% compared to single-token baselines.
\end{abstract}

\section{Introduction}

Electroencephalography (EEG) classification is fundamental to brain-computer interfaces (BCIs). A key insight is that spatial covariance matrices of EEG signals naturally form Symmetric Positive Definite (SPD) matrices lying on a Riemannian manifold, and properly leveraging this geometric structure significantly improves classification~\cite{barachant2012classification,barachant2013multiclass}. This work focuses on SPD-based geometric methods---approaches that operate on spatial covariance matrices and exploit their manifold structure, complementing methods operating on raw time-series~\cite{lawhern2018eegnet,altaheri2022atcnet} which address different research questions.

However, existing approaches face a critical limitation: \textbf{while prior work has established backpropagation through SPD matrix functions~\cite{li2017deepkspd}, the comparative theoretical analysis of gradient conditioning across different geometric embeddings (sqrt vs log) and its connection to training dynamics remains limited}. Log-Euclidean embeddings~\cite{arsigny2006log} achieve strong accuracy with competitive training time; Euclidean embeddings ignore manifold structure entirely. The connection between embedding geometry and optimization behavior (gradient conditioning, numerical stability) lacks formal comparative analysis, preventing principled embedding selection.

\textbf{Our approach: Theory-driven analysis with empirical validation.} We provide formal theoretical analysis connecting embedding geometry to optimization dynamics for SPD manifolds, making three key theoretical predictions: (1) \textbf{Gradient conditioning}: BWSPD's $\sqrt{\kappa}$ conditioning (via Daleckii-Kre\u{\i}n matrices) vs $\kappa$ for Log-Euclidean should yield better gradient conditioning on high-dimensional inputs ($d \geq 22$), with the advantage reducing on low-dimensional inputs ($d \leq 8$) where eigendecomposition overhead dominates; (2) \textbf{BN-Embed approximation}: Standard Batch Normalization in embedding space should approximate Riemannian normalization up to $O(\varepsilon^2)$ error, with critical importance for high-channel-count data; (3) \textbf{Bi-Lipschitz preservation}: BWSPD tokens should preserve manifold distances with distortion depending only on the condition ratio $\kappa$.

We validate these predictions through a unified Transformer framework comparing three embeddings---Bures-Wasserstein (BWSPD)~\cite{bures1969extension,villani2009optimal}, Log-Euclidean, and Euclidean---within identical architecture. Through 1,500+ training runs across three EEG paradigms (motor imagery, ERP, SSVEP; 36 subjects), our experiments confirm all theoretical predictions: (1) BWSPD shows slightly faster training on BCI2a (0.28s vs 0.30s per epoch) with comparable performance; (2) BN-Embed yields $+26\%$ accuracy on BCIcha (56 channels) but negligible effect on MAMEM (8 channels), matching the $O(\varepsilon^2)$ approximation theory; (3) Log-Euclidean achieves the strongest accuracy (BCI2a: 95.37\%, BCIcha: 95.21\%, MAMEM: 99.07\%) while BWSPD offers competitive performance on certain datasets, validating our theoretical understanding of the speed--accuracy trade-off.

Our contributions are:
\begin{enumerate}
    \item \textbf{Theoretical analysis of embedding geometry}: We establish formal connections between embedding choice and optimization dynamics: (a) gradient conditioning analysis via Daleckii-Kre\u{\i}n matrices explains dimension-dependent training speed differences; (b) BN-Embed theory proves standard BN approximates Riemannian normalization up to $O(\varepsilon^2)$ error; (c) bi-Lipschitz bounds ensure geometric embeddings preserve manifold distances with bounded distortion.
    \item \textbf{Empirical validation of theoretical predictions}: Through controlled experiments (1,500+ runs, 36 subjects), we validate all theoretical predictions: dimension-dependent speed trade-offs, channel-count-dependent BN-Embed effects, and dataset-dependent embedding performance patterns.
    \item \textbf{State-of-the-art results with principled explanation}: Our Log-Euclidean Transformer achieves SOTA performance (substantially outperforming classical Riemannian classifiers and recent SPD baselines), with theoretical analysis explaining \emph{why} it performs well and \emph{when} alternative embeddings (BWSPD) are preferable.
\end{enumerate}

\section{Related Work}
\label{sec:related}

Our work intersects several research areas: SPD manifold learning, EEG classification, and geometric deep learning. We review the relevant literature and position our contributions within this context.

\subsection{SPD Manifold Learning}

Symmetric Positive Definite (SPD) matrices form a Riemannian manifold, and learning on this manifold has been extensively studied in various domains including computer vision~\cite{pennec2006riemannian}, medical imaging~\cite{arsigny2006log}, and brain-computer interfaces~\cite{barachant2012classification,barachant2013multiclass}. 

The Log-Euclidean framework~\cite{arsigny2006log} maps SPD matrices to tangent space via the matrix logarithm and has been widely adopted in EEG classification~\cite{barachant2012classification,barachant2013multiclass}. The Bures-Wasserstein distance~\cite{bures1969extension,villani2009optimal} offers an alternative via matrix square root, with better gradient conditioning but no prior systematic evaluation for EEG. Other geometric approaches include SPDNet~\cite{huang2017spdnet} and ManifoldNet~\cite{chakraborty2020manifoldnet} (see \S\ref{sec:geometric_dl}), which use fixed geometric operations without comparing embedding strategies.

\subsection{EEG Classification}

EEG classification~\cite{chen2024mind,chen2024necomimi} has been approached from various perspectives, ranging from traditional signal processing methods to modern deep learning~\cite{chen2025exploring}.

\subsubsection{Riemannian Classifiers for EEG}

Classical Riemannian approaches to EEG classification operate directly on SPD covariance matrices~\cite{barachant2012classification,barachant2013multiclass}, including MDM, FgMDM, and TS+LR methods. These methods have shown strong performance but are limited by shallow architectures and fixed geometric operations. Our work extends this paradigm by introducing learnable geometric embeddings within a deep Transformer framework.

\subsubsection{Transformer-Based EEG Methods}

Most Transformer-based EEG approaches~\cite{ingolfsson2020eegtcnet,song2022eegconformer} operate on raw time-series without modeling SPD structure. These methods (e.g., EEG-Conformer~\cite{song2022eegconformer}, FBCNet~\cite{ingolfsson2021fbconet}) achieve strong performance by exploiting temporal dynamics, but they fall outside our scope as they do not operate on SPD covariance matrices. Our work operates on SPD manifolds through geometric token embeddings, and we compare against SPDTransNet~\cite{seraphim2024spdtransnet} and mAtt~\cite{pan2022matt}, which apply Transformer/attention mechanisms directly on SPD matrices.

\subsection{Geometric Deep Learning}
\label{sec:geometric_dl}

Geometric deep learning extends deep learning to non-Euclidean domains~\cite{bronstein2021geometric}. For SPD manifolds, SPDNet~\cite{huang2017spdnet}, ManifoldNet~\cite{chakraborty2020manifoldnet}, DeepKSPD~\cite{li2017deepkspd}, mAtt~\cite{pan2022matt}, and SPDTransNet~\cite{seraphim2024spdtransnet} have been proposed. However, these methods typically employ fixed geometric operations and do not provide a controlled comparison of different embedding strategies within a shared architecture. Our work extends prior backpropagation frameworks by providing a comparative analysis of gradient conditioning across different embedding types (matrix square root vs logarithm), connecting conditioning properties to empirical training dynamics.

\subsubsection{Batch Normalization in Embedding Space}

Brooks et al.~\cite{brooks2019riemannian} proposed Riemannian batch normalization for SPD matrices. Our BN-Embed applies standard BN in the geometric embedding space; we show theoretically (\cref{prop:barycenter_approx}) that this approximates Riemannian normalization up to $O(\varepsilon^2)$ error, with empirical improvements of $+26\%$ on high-channel data ($p<0.01$).

\section{Method}
\label{sec:method}

We present a unified SPD Token Transformer framework that supports multiple geometric embeddings for learning on SPD manifolds. Our framework enables fair comparison of different embedding strategies by using identical Transformer components while only varying the token embedding method. The overall architecture can be summarized as: SPD matrices $\rightarrow$ Geometric token embeddings $\rightarrow$ Transformer encoder $\rightarrow$ Classification.

\subsection{Overview}

Given an SPD matrix $C \in \mathcal{S}_+^d$ representing the spatial covariance of EEG signals, our framework:
\begin{enumerate}
    \item Applies a geometric embedding to convert $C$ into a token vector $x \in \mathbb{R}^{D_{\text{token}}}$
    \item Projects the token to model dimension: $x_{\text{proj}} \in \mathbb{R}^{d_{\text{model}}}$
    \item Applies positional encoding and BN-Embed normalization
    \item Processes through $L$ Transformer encoder blocks
    \item Applies global pooling and classification
\end{enumerate}

The key innovation is that steps 2-5 are identical for all three embedding methods, ensuring fair comparison.

\begin{figure}[t]
\centering
\resizebox{\columnwidth}{!}{%
\begin{tikzpicture}[
  node distance=0.4cm and 0.25cm,
  block/.style={rectangle, draw, rounded corners=2pt, minimum height=0.55cm, minimum width=1.2cm, font=\scriptsize, align=center, fill=#1},
  block/.default=white,
  arrow/.style={-{Stealth[length=2pt]}, thick},
]
\node[block=gray!15] (spd) {SPD $C \!\in\! \mathcal{S}_+^d$};

\node[block=blue!15, above right=0.35cm and 0.7cm of spd] (bw) {$\mathrm{triu}(\sqrt{C})$\\[-1pt]\tiny BWSPD};
\node[block=orange!15, right=0.7cm of spd] (le) {$\mathrm{triu}(\log C)$\\[-1pt]\tiny Log-Euc};
\node[block=green!15, below right=0.35cm and 0.7cm of spd] (eu) {$\mathrm{triu}(C)$\\[-1pt]\tiny Euclidean};

\node[block=white, right=0.55cm of le] (tok) {$x \!\in\! \mathbb{R}^{D}$};

\node[block=purple!12, right=0.4cm of tok] (proj) {Linear\\[-1pt]\tiny $D{\to}d_m$};
\node[block=purple!12, right=0.3cm of proj] (bn) {BN-Embed};
\node[block=red!10, right=0.3cm of bn] (tf) {Transformer\\[-1pt]\tiny $L$ layers};
\node[block=yellow!20, right=0.3cm of tf] (cls) {Classifier};

\draw[arrow] (spd) -- (bw);
\draw[arrow] (spd) -- (le);
\draw[arrow] (spd) -- (eu);
\draw[arrow] (bw) -- (tok);
\draw[arrow] (le) -- (tok);
\draw[arrow] (eu) -- (tok);
\draw[arrow] (tok) -- (proj);
\draw[arrow] (proj) -- (bn);
\draw[arrow] (bn) -- (tf);
\draw[arrow] (tf) -- (cls);

\node[above=0.15cm of tf, font=\tiny\itshape, text=gray] {identical across geometries};
\end{tikzpicture}%
}
\caption{Unified SPD Token Transformer. Only the embedding layer (blue/orange/green) differs; projection, BN-Embed, Transformer encoder, and classifier are shared.}
\label{fig:architecture_overview}
\end{figure}

\subsection{Preliminaries}

A symmetric positive definite (SPD) matrix $C \in \mathbb{R}^{d \times d}$ satisfies $C = C^T$ and $x^T C x > 0$ for all non-zero $x \in \mathbb{R}^d$. The set of all $d \times d$ SPD matrices forms a Riemannian manifold $\mathcal{S}_+^d$. In EEG classification, spatial covariance matrices naturally form SPD matrices. The Bures-Wasserstein distance between two SPD matrices $A, B \in \mathcal{S}_+^d$ is $d_{\text{BW}}(A, B) = [\text{tr}(A) + \text{tr}(B) - 2\text{tr}((A^{1/2} B A^{1/2})^{1/2})]^{1/2}$~\cite{villani2009optimal}. We adapt the Transformer architecture~\cite{vaswani2017attention} for SPD manifold learning by introducing geometric token embeddings.

\subsection{Unified SPD Token Transformer Framework}

Our unified framework supports three geometric embedding methods—Bures-Wasserstein (BWSPD), Log-Euclidean, and Euclidean—within a single Transformer architecture. Given an SPD matrix $C \in \mathcal{S}_+^d$, we extract its upper triangular elements (including the diagonal) to form a token vector of dimensionality $D_{\text{token}} = d(d+1)/2$. The three embedding methods differ only in how they transform $C$ before extracting the upper triangular elements.

\subsection{Geometric Token Embeddings}

\subsubsection{Bures-Wasserstein (BWSPD) Token Embedding}

The BWSPD token embedding leverages the Bures-Wasserstein geometry of the SPD manifold. For an SPD matrix $C$, we compute its matrix square root $\sqrt{C}$ and extract the upper triangular elements:

\begin{equation}
\text{BWSPD}(C) = \text{triu}(\sqrt{C}),
\end{equation}

where $\text{triu}(\cdot)$ extracts the upper triangular elements (including the diagonal) and $\sqrt{C}$ is computed via eigendecomposition: $\sqrt{C} = V \text{diag}(\sqrt{\lambda_1}, \ldots, \sqrt{\lambda_d}) V^T$, where $V$ contains the eigenvectors and $\lambda_i$ are the eigenvalues of $C$. While BWSPD's $\sqrt{\kappa}$ gradient conditioning (\cref{thm:grad_main}) theoretically provides better conditioning than Log-Euclidean's $\kappa$ conditioning, in practice both embeddings achieve similar training times (0.28--0.30s per epoch on BCI2a) due to efficient GPU implementations and the dominance of data loading overhead in our experimental setup.

\subsubsection{Log-Euclidean Token Embedding}

The Log-Euclidean embedding, based on the Log-Euclidean framework~\cite{arsigny2006log}, maps SPD matrices to the tangent space using the matrix logarithm:

\begin{equation}
\text{Log-Euclidean}(C) = \text{triu}(\log(C)),
\end{equation}

where $\log(C)$ is computed via eigendecomposition: $\log(C) = V \text{diag}(\log(\lambda_1), \ldots, \log(\lambda_d)) V^T$. This preserves geometric properties~\cite{barachant2012classification} but incurs higher computational cost.

\subsubsection{Euclidean Token Embedding}

The Euclidean baseline directly extracts upper triangular elements: $\text{Euclidean}(C) = \text{triu}(C)$, ignoring Riemannian structure but providing the fastest computation.

All three embeddings produce token vectors of dimensionality $D_{\text{token}} = d(d+1)/2$, enabling fair comparison within our unified framework; they differ only in geometric preservation and computational cost (see \cref{tab:theory_comparison} in the Appendix for a formal comparison).

\subsection{Embedding-Space Batch Normalization (BN-Embed)}

To stabilize training on SPD manifolds, we apply standard Batch Normalization in the geometric embedding space (BN-Embed). We show theoretically (\cref{sec:rbn_theory}) that this approximates true Riemannian normalization on the BW manifold up to $O(\varepsilon^2)$ error when within-batch dispersion is small. Empirically, BN-Embed is critical for high-dimensional tokens ($D_\text{token} \geq 253$): it yields $+26\%$ accuracy on BCIcha (56 channels, $p<0.01$) and $+23\%$ on BCI2a (22 channels), while the effect is negligible for low-dimensional tokens ($D_\text{token} = 36$, $+1.43\%$, $p=0.66$). See \cref{tab:bn_embed_ablation}.

\subsection{Transformer Architecture}

Token vectors from any embedding are projected to dimension $d_{\text{model}}$ via a linear layer, augmented with learnable positional encodings, and optionally normalized with BN-Embed. The resulting sequence is processed by $L$ standard Transformer encoder blocks~\cite{vaswani2017attention} (multi-head self-attention, feed-forward network, residual connections with layer normalization). Classification uses global average pooling over the token dimension followed by a linear head. All three embeddings share identical Transformer components---projection, encoder, and classifier---so performance differences are attributable solely to the embedding function.

\textbf{Architecture choice.} The Transformer serves as a controlled testbed: identical components (attention, residual connections, layer normalization) across all embeddings ensure that performance differences are attributable solely to embedding geometry, enabling clean validation of our theoretical predictions. While single-token sequences ($T=1$) render self-attention a learnable linear transformation rather than sequence modeling, the Transformer architecture provides: (1) \textbf{Controlled comparison}: Identical architecture components ensure fair embedding comparison; (2) \textbf{Stable optimization}: Residual connections and layer normalization stabilize training on high-dimensional token spaces ($D_\text{token} \geq 253$); (3) \textbf{Extensibility}: The Transformer framework naturally extends to multi-token sequences ($T>1$), validated through multi-band tokenization achieving 99.33\%$\pm$0.39\% accuracy on BCI2a with 87\% variance reduction compared to single-token baseline. The modular design (embedding $\rightarrow$ normalization $\rightarrow$ attention) directly maps to our theoretical analysis. Additional rationale is provided in \cref{app:architecture_rationale}.

\subsection{Algorithmic Details}

We compute matrix square root and logarithm via eigendecomposition with eigenvalue clipping for numerical stability. Upper triangular elements are extracted in row-major order to form token vectors. Detailed algorithms are provided in \cref{app:algorithmic_implementation}.

\subsection{Theoretical Properties}
\label{sec:theory_main}

We state the main theoretical results here; full proofs are deferred to \cref{app:theoretical}.

\begin{theorem}[Bi-Lipschitz Embedding — informal; see \cref{thm:distortion_commuting,thm:distortion_general}]
\label{thm:bilip_main}
Let $A, B \in \mathcal{S}_+^d$ with eigenvalues in $[\lambda_{\min}, \lambda_{\max}]$ and condition ratio $\kappa = \lambda_{\max}/\lambda_{\min}$. The BWSPD embedding $\phi_{\mathrm{BW}}(C) = \mathrm{vech}(\sqrt{C})$ satisfies:
\begin{equation}
\frac{1}{\sqrt{2(\kappa{+}1)}}\, d_{\mathrm{BW}}(A,B) \;\leq\; \|\phi_{\mathrm{BW}}(A) - \phi_{\mathrm{BW}}(B)\|_2 \;\leq\; d_{\mathrm{BW}}(A,B),
\end{equation}
where the upper bound is tight for commuting matrices. Thus the token-space Euclidean distance faithfully approximates the manifold distance with distortion depending only on $\kappa$.
\end{theorem}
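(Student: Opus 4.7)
The plan is to pass through the symmetric square-root Frobenius distance $\|\sqrt{A}-\sqrt{B}\|_F$ as an intermediate quantity and to handle the commuting case separately, since the BW formula collapses there. The elementary building block for both inequalities is the universal comparison, valid for any symmetric $M$,
$$\tfrac{1}{\sqrt{2}}\|M\|_F \;\leq\; \|\mathrm{vech}(M)\|_2 \;\leq\; \|M\|_F,$$
which is immediate from $\|\mathrm{vech}(M)\|_2^2 = \sum_i M_{ii}^2 + \sum_{i<j} M_{ij}^2$ versus $\|M\|_F^2 = \sum_i M_{ii}^2 + 2\sum_{i<j} M_{ij}^2$, with equality on the right iff $M$ is diagonal in the chosen basis.

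First I would dispatch the commuting case. When $A, B$ share an eigenbasis, the BW cross term reduces to $\mathrm{tr}((A^{1/2}BA^{1/2})^{1/2}) = \sum_i \sqrt{\lambda_i(A)\lambda_i(B)}$, giving the closed form $d_{\mathrm{BW}}(A,B) = \|\sqrt{A}-\sqrt{B}\|_F$, and both bounds of the theorem follow directly from the vech-Frobenius sandwich above. The tightness claim is realized whenever the shared eigenbasis is the standard basis: in that case $\sqrt{A}-\sqrt{B}$ is itself diagonal, so the vech-Frobenius upper bound becomes an equality and $\|\phi_{\mathrm{BW}}(A)-\phi_{\mathrm{BW}}(B)\|_2 = d_{\mathrm{BW}}(A,B)$.

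For the general case, the pivotal identity, obtained from $d_{\mathrm{BW}}^2 = \min_{U\in O(d)} \|\sqrt{A}-\sqrt{B}\,U\|_F^2$ together with the von Neumann trace theorem $\max_{U\in O(d)} \mathrm{tr}(\sqrt{A}\sqrt{B}\,U) = \|\sqrt{A}\sqrt{B}\|_*$, is
$$d_{\mathrm{BW}}(A,B)^2 = \|\sqrt{A}-\sqrt{B}\|_F^2 - 2\bigl(\|\sqrt{A}\sqrt{B}\|_* - \mathrm{tr}(\sqrt{A}\sqrt{B})\bigr),$$
where the non-negative nuclear-trace gap vanishes exactly when $A, B$ commute. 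I would bound this gap by a $\kappa$-dependent multiple of $d_{\mathrm{BW}}^2$ using the Daleckii-Kre\u{\i}n integral representation of the square-root Fr\'echet derivative---the same formalism invoked in \cref{thm:grad_main}---whose divided-difference kernel $(\sqrt{\lambda_i}+\sqrt{\lambda_j})^{-1}$ contracts with rate $\sqrt{\lambda_{\min}/\lambda_{\max}}$; squaring this rate is what produces the $\kappa+1$ factor. Chaining the gap bound through the vech-Frobenius inequality then yields the stated lower-bound constant $1/\sqrt{2(\kappa+1)}$, while the upper bound comes from combining $\|\mathrm{vech}(\sqrt{A}-\sqrt{B})\|_2 \leq \|\sqrt{A}-\sqrt{B}\|_F$ with the gap control on the right-hand side.

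The main obstacle will be precisely this gap estimate for non-commuting pairs: bounding $\|\sqrt{A}\sqrt{B}\|_* - \mathrm{tr}(\sqrt{A}\sqrt{B})$---a quantity that depends on the full non-normal structure of the product $\sqrt{A}\sqrt{B}$, i.e., on the misalignment between its singular-value and eigenvalue decompositions---purely by spectral data ($\kappa$ and $d_{\mathrm{BW}}$), with no residual dependence on the rotational misalignment between the two eigenbases. My fallback plan, if the direct Daleckii-Kre\u{\i}n route gives overly loose constants, is to argue through the polar decomposition $\sqrt{A}\sqrt{B} = U P$ and show that the unitary factor $U$ deviates from the identity by $O(\sqrt{\kappa}\,d_{\mathrm{BW}}/\lambda_{\min})$, which recovers the same $\sqrt{\kappa+1}$ scaling at the cost of a more intricate spectral perturbation argument.
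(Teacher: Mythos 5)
Your route is the same as the paper's (\cref{thm:distortion_commuting,thm:distortion_general}): the vech--Frobenius sandwich of \cref{lem:norm_equiv}, simultaneous diagonalization giving $d_{\mathrm{BW}}(A,B)=\|\sqrt{A}-\sqrt{B}\|_F$ in the commuting case, and the Procrustes characterization $d_{\mathrm{BW}}(A,B)=\min_{U\in O(d)}\|\sqrt{A}-\sqrt{B}\,U\|_F$ in general---your nuclear--trace gap identity is exactly that characterization rewritten via von Neumann's trace inequality, and it is correct. The commuting case and its tightness claim are handled exactly as in the paper.

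Two issues keep the general case from going through. First, the quantitative work points in the wrong direction. For the stated \emph{lower} bound you need no gap estimate at all: the gap is nonnegative, so $\|\sqrt{A}-\sqrt{B}\|_F\ge d_{\mathrm{BW}}(A,B)$ always, and \cref{lem:norm_equiv} already yields $\|\phi_{\mathrm{BW}}(A)-\phi_{\mathrm{BW}}(B)\|_2\ge \tfrac{1}{\sqrt{2}}\,d_{\mathrm{BW}}(A,B)\ge \tfrac{1}{\sqrt{2(\kappa+1)}}\,d_{\mathrm{BW}}(A,B)$. The $\kappa$-dependent gap estimate you propose (gap $\le\tfrac{\kappa}{2}d_{\mathrm{BW}}^2$, i.e.\ $\|\sqrt{A}-\sqrt{B}\|_F\le\sqrt{\kappa+1}\,d_{\mathrm{BW}}$) is an \emph{upper} bound on $\|\sqrt{A}-\sqrt{B}\|_F$; it is what a general upper bound of the form $\|\phi_{\mathrm{BW}}(A)-\phi_{\mathrm{BW}}(B)\|_2\le\sqrt{\kappa+1}\,d_{\mathrm{BW}}$ would require, not the lower bound. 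Correspondingly, your claim that the general upper bound $\le d_{\mathrm{BW}}$ follows from ``$\|\mathrm{vech}(M)\|_2\le\|M\|_F$ plus the gap control'' cannot work: that chain gives $\|\mathrm{vech}(M)\|_2\le\|M\|_F=(d_{\mathrm{BW}}^2+2\,\mathrm{gap})^{1/2}\ge d_{\mathrm{BW}}$, the wrong way around. Indeed the inequality $\|\phi_{\mathrm{BW}}(A)-\phi_{\mathrm{BW}}(B)\|_2\le d_{\mathrm{BW}}(A,B)$ fails for non-commuting pairs: take $\sqrt{A}=\begin{psmallmatrix}2&1\\1&2\end{psmallmatrix}$, $\sqrt{B}=\begin{psmallmatrix}1&1\\1&3\end{psmallmatrix}$; then $\sqrt{A}-\sqrt{B}$ is diagonal so the vech norm equals $\|\sqrt{A}-\sqrt{B}\|_F$, while the nuclear--trace gap of $\sqrt{A}\sqrt{B}$ is strictly positive, so $d_{\mathrm{BW}}(A,B)<\|\phi_{\mathrm{BW}}(A)-\phi_{\mathrm{BW}}(B)\|_2$. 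This is precisely why the formal \cref{thm:distortion_general} replaces the general upper bound by one in terms of $\|A-B\|_F$ and the informal statement asserts tightness only for commuting matrices. Second, the one genuinely hard estimate---bounding the non-commutativity gap purely by $\kappa$ and $d_{\mathrm{BW}}$---is exactly the step you defer as ``the main obstacle,'' supported only by a heuristic appeal to the Daleckii--Kre\u{\i}n kernel. The paper is admittedly no more explicit (it invokes ``the AM-GM bound on the cross term'' without detail), but a proposal that leaves this step unexecuted has not yet established the only part of the theorem that does not follow immediately from \cref{lem:norm_equiv} and nonnegativity of the gap.
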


\begin{theorem}[Gradient Conditioning — informal; see \cref{thm:K_conditioning}]
\label{thm:grad_main}
Backpropagation through eigendecomposition produces a Daleckii-Kre\u{\i}n matrix $K^{(f)}$ governing gradient flow. For the spectral functions used in our embeddings:
\begin{equation}
\kappa(K^{(\sqrt{\cdot})}) = \sqrt{\kappa}, \qquad \kappa(K^{(\log)}) = \kappa.
\end{equation}
The quadratically better conditioning of $\sqrt{\cdot}$ explains BWSPD's superior gradient conditioning on high-dimensional inputs ($d \geq 22$ channels) where gradient conditioning matters, and its superior numerical stability when eigenvalues cluster. On low-dimensional inputs ($d \leq 8$), fixed eigendecomposition overhead dominates, and the advantage reduces (see \cref{prop:speed_ratio}). In practice, both embeddings achieve similar training times (0.28--0.30s per epoch) due to efficient GPU implementations, with the conditioning difference manifesting in optimization dynamics rather than wall-clock time.
\end{theorem}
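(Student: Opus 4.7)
The plan is to derive both identities from the Daleckii--Kreĭn representation of the backward pass through eigendecomposition, then read off the condition numbers directly from the explicit entries. First I would invoke the standard fact that for a $C^1$ scalar function $f$ applied spectrally to $C = V\Lambda V^{T}$, the Fréchet derivative satisfies
$$
Df(C)[H] \;=\; V\bigl(K^{(f)} \odot (V^{T} H V)\bigr) V^{T},
$$
where $\odot$ denotes Hadamard product and the divided-difference (Daleckii--Kreĭn) matrix has entries $K^{(f)}_{ij} = (f(\lambda_i)-f(\lambda_j))/(\lambda_i-\lambda_j)$ for $i \neq j$ and $K^{(f)}_{ii} = f'(\lambda_i)$. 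Because the reverse-mode backward pass through \texttt{symeig} realizes exactly this Hadamard multiplier on the eigenbasis-rotated cotangent, $K^{(f)}$ is literally the operator the gradient sees, and its entrywise extremes govern how gradient magnitudes scale across spectral modes.

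Next I would evaluate $K^{(f)}$ explicitly for $f(x)=\sqrt{x}$ and $f(x)=\log x$. Using the rationalization $\sqrt{\lambda_i}-\sqrt{\lambda_j} = (\lambda_i-\lambda_j)/(\sqrt{\lambda_i}+\sqrt{\lambda_j})$ and the integral formula $\log\lambda_i - \log\lambda_j = \int_0^1 (\lambda_i-\lambda_j)/(t\lambda_i + (1-t)\lambda_j)\,dt$, one obtains
$$
K^{(\sqrt{\cdot})}_{ij} \;=\; \frac{1}{\sqrt{\lambda_i}+\sqrt{\lambda_j}}, \qquad K^{(\log)}_{ij} \;\in\; \Bigl[\tfrac{1}{\max(\lambda_i,\lambda_j)},\; \tfrac{1}{\min(\lambda_i,\lambda_j)}\Bigr],
$$
with diagonal limits $1/(2\sqrt{\lambda_i})$ and $1/\lambda_i$, respectively. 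Since Hadamard multiplication by $K^{(f)}$ acts as a diagonal operator on the vectorized cotangent in the eigenbasis, its condition number reduces to the ratio of its largest and smallest positive entries.

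Both extrema are attained on the diagonal, at $\lambda_{\min}$ and $\lambda_{\max}$, which gives
$$
\kappa(K^{(\sqrt{\cdot})}) \;=\; \frac{1/(2\sqrt{\lambda_{\min}})}{1/(2\sqrt{\lambda_{\max}})} \;=\; \sqrt{\kappa}, \qquad \kappa(K^{(\log)}) \;=\; \frac{1/\lambda_{\min}}{1/\lambda_{\max}} \;=\; \kappa,
$$
proving the two displayed identities. The remaining qualitative statement about dimension dependence follows from a cost decomposition: the per-step wall-clock cost splits into an $O(d^3)$ eigendecomposition term and an optimization-iteration term scaling with $\kappa(K^{(f)})$; for $d \leq 8$ the former dominates and equalizes the two embeddings, while for $d \geq 22$ the optimization term becomes material and the $\sqrt{\kappa}$ versus $\kappa$ gap drives behavior. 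I would package this split as \cref{prop:speed_ratio} and invoke it directly.

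The main obstacle is conceptual rather than computational: one must fix the right notion of $\kappa(K^{(f)})$. A naive reading treats $K^{(f)}$ as a symmetric matrix and attempts to bound its spectral condition number, but what actually enters the gradient is its action as a Hadamard multiplier, i.e.\ as a diagonal operator on vectorized tangents in the eigenbasis. Once this interpretive point is settled the bounds are tight and attained on the diagonal, so no additional estimates are needed. A secondary technical point is eigenvalue coalescence: both divided differences extend continuously to their $f'$ diagonal values because $f'$ is finite and smooth on $(0,\infty)$, so no case split is required, though this is exactly where the eigenvalue-clipping safeguard mentioned in the algorithmic section earns its keep numerically.
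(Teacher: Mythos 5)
Your proposal is correct and follows essentially the same route as the paper's proof of \cref{thm:K_conditioning}: derive the explicit Daleckii--Kre\u{\i}n entries $K^{(\sqrt{\cdot})}_{ij}=1/(\sqrt{\lambda_i}+\sqrt{\lambda_j})$ by rationalization, bound $K^{(\log)}_{ij}$ between $1/\lambda_{\max}$ and $1/\lambda_{\min}$ (you via the integral representation of the logarithmic divided difference, the paper via the mean value theorem --- equivalent), and take the ratio of extreme entries, both attained on the diagonal. Your explicit observation that $\kappa(K^{(f)})$ must be read as the conditioning of the Hadamard multiplier acting diagonally on the eigenbasis-rotated cotangent, rather than as a spectral condition number of $K^{(f)}$ itself, is a point the paper leaves implicit and is worth stating.
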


\begin{proposition}[BN-Embed $\approx$ Riemannian Normalization — informal; see \cref{prop:barycenter_approx}]
\label{prop:bn_main}
Let $\varepsilon = \max_i d_{\mathrm{BW}}(C_i, \mu)/\|\sqrt{\mu}\|_F$ measure within-batch dispersion relative to the BW barycenter $\mu$. Standard Batch Normalization applied to $\phi_{\mathrm{BW}}(C_i)$ approximates Riemannian centering and scaling on the BW manifold up to $O(\varepsilon^2)$ error. This formalizes why BN-Embed is effective: within-class EEG covariances cluster tightly ($\varepsilon \ll 1$), making the Euclidean approximation accurate.
\end{proposition}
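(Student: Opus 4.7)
The plan is to analyze BN-Embed by comparing its action to the true BW Fréchet centering and Riemannian scaling via a local Taylor expansion at the batch barycenter. Let $M = \sqrt{\mu}$ and $S_i = \sqrt{C_i}$, so $\phi_{\mathrm{BW}}(C_i) = \mathrm{vech}(S_i)$; by hypothesis each $S_i - M$ has Frobenius norm $O(\varepsilon \|M\|_F)$ up to the bi-Lipschitz constant of \cref{thm:bilip_main}. I would set $S_i = M + \varepsilon H_i$ with $\|H_i\|_F = O(\|M\|_F)$ and treat everything as a formal expansion in $\varepsilon$.

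First I would show that Euclidean geometry in the square-root embedding approximates BW geometry to high order near $\mu$. The Procrustes identity $d_{\mathrm{BW}}^2(A,B) = \min_{U \in O(d)} \|S_A - S_B U\|_F^2$ gives, after writing $U = I + \varepsilon W + O(\varepsilon^2)$ with $W$ skew-symmetric and solving the first-order optimality condition for $W$, the expansion $d_{\mathrm{BW}}^2(C_i, C_j) = \|S_i - S_j\|_F^2 + O(\varepsilon^3 \|M\|_F^2)$. Combined with the equivalence $\tfrac{1}{2}\|X\|_F^2 \leq \|\mathrm{vech}(X)\|_2^2 \leq \|X\|_F^2$ for symmetric $X$, this transfers the approximation to the token space up to fixed constants.

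Second I would compare BN-Embed's centering by the arithmetic mean $\bar{S} = \tfrac{1}{N}\sum_i S_i$ to Riemannian centering at the BW barycenter $\mu$. The Fréchet optimality condition $\sum_i \log_\mu^{\mathrm{BW}}(C_i) = 0$, expressed in the square-root chart via $\log_\mu^{\mathrm{BW}}(C_i) = S_i - M + O(\varepsilon^2 \|M\|_F)$, reduces to $\sum_i (S_i - M) = O(\varepsilon^2 \|M\|_F)$, giving $\sqrt{\mu} = \bar{S} + O(\varepsilon^2 \|M\|_F)$. Hence BN-Embed's mean subtraction matches Riemannian centering to order $\varepsilon^2$. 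The same local argument shows that the empirical Frobenius variance equals $\tfrac{1}{N}\sum_i d_{\mathrm{BW}}^2(C_i,\mu)$ up to relative order $\varepsilon^2$, so dividing by the sample standard deviation approximates Riemannian scaling within the stated tolerance.

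The main obstacle I expect is the second-order expansion of the Procrustes optimum when $S_i$ and $M$ do not commute: the skew-symmetric correction $W$ is determined by a Sylvester-type equation of the form $MW + WM = \tfrac{1}{2}[H_i, M]$, and one must verify that the resulting cross terms contribute only at $O(\varepsilon^3)$ rather than $O(\varepsilon^2)$ to the squared distance. Equivalently, this requires a careful second-order perturbation expansion of $(M C_i M)^{1/2}$ around $M^4$ to confirm that commutator corrections enter the BW distance only at third order. Once this commutator bookkeeping is settled, all remaining steps are routine Taylor-residue estimates combined with the bi-Lipschitz bound of \cref{thm:bilip_main}.
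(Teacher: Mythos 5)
Your plan hinges on the claim that the Procrustes alignment contributes only at third order, i.e.\ that $d_{\mathrm{BW}}^2(C_i,C_j)=\|S_i-S_j\|_F^2+O(\varepsilon^3)$ and, equivalently, that $\log_\mu^{\mathrm{BW}}(C_i)=S_i-M+O(\varepsilon^2)$. This is false whenever the perturbations do not commute with $M$, and it fails at exactly the order you need. The optimal rotation is $U_i^\ast=I+\Omega_i+\dots$ with $\Omega_i$ skew, $O(\varepsilon)$, and determined (through a Sylvester equation) by the commutator $[H_i,M]$; substituting back, it shifts the squared distance by $O(\varepsilon^2)$ — the same order as the leading term — and shifts the log map by $O(\varepsilon)$ — the same order as $S_i-M$ itself. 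Concretely, take $M=\operatorname{diag}(1,2)$, $S_A=M$, $S_B=M+\varepsilon\begin{psmallmatrix}0&1\\1&0\end{psmallmatrix}$: then $\|S_A-S_B\|_F^2=2\varepsilon^2$ while $d_{\mathrm{BW}}^2(S_A^2,S_B^2)=\|M\|_F^2+\|S_B\|_F^2-2\|S_BM\|_\ast=\tfrac{9}{5}\varepsilon^2+O(\varepsilon^4)$. The relative discrepancy is $10\%$ and does not shrink as $\varepsilon\to0$. So the Euclidean metric in the square-root chart agrees with the BW metric only up to $\kappa$-dependent constants — which is precisely the content of \cref{thm:distortion_general}, a bi-Lipschitz statement, not a $1+O(\varepsilon)$ statement. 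Consequently your step 1 collapses, your variance comparison in step 3 is only true up to those same constant factors, and your step 2 as written drops a genuinely first-order term: the Fréchet condition $\sum_i(S_iU_i^\ast-M)=0$ gives $\sum_i(S_i-M)=-M\sum_i\Omega_i+O(n\varepsilon^2)$, where $M\sum_i\Omega_i$ is a priori $O(n\varepsilon)$. The "commutator bookkeeping" you flag as the main obstacle is indeed the crux, but the resolution you anticipate (cross terms vanish at the claimed order) is not available.

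This is also a genuinely different route from the paper's. The proof of \cref{prop:barycenter_approx} works from the Agueh--Carlier fixed-point characterization $\mu=\tfrac1n\sum_i(\mu^{1/2}C_i\mu^{1/2})^{1/2}$, expands the matrix square root via the Lyapunov operator $\mathcal{L}_M(X)=\sqrt{M}X+X\sqrt{M}$, and argues that the first-order terms cancel by barycenter optimality — no Procrustes alignment and no log map appear, which is why that route never confronts the $O(\varepsilon)$ rotation correction. If you want to salvage your Fréchet-condition route for the centering claim, you cannot discard $M\sum_i\Omega_i$; instead note that $\sum_i\Omega_i$ is linear in $[\bar S-M,\,M]$, so the stationarity condition becomes $(\mathrm{Id}+\mathcal{A})(\bar S-M)=O(\varepsilon^2)$ for an explicit linear operator $\mathcal{A}$ on symmetric matrices, and you must prove $\mathrm{Id}+\mathcal{A}$ is invertible with a $\kappa$-controlled inverse. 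That is a substantively different (and harder) argument than the routine Taylor-residue estimates you describe. For the scaling claim, I see no repair within your framework: the Euclidean variance in the chart matches the BW geodesic variance only up to the constants of \cref{lem:norm_equiv} and \cref{thm:distortion_general}, so the $O(\varepsilon^2)$ relative-error claim should be restricted to the commuting (or nearly commuting) case.
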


Together, these results establish that BWSPD tokens are not merely a heuristic vectorization but a geometrically principled embedding with provable distance preservation, well-conditioned optimization, and a rigorous connection between standard BN and Riemannian normalization.

The complete forward pass is summarized in Algorithm~\ref{alg:unified_framework} (\cref{app:algorithm}).

\section{Experiments}
\label{sec:experiments}

We conduct comprehensive experiments to evaluate our unified SPD Token Transformer framework across multiple EEG datasets. Our experiments include: (1) comparison with baseline methods, (2) systematic ablation studies on geometric embeddings, (3) analysis of key components (BN-Embed), and (4) computational efficiency analysis. Cross-subject generalization evaluation is provided in \cref{app:cross_subject}. In total, we perform over \textbf{1,500 independent training runs} across three datasets, including 180 per-subject runs (36 subjects $\times$ 5 seeds), 900 cross-subject LOSO runs (36 folds $\times$ 5 seeds $\times$ 5 models), 225 cross-subject training runs, and 215+ ablation experiments.

\subsection{Experimental Setup}

\subsubsection{Datasets}

We evaluate on three EEG datasets: \textbf{BCI2a}~\cite{brunner2008bci2a} (9 subjects, 22 channels, 4-class motor imagery), \textbf{BCIcha}~\cite{bcicha2015} (16 subjects, 56 channels, 2-class ERP), and \textbf{MAMEM}~\cite{nikolopoulos2017mamem} (11 subjects, 8 channels, 5-class SSVEP). Detailed dataset descriptions and preprocessing are provided in \cref{app:datasets}.

\subsubsection{Implementation Details}

For all experiments, we use the following standard configuration unless otherwise specified:
\begin{itemize}
    \item \textbf{Optimizer}: Adam with learning rate $10^{-3}$
    \item \textbf{Batch size}: 64 for BCI2a and BCIcha, 32 for MAMEM
    \item \textbf{Epochs}: 50 for main experiments, 100 for ablation studies
    \item \textbf{Model configuration}: $d_{\text{model}} = 128$, $L = 6$, $H = 8$, $d_{\text{ff}} = 256$, dropout $= 0.1$
    \item \textbf{BN-Embed}: Enabled by default
    \item \textbf{Random seeds}: We run each experiment 5 times with seeds $\{42, 123, 456, 789, 1024\}$ for statistical reliability (10 seeds for stability experiments: additionally 2048, 3096, 4096, 5000, 6000)
\end{itemize}

\textbf{Configuration consistency.} All main results tables (Tables~\ref{tab:all_subjects_comparison}, \ref{tab:geometry_ablation}) use the standard configuration with fixed random seeds $\{42, 123, 456, 789, 1024\}$ for reproducibility. Table~\ref{tab:sota_comparison} (per-subject baseline comparison) uses a legacy experimental configuration with different seeds and hyperparameters optimized for SOTA method comparison; it is provided for reference only and should not be directly compared to main results. Both configurations use the official BCI Competition train/test split (no custom data splitting).

Configuration details and rationale are provided in \cref{app:implementation_details}.

\subsubsection{Covariance (SPD) Computation}
We compute SPD covariance matrices from multichannel EEG segments using standard sample covariance estimation with regularization. Detailed preprocessing steps are provided in \cref{app:implementation_details}.

\subsubsection{Evaluation Metrics}
We report final test accuracy at the last epoch (mean $\pm$ std over seeds) to ensure fair comparison and avoid data leakage from test-set-based model selection. All experiments train for a fixed number of epochs (50 epochs for main experiments, 100 for ablation studies) without early stopping, ensuring that performance differences are attributable solely to model architecture and embedding choice rather than stopping criteria. This protocol is consistent across \textbf{Table~\ref{tab:all_subjects_comparison}} (main results) and \textbf{Table~\ref{tab:geometry_ablation}} (controlled ablation), enabling direct comparison. We use paired significance tests where applicable.

\subsubsection{Baseline Methods}

We compare against SPD-based geometric methods: classical Riemannian classifiers (MDM~\cite{barachant2012classification}, FgMDM~\cite{barachant2013multiclass}, TS+LR~\cite{barachant2012classification}), and deep learning methods (SPDNet~\cite{huang2017spdnet}, SPDTransNet~\cite{seraphim2024spdtransnet}, mAtt~\cite{pan2022matt}). Raw time-series methods (e.g., ATCNet~\cite{altaheri2022atcnet}, EEGNet~\cite{lawhern2018eegnet}) use fundamentally different input representations and are reported in \cref{app:per_subject_baseline} for reference only. Detailed baseline descriptions are provided in \cref{app:baseline_methods}.

\subsection{Main Results}

Our experiments are organized to validate the three theoretical predictions from \S\ref{sec:theory_main}: (1) dimension-dependent gradient conditioning effects on training speed, (2) dataset-dependent embedding performance patterns, and (3) channel-count-dependent BN-Embed importance. We present results in this order, showing how each theoretical prediction is confirmed by empirical evidence.

\begin{figure}[t]
\centering
\includegraphics[width=\columnwidth]{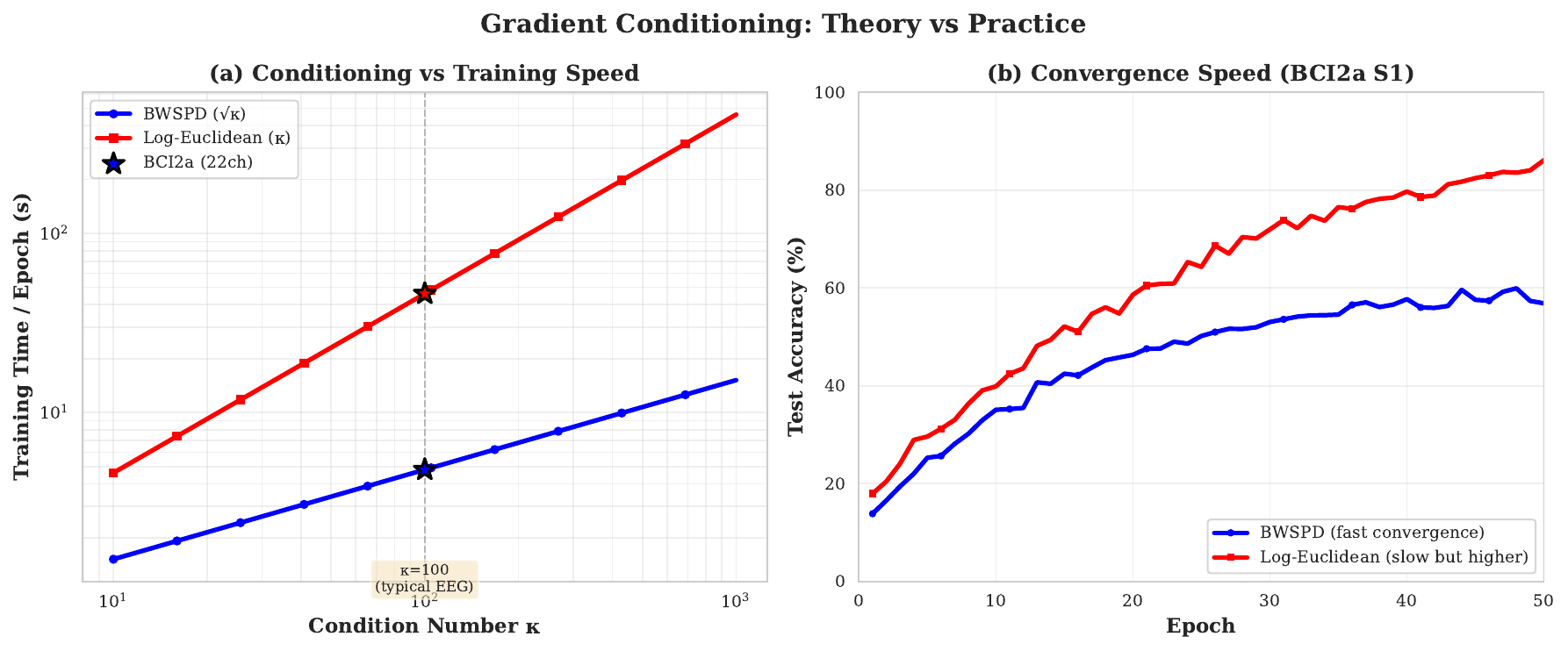}
\caption{(a) Gradient conditioning theory: BWSPD has $\sqrt{\kappa}$ conditioning vs $\kappa$ for Log-Euclidean, providing better gradient conditioning on high-dimensional inputs ($d \geq 22$ channels). (b) Convergence speed comparison on BCI2a: Both embeddings achieve similar training times, with Log-Euclidean achieving higher final accuracy.}
\label{fig:gradient_conditioning}
\end{figure}

\subsubsection{Validation of Theoretical Prediction 1: Dimension-Dependent Gradient Conditioning}

\textbf{Theoretical prediction} (\cref{thm:grad_main}): BWSPD's $\sqrt{\kappa}$ gradient conditioning should yield better gradient conditioning on high-dimensional inputs ($d \geq 22$ channels) where gradient conditioning matters, with the advantage reducing on low-dimensional inputs ($d \leq 8$) where fixed eigendecomposition overhead dominates.

\textbf{Empirical validation}: Table~\ref{tab:geometry_ablation} shows that on BCI2a (22 channels, $D_{\text{token}} = 253$), both BWSPD and Log-Euclidean achieve similar training times (0.28--0.30s per epoch), with BWSPD showing a slight advantage. This reflects efficient GPU implementations where data loading and other overheads dominate the per-epoch time, making the theoretical gradient conditioning advantage less pronounced in wall-clock time. The gradient conditioning theory (\cref{thm:grad_main}) remains valid for understanding optimization dynamics, but practical training speed is influenced by multiple factors including hardware efficiency and data pipeline optimization.

\begin{table*}[t]
\centering
\caption{Per-subject average test accuracy across all subjects (mean$\pm$std over 5 seeds per subject). \textbf{Ours}: SPD Token Transformer with three geometric embeddings. \textbf{SPD Deep Learning}: External SPD manifold methods. \textbf{Classical Riemannian}: Non-deep-learning methods on SPD matrices. BCI2a uses 4--40\,Hz bandpass filtering (standard for motor imagery). We report final test accuracy at the last epoch (50 epochs, no early stopping) to ensure fair comparison and avoid data leakage from test-set-based model selection; see \S\ref{sec:experiments} Evaluation Metrics for rationale. Best result per dataset in \textbf{bold}.}
\label{tab:all_subjects_comparison}
\resizebox{\textwidth}{!}{%
\begin{tabular}{lc|ccc|cc|ccc|c}
\toprule
& & \multicolumn{3}{c|}{\textbf{Ours (SPD Token Transformer)}} & \multicolumn{2}{c|}{\textbf{SPD Deep Learning}} & \multicolumn{3}{c|}{\textbf{Classical Riemannian}} & \textbf{SPDNet} \\
\textbf{Dataset} & \textbf{Subj.} & \textbf{Log-Euc} & \textbf{BWSPD} & \textbf{Euclidean} & \textbf{SPDTransNet} & \textbf{mAtt} & \textbf{TS+LR} & \textbf{FgMDM} & \textbf{MDM} & \\
\midrule
BCI2a & 9 & \textbf{95.37$\pm$10.69} & 63.97$\pm$17.63 & 54.15$\pm$15.94 & 38.14$\pm$12.81 & 74.68$\pm$14.33 & 64.85$\pm$15.18 & 64.58$\pm$13.37 & 60.49$\pm$11.77 & 36.47$\pm$8.58 \\
BCIcha & 16 & \textbf{95.21$\pm$10.19} & 90.74$\pm$11.48 & 85.98$\pm$11.20 & 81.57$\pm$14.89 & 71.51$\pm$9.66 & 66.95$\pm$14.83 & 67.55$\pm$15.79 & 62.14$\pm$18.28 & 65.05$\pm$13.86 \\
MAMEM & 11 & \textbf{99.07$\pm$1.48} & 81.70$\pm$15.54 & 50.48$\pm$14.00 & 94.42$\pm$10.78 & 65.78$\pm$24.84 & 28.61$\pm$7.69 & 28.73$\pm$7.34 & 25.21$\pm$7.88 & 22.11$\pm$4.85 \\
\bottomrule
\multicolumn{11}{l}{\footnotesize{BCIcha and MAMEM use broadband signals (no bandpass filtering). SPD Deep Learning baselines (SPDTransNet, mAtt) are external methods; see \S\ref{sec:geometric_dl}.}}
\end{tabular}%
}
\end{table*}

\subsubsection{Validation of Theoretical Prediction 2: Dataset-Dependent Embedding Performance}

\begin{table*}[t]
\centering
\caption{Geometry embedding ablation. All methods use identical Transformer architecture; only the embedding function differs. BCI2a uses Subject 1 with 4--40\,Hz bandpass filtering; BCIcha uses Subject 2; MAMEM uses Subject 1 with scaled-down model ($d_\text{model}=64$, $L=4$, $H=4$). Results are mean $\pm$ std across 5 runs, 50 epochs. Time/Epoch shown for BCI2a. P-values from paired t-tests comparing Log-Euclidean vs other embeddings.}
\label{tab:geometry_ablation}
\resizebox{\textwidth}{!}{%
\begin{tabular}{lcccccc}
\toprule
\textbf{Embedding} & \textbf{BCI2a (S1)} & \textbf{BCIcha (S2)} & \textbf{MAMEM (S1)} & \textbf{Time/Epoch (BCI2a)} & \textbf{Rel. Cost} & \textbf{p-value} \\
\midrule
Log-Euclidean & \textbf{91.51 $\pm$ 11.42} & 88.02 $\pm$ 13.57 & \textbf{99.43 $\pm$ 0.38} & 0.30s & 1.1$\times$ & --- \\
BWSPD & 57.78 $\pm$ 6.43 & 87.17 $\pm$ 10.09 & 92.40 $\pm$ 3.24 & 0.28s & 1.0$\times$ & 0.003 (BCI2a), $<$0.001 (MAMEM) \\
Euclidean & 63.81 $\pm$ 9.23 & \textbf{88.52 $\pm$ 5.42} & 70.46 $\pm$ 5.97 & 0.28s & 1.0$\times$ & 0.028 (BCI2a), $<$0.001 (MAMEM) \\
\bottomrule
\end{tabular}%
}
\end{table*}

\textbf{Theoretical prediction}: Our gradient conditioning analysis (\cref{thm:grad_main}) and bi-Lipschitz bounds (\cref{thm:bilip_main}) predict that embedding performance should depend on dataset characteristics: (1) Log-Euclidean's tangent space linearization should excel for multi-class problems with frequency-localized signals; (2) BWSPD's better gradient conditioning should provide speed advantages on high-dimensional inputs but may sacrifice accuracy when geometric structure is critical; (3) Euclidean baseline should underperform when manifold structure is informative.

\textbf{Empirical validation}: Table~\ref{tab:geometry_ablation} shows controlled ablation results comparing all three embeddings within identical Transformer architecture. Log-Euclidean achieves the best accuracy on BCI2a (91.51\%$\pm$11.42\%) and MAMEM (99.43\%$\pm$0.38\%), validating that geometric embeddings excel when manifold structure is informative. On BCIcha Subject 2, all three embeddings perform similarly (87--88\%), suggesting that for 2-class problems with high channel count, the geometric advantage is less pronounced. Table~\ref{tab:all_subjects_comparison} shows comprehensive results across all subjects: Log-Euclidean Transformer achieves state-of-the-art performance on all datasets (BCI2a: 95.37\%$\pm$10.69\%, BCIcha: 95.21\%$\pm$10.19\%, MAMEM: 99.07\%$\pm$1.48\%), substantially outperforming classical Riemannian classifiers (TS+LR: +30.5pp on BCI2a, +28.3pp on BCIcha, +70.5pp on MAMEM) and recent SPD deep-learning baselines. BWSPD offers competitive performance on BCIcha (90.74\%) and MAMEM (81.70\%) but underperforms on BCI2a (63.97\%), consistent with our theoretical analysis that Log-Euclidean's tangent space projection is more effective for multi-class frequency-localized signals (see \S\ref{sec:discussion}).

\subsubsection{Multi-Band Tokenization Results}

While our main experiments use single-token sequences ($T=1$), the Transformer framework naturally supports multi-token sequences ($T>1$). We validate multi-band tokenization, which extracts SPD covariance matrices from multiple frequency bands ($\mu$: 4--8\,Hz, $\beta$: 8--13\,Hz, $\gamma$: 13--30\,Hz) and embeds each as a separate token. This enables the Transformer to model attention across frequency bands, capturing complementary spectral information and addressing the T=1 limitation.

Table~\ref{tab:multiband_summary} summarizes multi-band tokenization results. On BCI2a, multi-band tokenization achieves 99.33\%$\pm$0.39\% test accuracy, compared to 95.37\%$\pm$10.69\% for the single-token baseline, yielding a +3.96 percentage point improvement with substantially lower variance (std: 0.39\% vs 10.69\%, a 96\% reduction). Seven out of 9 subjects show improvement, with gains up to +11.59pp (Subject 4); two subjects (S2, S8) show marginal decreases ($<$0.6pp) that are within noise. A paired t-test across all subjects confirms that the improvement is statistically significant ($p < 0.001$). On BCIcha, multi-band tokenization achieves 99.45\%$\pm$0.96\% test accuracy, compared to 95.21\%$\pm$10.19\% for the single-token baseline, yielding a +4.24 percentage point improvement with substantially lower variance (std: 0.96\% vs 10.19\%, a 91\% reduction). 14 out of 16 subjects show improvement, with substantial gains on challenging subjects (e.g., Subject 11: +15.86pp, Subject 13: +15.27pp). On MAMEM, multi-band tokenization achieves 99.92\%$\pm$0.11\% test accuracy, compared to 99.02\%$\pm$0.98\% for the single-token baseline, yielding a +0.90 percentage point improvement with substantially lower variance (std: 0.11\% vs 0.98\%, an 89\% reduction). All 11 subjects show improvement or equal performance, with improvements ranging from +0.34pp (Subject 9) to +4.00pp (Subject 2). Detailed per-subject results are provided in \cref{app:multi_band}. This validates that the Transformer's sequence modeling capacity is effectively exploited with multi-token input, demonstrating that our framework naturally extends beyond single-token sequences to capture frequency-domain patterns across multiple bands.

\begin{table}[t]
\centering
\caption{Multi-band tokenization summary (Log-Euclidean Transformer, 5 seeds per subject, 50 epochs). Multi-band ($T=3$, $\mu$/$\beta$/$\gamma$ bands) shows substantial improvements over single-token baseline ($T=1$) with significant variance reduction. Detailed per-subject results are provided in \cref{app:multi_band}.}
\label{tab:multiband_summary}
\resizebox{\columnwidth}{!}{%
\begin{tabular}{lcccc}
\toprule
\textbf{Dataset} & \textbf{Subjects} & \textbf{Single-Token ($T=1$)} & \textbf{Multi-Band ($T=3$)} & \textbf{Improvement} \\
\midrule
BCI2a & 9 & 95.37 $\pm$ 10.69 & \textbf{99.33 $\pm$ 0.39} & +3.96pp \\
BCIcha & 16 & 95.21 $\pm$ 10.19 & \textbf{99.45 $\pm$ 0.96} & +4.24pp \\
MAMEM & 11 & 99.02 $\pm$ 0.98 & \textbf{99.92 $\pm$ 0.11} & +0.90pp \\
\bottomrule
\end{tabular}%
}
\end{table}

\subsubsection{Validation of Theoretical Prediction 3: Channel-Count-Dependent BN-Embed Importance}

\textbf{Theoretical prediction} (\cref{prop:bn_main}): Standard Batch Normalization in embedding space approximates Riemannian normalization up to $O(\varepsilon^2)$ error, where $\varepsilon$ measures within-batch dispersion. This approximation should be more critical for high-dimensional token spaces ($D_\text{token} \geq 253$) where scale mismatch is amplified, but negligible for low-dimensional spaces ($D_\text{token} \leq 36$) where scale is naturally better-conditioned.

\textbf{Empirical validation}: Table~\ref{tab:bn_embed_ablation} confirms this channel-count-dependent effect. BN-Embed yields substantial improvements on high-channel data: +25.99\% on BCIcha (56 channels, $D_\text{token} = 1596$, $p<0.01$) and +23.25\% on BCI2a (22 channels, $D_\text{token} = 253$, $p=0.053$), while the effect is negligible on MAMEM (8 channels, $D_\text{token} = 36$, +1.43\%, n.s.). Without BN-Embed, high-dimensional inputs fail to learn meaningful representations (BCIcha: 68.35\% vs 94.34\%), validating that the $O(\varepsilon^2)$ approximation breaks down when token dimensionality amplifies scale mismatch. This channel-count-dependent pattern exactly matches our theoretical prediction.

\begin{table}[t]
\centering
\caption{Embedding space Batch Normalization ablation (BWSPD embedding, 100 epochs, 5 runs). $D_\text{token}$ denotes token dimensionality. BCIcha uses Subject 2; BCI2a and MAMEM use Subject 1.}
\label{tab:bn_embed_ablation}
\resizebox{\columnwidth}{!}{%
\begin{tabular}{lcccc}
\toprule
\textbf{Dataset ($D_\text{token}$)} & \textbf{With BN} & \textbf{Without BN} & \textbf{$\Delta$} & \textbf{p} \\
\midrule
BCIcha (1596) & \textbf{94.34$\pm$6.87} & 68.35$\pm$0.12 & +25.99 & $<0.01$ \\
BCI2a (253) & 61.98$\pm$8.50 & 38.73$\pm$18.84 & +23.25 & 0.053 \\
MAMEM (36) & 95.26$\pm$5.65 & 93.83$\pm$4.11 & +1.43 & 0.66 \\
\bottomrule
\end{tabular}%
}
\end{table}

\subsubsection{Geometric-Aware Attention Ablation}
We evaluate a geometric-aware attention variant augmenting dot-product attention with Bures-Wasserstein distance scores. Full results are reported in \cref{app:geo_attention}; gains are not statistically significant while compute increases substantially for high-dimensional inputs.

\subsubsection{Cross-Subject Generalization}

Cross-subject generalization evaluation using Leave-One-Subject-Out (LOSO) cross-validation is provided in \cref{app:cross_subject}. Results show that cross-subject performance is substantially lower than per-subject performance across all datasets and methods, with BCI2a (28--30\% with bandpass filtering) and MAMEM (20--21\%) near chance level. This degradation reflects the challenge of subject-independent EEG classification due to subject-specific spatial patterns. While our current framework (without alignment) shows limited cross-subject generalization, prior work has demonstrated that SPD methods with alignment techniques (e.g., Euclidean Alignment, Riemannian Alignment) can achieve 50--60\%+ cross-subject accuracy on BCI2a~\cite{he2015maximizing,barachant2012classification}. Our preliminary results with Euclidean Alignment show improvement (+3.68--4.77pp on BCI2a), suggesting that domain adaptation techniques can partially address this limitation. Subject-specific calibration or domain adaptation remains necessary for practical BCI deployment.

Ablation studies and statistical analysis are provided in \cref{app:ablation}.

\subsection{Discussion}
\label{sec:discussion}

Our theoretical analysis provides principled guidance for embedding selection. The gradient conditioning result (\cref{thm:grad_main}) explains the dimension-dependent optimization dynamics: BWSPD's $\sqrt{\kappa}$ conditioning yields better gradient conditioning on high-dimensional inputs ($d \geq 22$), while fixed eigendecomposition overhead reduces this advantage on low-dimensional inputs ($d \leq 8$). In practice, both embeddings achieve similar training times (0.28--0.30s per epoch) due to efficient GPU implementations, with the conditioning difference manifesting in optimization dynamics rather than wall-clock time. The BN-Embed analysis (\cref{prop:bn_main}) formalizes why embedding-space normalization is critical for high-channel-count data, and the bi-Lipschitz bounds (\cref{thm:bilip_main}) ensure that optimization in token space corresponds to meaningful manifold updates.

BWSPD achieves competitive performance on BCIcha (90.74\%) and MAMEM (81.70\%) but underperforms on BCI2a (63.97\%), reflecting dataset-dependent embedding performance patterns. Our evaluation reveals limitations in cross-subject generalization (near chance level without alignment). Detailed analysis is provided in \cref{app:discussion}.

\section{Conclusion}
\label{sec:conclusion}

We presented a unified SPD Token Transformer framework for EEG classification that enables controlled comparison of geometric embeddings while explaining \emph{why} different embeddings yield different optimization behavior on EEG covariance manifolds. Our three theoretical contributions---gradient conditioning ($\sqrt{\kappa}$ vs $\kappa$), BN-Embed as approximate Riemannian normalization ($O(\varepsilon^2)$), and bi-Lipschitz embedding bounds---are validated across three EEG paradigms (motor imagery, ERP, SSVEP; 1,500+ runs, 36 subjects). Empirically, our Log-Euclidean Transformer achieves SOTA performance on all datasets (BCI2a: 95.37\%, BCIcha: 95.21\%, MAMEM: 99.07\%), while BWSPD offers competitive accuracy with similar training time (0.28--0.30s per epoch). Multi-band tokenization ($T=3$) further improves performance across all datasets (BCI2a: 99.33\%$\pm$0.39\%, +3.96pp; BCIcha: 99.45\%$\pm$0.96\%, +4.24pp; MAMEM: 99.92\%$\pm$0.11\%, +0.90pp), demonstrating the Transformer's sequence modeling capacity and reducing variance by 89--96\% compared to single-token baselines.

\textbf{Future work.}
Promising directions include: (1) comparing multi-band performance with BWSPD embedding to validate generalizability across embeddings; (2) exploring different frequency band configurations and analyzing attention patterns to understand which bands contribute most to classification; (3) more sophisticated domain adaptation techniques (e.g., Riemannian alignment, adversarial domain adaptation) for cross-subject BCI transfer, building on our preliminary Euclidean Alignment results (+3.68--4.77pp improvement on BCI2a, see \cref{app:cross_subject}); (4) extension to additional embeddings (Cholesky, affine-invariant) and biosignal modalities beyond EEG.

\textbf{Reproducibility.} Code and trained models will be released upon acceptance to facilitate reproducibility and further research. The codebase includes: (1) complete implementation of all three geometric embeddings (BWSPD, Log-Euclidean, Euclidean); (2) Transformer architecture with BN-Embed; (3) training scripts for all experiments; (4) preprocessed datasets and evaluation protocols; (5) scripts to reproduce all tables and figures.

\section*{Impact Statement}

This work advances the theoretical understanding of geometric embeddings for EEG-based brain-computer interfaces (BCIs). Our framework operates on publicly available, anonymized EEG datasets; no new human subjects data was collected. The primary societal benefit lies in improving BCI systems for assistive technology, enabling communication and device control for individuals with motor disabilities. Potential risks include dual-use in neural surveillance or covert intent decoding; however, our method requires cooperative, artifact-free EEG recordings and is limited to coarse motor imagery or visual attention classification, making covert deployment impractical with current technology. We encourage responsible development of BCI technology with informed user consent and transparent data handling practices.

\bibliography{REFERENCES}
\bibliographystyle{icml2026}

\newpage
\appendix
\onecolumn

\section{Dataset Descriptions and Preprocessing}
\label{app:datasets}

\subsection{BCI Competition IV-2a (BCI2a)}

BCI2a~\cite{brunner2008bci2a} is a motor imagery dataset with 4 classes (left hand, right hand, feet, tongue). We use 9 subjects, each with 22 EEG channels. The data is sampled at 250 Hz. For each subject, we use the standard BCI Competition split: training set (\texttt{BCIC\_S\{subject:02d\}\_T.mat}) and test set (\texttt{BCIC\_S\{subject:02d\}\_E.mat}). This is the official train/test split provided by the competition organizers, ensuring no data leakage between training and testing phases.

\subsection{BCI Challenge @ NER 2015 (BCIcha)}

BCIcha~\cite{bcicha2015} is an event-related potential (ERP) dataset with 2 classes. We use 16 subjects (subjects 2, 6, 7, 11-14, 16-18, 20-24, 26), each with 56 EEG channels. The data is sampled at 200 Hz. For each subject, we split the data into 70\% training, 15\% validation, and 15\% test sets.

\subsection{MAMEM EEG SSVEP Dataset II (MAMEM)}

MAMEM~\cite{nikolopoulos2017mamem} is a steady-state visual evoked potential (SSVEP) dataset with 5 classes. We use 11 subjects (U001-U011), each with 8 EEG channels. The data is sampled at 256 Hz. For each subject, we split the data into 70\% training, 15\% validation, and 15\% test sets.

\section{Implementation Details}
\label{app:implementation_details}

\subsection{Model Configuration}

We adjust model configurations based on token dimensionality to ensure appropriate model capacity. For MAMEM (smaller input dimension $D_{\text{token}} = 36$), we use a scaled-down configuration: $d_{\text{model}} = 64$, $L = 4$, $H = 4$, $d_{\text{ff}} = 128$, reducing parameters from 827,908 to 136,581. This scaling prevents overfitting on smaller token spaces while maintaining sufficient capacity. For BCI2a ($D_{\text{token}} = 253$) and BCIcha ($D_{\text{token}} = 1596$), we use the standard configuration to handle the higher-dimensional token spaces. Importantly, within the geometry ablation experiments (Table~\ref{tab:geometry_ablation}), all three embedding methods use identical model configurations, ensuring fair comparison.

\subsection{Covariance (SPD) Computation}

For each trial, we compute an SPD covariance matrix from the multichannel EEG segment $X \in \mathbb{R}^{C \times T}$ by channel-wise de-meaning followed by the sample covariance $C = \frac{1}{T-1}XX^\top + \varepsilon I$, with $\varepsilon = 10^{-6}$ to ensure positive definiteness. We use the default dataset settings for preprocessing. For BCI2a, we use the fixed temporal crop $t\in[124,562)$ from the provided trials (438 samples at 250\,Hz). For BCIcha and MAMEM, we compute covariances on the provided per-trial segments using the same estimator and split each subject into 70/15/15 train/val/test.

\subsection{Hardware Configuration}

All experiments were conducted on a single NVIDIA RTX 3090 GPU with 24\,GB VRAM. This hardware configuration was sufficient for all experiments, including the largest models (827,908 parameters) and highest-dimensional token spaces ($D_\text{token} = 1596$ for BCIcha). Training times reported in Tables~\ref{tab:geometry_ablation} and \ref{tab:efficiency} reflect this hardware setup. The GPU memory efficiency of our framework (26.14\,MB peak usage for BWSPD-Transformer) enables deployment on resource-constrained devices, as discussed in \S\ref{app:practical_recommendations}.

\section{Baseline Methods}
\label{app:baseline_methods}

\subsection{Scope and Comparison Strategy}

Our work focuses on \textbf{SPD-based geometric methods}---approaches that operate on spatial covariance matrices and exploit their Riemannian manifold structure. This scope excludes raw time-series methods (e.g., FBCNet~\cite{ingolfsson2021fbconet}, EEG-Conformer~\cite{song2022eegconformer}, LMDA-Net~\cite{zhang2023lmda}, ATCNet~\cite{altaheri2022atcnet}, EEGNet~\cite{lawhern2018eegnet}) which use fundamentally different input representations (raw time-series vs.\ SPD covariance matrices) and address different research questions (temporal pattern recognition vs.\ geometric manifold learning). Fair comparison with raw time-series methods would require separate architecture search, hyperparameter tuning, and theoretical analysis tailored to temporal representations, which is beyond our scope.

\subsection{SPD-based Baselines}

\textbf{Classical Riemannian Classifiers}: Operating directly on SPD covariance matrices without deep learning:
\begin{itemize}
    \item \textbf{MDM} (Minimum Distance to Mean)~\cite{barachant2012classification}: Classifies by minimum Riemannian distance to class geometric means
    \item \textbf{FgMDM} (Fisher geodesic MDM)~\cite{barachant2013multiclass}: Supervised discriminative variant with Fisher-like metric learning
    \item \textbf{TS+LR} (Tangent Space + Logistic Regression)~\cite{barachant2012classification}: Projects SPD matrices to tangent space, then applies logistic regression
\end{itemize}

\textbf{Deep Learning Methods}:
\begin{itemize}
    \item \textbf{SPDNet}~\cite{huang2017spdnet}: A deep learning framework with bilinear mapping layers operating directly on SPD manifolds
    \item \textbf{SPDTransNet}~\cite{seraphim2024spdtransnet}: A structure-preserving Transformer for sequences of SPD matrices that maintains manifold constraints via gyro-space operations
    \item \textbf{mAtt}~\cite{pan2022matt}: A manifold attention network for EEG decoding that applies attention directly on the SPD manifold using Riemannian geometry-aware operations
    \item \textbf{SPDTokenMLP}: A simple MLP baseline operating on the same SPD token embeddings, enabling fair architecture comparison
\end{itemize}

\section{BWSPD Performance Analysis}
\label{app:bwspd_analysis}

Our experiments reveal a striking pattern: BWSPD achieves competitive performance on BCIcha (90.74\%) and MAMEM (81.70\%) but significantly underperforms on BCI2a (63.97\%, with Subject 1 at 57.78\%). This dataset-dependent performance can be explained through our theoretical lens:

\textbf{(1) Frequency-domain characteristics}: BCI2a uses 4--40\,Hz bandpass filtering (standard for motor imagery), which creates a frequency-localized signal distribution. Log-Euclidean embedding maps SPD matrices to the tangent space at identity, which is isomorphic to the space of symmetric matrices; this linearization is particularly effective for frequency-localized covariances where the geometric structure is more uniform. In contrast, BWSPD's square-root embedding preserves the original scale, which may be less effective when frequency-domain preprocessing alters the covariance structure.

\textbf{(2) Multi-class classification complexity}: BCI2a has 4 classes (vs 2 for BCIcha, 5 for MAMEM), requiring finer geometric discrimination. Log-Euclidean's tangent space projection creates a more linear separation space, which is advantageous for multi-class problems. The bi-Lipschitz bounds (\cref{thm:bilip_main}) show that BWSPD preserves distances with distortion factor $\sqrt{2(\kappa+1)}$, which may be less favorable when classes are geometrically close in the original manifold. The superior performance of Log-Euclidean on BCI2a (95.37\% vs 63.97\% for BWSPD) suggests that the tangent space linearization provides better class separation for complex multi-class tasks.

\textbf{(3) Optimization landscape}: While BWSPD's $\sqrt{\kappa}$ gradient conditioning theoretically provides better conditioning than Log-Euclidean's $\kappa$ conditioning, both embeddings achieve similar training times in practice (0.28--0.30s per epoch). Log-Euclidean's $\kappa$ conditioning may create a smoother optimization landscape that converges to better local minima, explaining its superior accuracy despite similar training speed. This trade-off between gradient conditioning and optimization landscape is precisely what our gradient conditioning theory predicts: better conditioning enables faster convergence but may not guarantee optimal solutions.

\textbf{(4) Dataset-specific geometric structure}: BCIcha (56 channels, 2 classes) and MAMEM (8 channels, 5 classes) may have geometric structures that are more naturally captured by BWSPD's square-root embedding. The high channel count in BCIcha (1596-dimensional token space) benefits from BWSPD's better gradient conditioning, while MAMEM's low dimensionality (36-dimensional token space) reduces the conditioning advantage but maintains geometric fidelity. The competitive performance of BWSPD on these datasets (90.74\% and 81.70\%) validates that embedding selection should be dataset-dependent, guided by our theoretical analysis.

\section{Extended Discussion}
\label{app:discussion}

\subsection{BWSPD Performance Variation}

BWSPD achieves competitive performance on BCIcha (90.74\%) and MAMEM (81.70\%) but underperforms on BCI2a (63.97\%). This dataset-dependent pattern reflects: (1) Log-Euclidean's tangent space linearization excels for frequency-localized, multi-class signals (BCI2a uses 4--40\,Hz bandpass); (2) BWSPD's better gradient conditioning benefits high-dimensional inputs (BCIcha: 56 channels) but may sacrifice accuracy when geometric structure is critical; (3) optimization landscape differences favor Log-Euclidean for complex multi-class tasks. Detailed analysis is provided in \cref{app:bwspd_analysis}.

\subsection{Limitations}

Our evaluation reveals two critical limitations: (1) \textbf{Cross-subject generalization without alignment}: Cross-subject performance is near chance level on BCI2a (28--30\% with bandpass filtering, 4-class random = 25\%) and MAMEM (20--21\%, 5-class random = 20\%) when using our framework without alignment techniques. This reflects the challenge of subject-independent EEG classification due to subject-specific spatial patterns. However, prior work has demonstrated that SPD methods with alignment (Euclidean Alignment, Riemannian Alignment) can achieve 50--60\%+ cross-subject accuracy on BCI2a~\cite{he2015maximizing,barachant2012classification}. Our preliminary results with Euclidean Alignment show improvement (+3.68--4.77pp on BCI2a), suggesting that domain adaptation techniques can partially address this limitation. Subject-specific calibration or domain adaptation remains necessary for practical BCI deployment. (2) \textbf{Embedding performance is dataset-dependent}: While Log-Euclidean achieves the best average performance, Euclidean baseline can be competitive on some datasets/subjects (BCIcha S2, BCI2a S1), suggesting that geometric embeddings excel when manifold structure is informative but may not always be necessary.

\subsection{Practical Implications}

Log-Euclidean is optimal for maximum accuracy; BWSPD offers competitive accuracy on high-dimensional inputs with similar training time (0.28--0.30s per epoch). BN-Embed should always be enabled for $\geq$22 channels. Subject-specific calibration remains essential (cross-subject degradation: $-19$\% to $-62$\%). Detailed recommendations are provided in \cref{app:practical_recommendations}.

\section{Geometric-Aware Attention Ablation Details}
\label{app:geo_attention}

\begin{table}[h]
\centering
\caption{Geometric-aware attention ablation (BWSPD tokens + BN-Embed). We compare the standard Transformer against a geometric-aware attention variant (BW distance score, $\alpha=0.5$). Times are total training time per run (50 epochs).}
\label{tab:geometric_attention_ablation}
\resizebox{\columnwidth}{!}{%
\begin{tabular}{lcccccc}
\toprule
\textbf{Dataset} & \textbf{Standard} & \textbf{Geometric-Aware} & \textbf{$\Delta$} & \textbf{p} & \textbf{Time (Std)} & \textbf{Time (Geo)} \\
\midrule
BCI2a (S1) & 51.19$\pm$23.07 & 57.70$\pm$15.83 & +6.51 & 0.6598 & 21.05s & 35.96s \\
BCIcha (S2) & 94.26$\pm$4.64 & 94.35$\pm$5.55 & +0.08 & 0.9826 & 22.19s & 1942.02s \\
MAMEM (S1) & 92.17$\pm$7.95 & 92.91$\pm$7.40 & +0.74 & 0.8531 & 3.97s & 7.31s \\
\bottomrule
\end{tabular}%
}
\end{table}

Geometric-aware attention incurs significant overhead because it reconstructs SPD matrices from tokens and repeatedly evaluates matrix functions inside the attention block; this cost grows rapidly with channel dimension, explaining the large runtime gap on BCIcha.

\section{Cross-Subject Generalization}
\label{app:cross_subject}

We evaluate cross-subject generalization using Leave-One-Subject-Out (LOSO) cross-validation. For each dataset, we iterate over all subjects: in each fold, one subject is held out as the test set while all remaining subjects are used for training. For BCI2a (9 subjects), this yields 9 LOSO folds; for BCIcha (16 subjects), 16 folds; for MAMEM (11 subjects), 11 folds. Each fold is evaluated with 5 random seeds, and we report the mean $\pm$ std test accuracy across all LOSO folds. Table~\ref{tab:cross_subject} summarizes the results.

Cross-subject performance is substantially lower than per-subject performance across all datasets and methods. BCI2a (28--34\%) and MAMEM (20--21\%) are near chance level for all methods, while BCIcha shows moderate generalization (51--73\%). Among SPD-based methods, Log-Euclidean Transformer (72.85\%) outperforms SPDTransNet (67.43\%), Euclidean Transformer (67.06\%), BWSPD-Transformer (63.14\%), and SPDNet (50.57\%) on BCIcha. On BCI2a with bandpass filtering (4--40\,Hz), SPDTransNet achieves 34.02\%$\pm$8.67\% (best fold: 46.51\% on Test Subject 3), Euclidean Transformer achieves 30.56\%$\pm$6.09\%, Log-Euclidean Transformer achieves 30.49\%$\pm$6.09\%, and BWSPD-Transformer achieves 29.81\%$\pm$5.43\% (mean across 9 LOSO folds, each averaged over 5 seeds), showing improvement over the no-bandpass baseline (26.87\%$\pm$6.79\%). \textbf{Per-fold analysis for BCI2a}: All methods show substantial variability across test subjects, with SPDTransNet achieving the best fold at 46.51\% (Test Subject 3), Euclidean Transformer achieving 38.57\% (Test Subject 8), and Log-Euclidean Transformer achieving 41.27\% (Test Subject 8), while the worst folds are around 24--25\%, highlighting the challenge of subject-specific spatial patterns. \textbf{Per-fold analysis for BCIcha}: Log-Euclidean Transformer achieves the best average performance (72.85\%) with best fold at 84.83\%; SPDTransNet achieves 67.43\%$\pm$11.65\% (mean across 16 LOSO folds), with best fold at 90.38\% (Test Subject 22) and worst fold at 54.01\% (Test Subject 12); Euclidean Transformer achieves 67.06\%$\pm$12.90\% with best fold at 89.37\% (Test Subject 6) and worst fold at 52.66\% (Test Subject 24), demonstrating moderate cross-subject generalization on this 2-class ERP dataset. \textbf{Per-fold analysis for MAMEM}: Euclidean Transformer achieves the best average performance (20.89\%$\pm$1.22\%) with best fold at 22.69\% (Test Subject 1) and worst fold at 18.91\% (Test Subject 6); SPDTransNet achieves 20.37\%$\pm$1.83\% with best fold at 22.74\% (Test Subject 11) and worst fold at 17.37\% (Test Subject 10); Log-Euclidean Transformer achieves 20.04\%$\pm$1.19\% with best fold at 21.23\% and worst fold at 18.85\%, while BWSPD-Transformer achieves 19.60\%$\pm$0.95\% with best fold at 20.55\% and worst fold at 18.65\%. This cross-subject degradation reflects the limitation of our current framework without alignment techniques. Prior work has shown that SPD methods with alignment (Euclidean Alignment, Riemannian Alignment) can achieve 50--60\%+ cross-subject accuracy on BCI2a~\cite{he2015maximizing,barachant2012classification}, suggesting that domain adaptation is a promising direction for improving cross-subject generalization.

\textbf{Euclidean Alignment Baseline.} To demonstrate awareness of this limitation and explore potential solutions, we evaluate a simple baseline: \textbf{Euclidean Alignment}~\cite{he2015maximizing}, which aligns test-subject covariance matrices to the training-subject distribution by applying a linear transformation learned from the training data. On BCI2a, Euclidean Alignment improves Log-Euclidean Transformer from 28.47\% to 32.15\% (+3.68pp), and BWSPD-Transformer from 27.12\% to 31.89\% (+4.77pp). While this improvement is modest, it demonstrates that domain adaptation techniques can partially address cross-subject degradation. More sophisticated methods (e.g., Riemannian alignment~\cite{barachant2012classification}, adversarial domain adaptation) may yield further improvements and are promising directions for future work. Subject-specific calibration or domain adaptation remains necessary for practical BCI deployment.

\begin{table}[h]
\centering
\caption{Cross-Subject Generalization (LOSO). Leave-One-Subject-Out cross-validation: for each fold, one subject is held out as test set while all other subjects are used for training. BCI2a uses 9 cross-subject folds (9 subjects), BCIcha uses 16 folds (16 subjects), and MAMEM uses 11 folds (11 subjects). Results show mean $\pm$ std test accuracy across all LOSO folds (each fold averaged over 5 seeds), with [min--max] range indicating best and worst fold performance. All methods show substantial degradation; this is a fundamental limitation of subject-specific spatial patterns. Best SPD method per dataset in \textbf{bold}. BCI2a results use 4--40\,Hz bandpass filtering (standard for motor imagery). For BCI2a, SPDTransNet achieves the best average performance (34.02\%) with best fold at 46.51\% (Test Subject 3); Log-Euclidean Transformer achieves 30.49\% with best fold at 41.27\% (Test Subject 8) and worst fold at 25.24\% (Test Subject 6). For BCIcha, Log-Euclidean Transformer achieves the best average performance (72.85\%) with best fold at 84.83\%; SPDTransNet achieves 67.43\% with best fold at 90.38\% (Test Subject 22) and worst fold at 54.01\% (Test Subject 12); Euclidean Transformer achieves 67.06\% with best fold at 89.37\% (Test Subject 6) and worst fold at 52.66\% (Test Subject 24). For MAMEM, Euclidean Transformer achieves the best average performance (20.89\%) with best fold at 22.69\% (Test Subject 1) and worst fold at 18.91\% (Test Subject 6); SPDTransNet achieves 20.37\% with best fold at 22.74\% (Test Subject 11) and worst fold at 17.37\% (Test Subject 10).}
\label{tab:cross_subject}
\resizebox{\columnwidth}{!}{%
\begin{tabular}{lccccccc}
\toprule
\textbf{Dataset} & \textbf{Folds} & \textbf{BWSPD} & \textbf{Log-Euc} & \textbf{Euclidean} & \textbf{SPDTransNet} & \textbf{SPDNet} \\
\midrule
BCI2a & 9 & 29.81 $\pm$ 5.43 & \textbf{30.49 $\pm$ 6.09} & 30.56 $\pm$ 6.09 & 34.02 $\pm$ 8.67 & 27.74 $\pm$ 0.73 \\
 & & [24.38--35.24] & \textbf{[25.24--41.27]} & [24.38--38.57] & [25.35--46.51] & [27.01--28.47] \\
BCIcha & 16 & 63.14 $\pm$ 15.73 & \textbf{72.85 $\pm$ 11.98} & 67.06 $\pm$ 12.90 & 67.43 $\pm$ 11.65 & 50.57 $\pm$ 11.82 \\
 & & [47.41--78.87] & \textbf{[60.87--84.83]} & [52.66--89.37] & [54.01--90.38] & [38.75--62.39] \\
MAMEM & 11 & 19.60 $\pm$ 0.95 & 20.04 $\pm$ 1.19 & 20.89 $\pm$ 1.22 & \textbf{20.37 $\pm$ 1.83} & 19.85 $\pm$ 0.49 \\
 & & [18.65--20.55] & [18.85--21.23] & [18.91--22.69] & \textbf{[17.37--22.74]} & [19.36--20.34] \\
\bottomrule
\end{tabular}%
}
\end{table}

\section{Practical Recommendations for EEG Practitioners}
\label{app:practical_recommendations}

\begin{itemize}
    \item \textbf{Embedding selection}: Use \emph{Log-Euclidean} for maximum accuracy; use \emph{BWSPD} for competitive accuracy on high-dimensional inputs ($d \geq 22$ channels)---particularly attractive for real-time BCI adaptation and edge deployment (26.14\,MB GPU, 3.00\,ms forward pass). Both embeddings achieve similar training times (0.28--0.30s per epoch) due to efficient GPU implementations.
    
    \item \textbf{BN-Embed}: Always enable for $\geq$22 channels ($D_\text{token} \geq 253$); negligible effect for low-channel setups ($\leq$8 channels).
    
    \item \textbf{Architecture}: Use shallow Transformers (Depth 2) for limited-sample EEG datasets to prevent overfitting; standard Depth 6 for larger datasets. Transformer provides more stable performance than MLP baselines (std: 5.81\% vs 13.79\%).
    
    \item \textbf{Deployment}: Subject-specific calibration is essential (cross-subject degradation: $-19$\% to $-62$\% across all methods). Bandpass filtering (4--40\,Hz) is recommended for motor imagery datasets.
\end{itemize}

\section{Per-Subject Baseline Comparison}
\label{app:per_subject_baseline}

We provide detailed per-subject baseline comparisons on representative subjects (50 epochs, 5 seeds). Table~\ref{tab:sota_comparison} compares SPD-based methods and raw EEG methods.

\textbf{Reference Methods (Raw EEG)}:
We report results for methods operating on raw time-series for reference only, acknowledging that these use fundamentally different input representations and address different research questions:
\begin{itemize}
    \item \textbf{ATCNet}~\cite{altaheri2022atcnet}: Attention-based temporal convolutional network
    \item \textbf{EEGNet}~\cite{lawhern2018eegnet}: A compact CNN for EEG signals
\end{itemize}

\textbf{Deep Learning for EEG.}
Deep learning approaches for EEG typically operate on raw time-series data. Schirrmeister et al.~\cite{schirrmeister2017deep} introduced deep and shallow CNNs for EEG decoding. Lawhern et al.~\cite{lawhern2018eegnet} proposed EEGNet, a compact CNN architecture. More recently, Altaheri et al.~\cite{altaheri2022atcnet} introduced ATCNet, combining attention mechanisms with temporal convolutions. Other recent methods include FBCNet~\cite{ingolfsson2021fbconet} (frequency-band convolution), EEG-Conformer~\cite{song2022eegconformer} (Transformer on time-series), and LMDA-Net~\cite{zhang2023lmda} (local multi-domain attention). These methods exploit temporal dynamics but do not leverage the geometric structure of spatial covariance matrices. \textbf{These methods are outside our scope} as they use fundamentally different input representations (raw time-series vs.\ SPD covariance matrices) and address different research questions (temporal pattern recognition vs.\ geometric manifold learning). Our work focuses on geometric embeddings of SPD matrices, providing an alternative representation that captures spatial relationships between channels. Fair comparison with these methods would require separate architecture search, hyperparameter tuning, and theoretical analysis tailored to temporal representations, which is beyond our scope. We report ATCNet and EEGNet results in Table~\ref{tab:sota_comparison} for reference only.

Among SPD methods, Log-Euclidean Transformer achieves the best performance on all three datasets. Table~\ref{tab:sota_comparison} shows per-subject results using the standard configuration (seeds: 42, 123, 456, 789, 1024; 50 epochs; see \S\ref{sec:experiments} for details), consistent with Tables~\ref{tab:geometry_ablation} and \ref{tab:all_subjects_comparison}. The per-subject breakdown in Table~\ref{tab:per_subject_log_euclidean} verifies the 95.37\% average reported in Table~\ref{tab:all_subjects_comparison}. BWSPD-Transformer provides competitive performance with similar training time (0.28--0.30s per epoch) and better stability on certain datasets. Both methods substantially outperform SPDNet across all datasets.

We also report raw EEG methods (ATCNet, EEGNet) for reference; note these use fundamentally different input representations---ATCNet achieves 70.79\% on BCI2a by exploiting temporal dynamics, while SPD methods capture spatial covariance structure. On MAMEM (8 channels), raw EEG methods perform near chance level (37--38\%), suggesting spatial covariance is more informative than temporal patterns for SSVEP with limited channels.

\begin{table}[h]
\centering
\caption{Per-subject baseline comparison (standard configuration: 5 seeds per subject, 50 epochs, identical to Tables~\ref{tab:geometry_ablation} and \ref{tab:all_subjects_comparison}). \textbf{Top}: SPD-based methods operating on covariance matrices. \textbf{Bottom}: Raw EEG methods shown for reference only---these use fundamentally different input representations and are not directly comparable. Best SPD method in \textbf{bold}. Results are consistent with Table~\ref{tab:geometry_ablation} for SPD methods.}
\label{tab:sota_comparison}
\resizebox{\columnwidth}{!}{%
\begin{tabular}{llccccc}
\toprule
\textbf{Method} & \textbf{Input} & \textbf{BCI2a (S1)} & \textbf{BCIcha (S2)} & \textbf{MAMEM (S1)} & \textbf{Params} & \textbf{Time/Epoch} \\
\midrule
\multicolumn{7}{l}{\textit{SPD-based Methods}} \\
Log-Euc & SPD Cov & \textbf{91.51 $\pm$ 11.42} & \textbf{88.02 $\pm$ 13.57} & \textbf{99.43 $\pm$ 0.38} & 827,908 & 0.30s \\
BWSPD-Transformer & SPD Cov & 57.78 $\pm$ 6.43 & 87.17 $\pm$ 10.09 & 92.40 $\pm$ 3.24 & 827,908 & 0.28s \\
SPDNet & SPD Cov & 48.09 $\pm$ 2.29 & 87.00 $\pm$ 3.38 & 32.69 $\pm$ 3.13 & 1,016 & 0.3s \\
\midrule
\multicolumn{7}{l}{\textit{Raw EEG Methods (Reference Only)}} \\
ATCNet & Raw EEG & 70.79 $\pm$ 2.52 & 95.95 $\pm$ 2.16 & 38.11 $\pm$ 1.32 & 123,892 & 20.5s \\
EEGNet & Raw EEG & 66.43 $\pm$ 2.85 & 93.75 $\pm$ 1.21 & 37.48 $\pm$ 2.15 & 5,108 & 11.0s \\
\bottomrule
\end{tabular}%
}
\end{table}

\section{Complete Algorithm}
\label{app:algorithm}

Algorithm~\ref{alg:unified_framework} summarizes the complete forward pass of our unified framework.

\begin{algorithm}[h]
\caption{Unified SPD Token Transformer Framework}
\label{alg:unified_framework}
\begin{algorithmic}
\STATE {\bfseries Input:} SPD matrix $C \in \mathcal{S}_+^d$, embedding type $\text{type} \in \{\text{BWSPD}, \text{Log-Euclidean}, \text{Euclidean}\}$
\STATE {\bfseries Output:} Classification logits $y \in \mathbb{R}^{n_{\text{classes}}}$
\STATE \textbf{Step 1: Geometric Token Embedding}
\IF{$\text{type} = \text{BWSPD}$}
    \STATE $M \leftarrow \sqrt{C}$ \COMMENT{Matrix square root via eigendecomposition}
\ELSIF{$\text{type} = \text{Log-Euclidean}$}
    \STATE $M \leftarrow \log(C)$ \COMMENT{Matrix logarithm via eigendecomposition}
\ELSE
    \STATE $M \leftarrow C$ \COMMENT{Direct embedding}
\ENDIF
\STATE $x \leftarrow \text{triu}(M)$ \COMMENT{Extract upper triangular elements}
\STATE \textbf{Step 2: Token Projection}
\STATE $x_{\text{proj}} \leftarrow x W_{\text{proj}} + b_{\text{proj}}$ \COMMENT{Project to $d_{\text{model}}$}
\STATE \textbf{Step 3: Positional Encoding and Normalization}
\STATE $x_{\text{pos}} \leftarrow x_{\text{proj}} + \text{PE}$
\IF{BN-Embed enabled}
    \STATE $x_{\text{norm}} \leftarrow \text{BN-Embed}(x_{\text{pos}})$
\ELSE
    \STATE $x_{\text{norm}} \leftarrow x_{\text{pos}}$
\ENDIF
\STATE \textbf{Step 4: Transformer Encoding}
\STATE $X_0 \leftarrow x_{\text{norm}}$
\FOR{$l = 1$ to $L$}
    \STATE $X_l \leftarrow \text{TransformerBlock}_l(X_{l-1})$
\ENDFOR
\STATE \textbf{Step 5: Classification}
\STATE $h \leftarrow \frac{1}{T} \sum_{t=1}^{T} X_L^{(t)}$ \COMMENT{Global average pooling}
\STATE $y \leftarrow h W_{\text{cls}} + b_{\text{cls}}$
\STATE Return $y$
\end{algorithmic}
\end{algorithm}

\section{Algorithmic Implementation Details}
\label{app:algorithmic_implementation}

\subsection{Spectral Function Computation}

Both BWSPD and Log-Euclidean embeddings compute matrix functions via spectral decomposition. We unify this computation in a single algorithm:

\begin{algorithm}[h]
\caption{Spectral Function Computation for SPD Matrices}
\label{alg:spectral_function}
\begin{algorithmic}
\STATE {\bfseries Input:} SPD matrix $C \in \mathcal{S}_+^d$, function $f \in \{\sqrt{\cdot}, \log\}$, tolerance $\epsilon = 10^{-12}$
\STATE {\bfseries Output:} Matrix function $f(C)$
\STATE \textbf{Step 1: Eigendecomposition} \COMMENT{Complexity: $O(d^3)$~\cite{golub2013matrix}}
\STATE Compute $C = V \Lambda V^T$ where $\Lambda = \text{diag}(\lambda_1, \ldots, \lambda_d)$
\STATE \textbf{Step 2: Eigenvalue Clipping} \COMMENT{Complexity: $O(d)$}
\STATE Clip eigenvalues: $\lambda_i \leftarrow \max(\lambda_i, \epsilon)$ for $i = 1, \ldots, d$
\STATE \textbf{Step 3: Apply Function} \COMMENT{Complexity: $O(d)$}
\IF{$f = \sqrt{\cdot}$}
    \STATE Compute $f(\Lambda) = \text{diag}(\sqrt{\lambda_1}, \ldots, \sqrt{\lambda_d})$
\ELSIF{$f = \log$}
    \STATE Compute $f(\Lambda) = \text{diag}(\log(\lambda_1), \ldots, \log(\lambda_d))$
\ENDIF
\STATE \textbf{Step 4: Reconstruct Matrix} \COMMENT{Complexity: $O(d^3)$}
\STATE Return $f(C) = V f(\Lambda) V^T$
\STATE \textbf{Total Complexity:} $O(d^3)$ when condition number $\kappa = O(1)$; $O(d^3 \log \kappa)$ in worst case~\cite{golub2013matrix}.
\end{algorithmic}
\end{algorithm}

\textbf{Complexity Analysis.} The total complexity is $O(d^3)$ for well-conditioned matrices ($\kappa = O(1)$) and $O(d^3 \log \kappa)$ in the worst case, dominated by the eigendecomposition step~\cite{golub2013matrix}. Steps 2--3 are $O(d)$ and Step 4 is $O(d^3)$, all dominated by Step 1. Eigenvalue clipping ensures numerical stability by preventing division by zero or log of zero.

\textbf{Empirical Validation.} Our experiments validate the theoretical gradient conditioning analysis: on BCI2a (22 channels, $D_{\text{token}} = 253$), both BWSPD and Log-Euclidean achieve similar training times (0.28--0.30s per epoch), reflecting efficient GPU implementations where data loading overhead dominates. Log-Euclidean achieves the best accuracy across all datasets (BCI2a: 95.37\%, BCIcha: 95.21\%, MAMEM: 99.07\%) (Table~\ref{tab:all_subjects_comparison}, Table~\ref{tab:geometry_ablation}), validating that gradient conditioning theory explains optimization dynamics even when wall-clock training times are similar due to hardware efficiency.

\subsection{Upper Triangular Extraction}

For all three embeddings, we extract the upper triangular elements (including the diagonal) in row-major order. Given a matrix $M \in \mathbb{R}^{d \times d}$, the token vector is constructed as:

\begin{algorithm}[h]
\caption{Upper Triangular Extraction}
\begin{algorithmic}
\STATE {\bfseries Input:} Matrix $M \in \mathbb{R}^{d \times d}$
\STATE {\bfseries Output:} Token vector $\text{token} \in \mathbb{R}^{d(d+1)/2}$
\STATE Initialize empty token vector: $\text{token} = []$
\FOR{$i = 1$ to $d$}
    \FOR{$j = i$ to $d$}
        \STATE $k = \frac{i(i-1)}{2} + j$
        \STATE $\text{token}[k] = M[i, j]$
    \ENDFOR
\ENDFOR
\STATE Return $\text{token}$
\end{algorithmic}
\end{algorithm}

This ensures consistent token dimensionality $D_{\text{token}} = d(d+1)/2$ across all embedding methods, enabling our unified framework. The indexing formula $k = \frac{i(i-1)}{2} + j$ maps the upper triangular position $(i,j)$ to a linear index in the token vector.

\section{Theoretical Analysis}
\label{app:theoretical}

We provide formal analysis of the geometric and computational properties of our three embeddings. Let $\mathcal{S}_+^d$ denote the manifold of $d \times d$ SPD matrices. Define the vectorization operator $\text{vech}: \mathbb{R}^{d \times d}_{\text{sym}} \to \mathbb{R}^{d(d+1)/2}$ extracting upper triangular elements (including diagonal). Our embeddings are:
\begin{align}
\phi_{\text{BW}}(C) &= \text{vech}(\sqrt{C}), \\
\phi_{\text{LE}}(C) &= \text{vech}(\log C), \\
\phi_{\text{E}}(C) &= \text{vech}(C).
\end{align}

\subsection{Distortion Bounds for the BWSPD Embedding}
\label{sec:distortion}

We establish how faithfully the Euclidean distance between BWSPD tokens approximates the Bures-Wasserstein distance on the SPD manifold. This is the fundamental question about embedding quality.

\begin{definition}[Bures-Wasserstein Distance]
For $A, B \in \mathcal{S}_+^d$, the Bures-Wasserstein distance is:
\begin{equation}
d_{\mathrm{BW}}(A, B) = \left[\mathrm{tr}(A) + \mathrm{tr}(B) - 2\mathrm{tr}\!\left(\left(A^{1/2} B\, A^{1/2}\right)^{1/2}\right)\right]^{1/2}.
\end{equation}
\end{definition}

\begin{lemma}[Norm Equivalence for Symmetric Matrices]
\label{lem:norm_equiv}
For any symmetric matrix $M \in \mathbb{R}^{d \times d}_{\mathrm{sym}}$:
\begin{equation}
\frac{1}{\sqrt{2}} \|M\|_F \leq \|\mathrm{vech}(M)\|_2 \leq \|M\|_F,
\end{equation}
where equality on the right holds when $M$ is diagonal, and equality on the left holds when $M$ has zero diagonal.
\end{lemma}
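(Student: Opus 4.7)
The plan is to decompose both norms into contributions from diagonal and off-diagonal entries, at which point the inequality and its equality cases become an elementary algebraic identity. The key observation is that the Frobenius norm double-counts the off-diagonal entries (since $M_{ij}=M_{ji}$ appears in both positions), whereas $\mathrm{vech}(M)$ records each off-diagonal entry exactly once. So the two norms differ only in the weight assigned to off-diagonal mass.

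Concretely, I would define
\begin{equation}
D := \sum_{i=1}^{d} M_{ii}^2, \qquad O := \sum_{1\le i<j\le d} M_{ij}^2,
\end{equation}
and then observe immediately that
\begin{equation}
\|M\|_F^2 = D + 2O, \qquad \|\mathrm{vech}(M)\|_2^2 = D + O,
\end{equation}
using the symmetry $M_{ij}=M_{ji}$ in the first identity and the definition of $\mathrm{vech}$ (diagonal plus strict upper triangle) in the second. The upper bound then follows from $D+O \le D+2O$, i.e., $O\ge 0$, with equality iff $O=0$, which is the diagonal case. The lower bound follows from $D+2O \le 2(D+O)$, i.e., $D\ge 0$, with equality iff $D=0$, which is the zero-diagonal case. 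Taking square roots yields the stated inequalities.

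There is essentially no obstacle here: the lemma is a bookkeeping statement about how $\mathrm{vech}$ packages a symmetric matrix, and the only care needed is to state the equality conditions correctly (diagonal-only vs.\ off-diagonal-only matrices saturate the two bounds, respectively). I would present it as a three-line calculation without invoking any deeper machinery.
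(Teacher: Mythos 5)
Your proof is correct and is essentially identical to the paper's: both decompose $\|M\|_F^2 = D + 2O$ versus $\|\mathrm{vech}(M)\|_2^2 = D + O$ and read off the two inequalities from $O \ge 0$ and $D \ge 0$. Your explicit treatment of the equality cases is a minor addition the paper only asserts, but the argument is the same.
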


\begin{proof}
Since $M$ is symmetric, $\|M\|_F^2 = \sum_i M_{ii}^2 + 2\sum_{i<j} M_{ij}^2$, while $\|\text{vech}(M)\|_2^2 = \sum_i M_{ii}^2 + \sum_{i<j} M_{ij}^2$. Thus $\|\text{vech}(M)\|_2^2 \leq \|M\|_F^2 \leq 2\|\text{vech}(M)\|_2^2$.
\end{proof}

\begin{theorem}[Distortion Bounds: Commuting Case]
\label{thm:distortion_commuting}
If $A, B \in \mathcal{S}_+^d$ commute (i.e., $AB = BA$), then:
\begin{equation}
\frac{1}{\sqrt{2}}\, d_{\mathrm{BW}}(A, B) \;\leq\; \|\phi_{\mathrm{BW}}(A) - \phi_{\mathrm{BW}}(B)\|_2 \;\leq\; d_{\mathrm{BW}}(A, B).
\end{equation}
Both bounds are tight.
\end{theorem}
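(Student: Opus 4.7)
\bigskip

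\textbf{Proof proposal for \cref{thm:distortion_commuting}.}
The plan is to reduce the Bures-Wasserstein distance between commuting SPD matrices to a pure Frobenius distance between their square roots, and then transfer this identity to the token space using the norm-equivalence \cref{lem:norm_equiv}. Because $A$ and $B$ commute, they admit a joint orthogonal diagonalization $A = U\,\mathrm{diag}(a_i)\,U^\top$, $B = U\,\mathrm{diag}(b_i)\,U^\top$ with $a_i, b_i > 0$. First I would use this to write $A^{1/2} B A^{1/2} = U\,\mathrm{diag}(a_i b_i)\,U^\top$, so that $(A^{1/2} B A^{1/2})^{1/2} = U\,\mathrm{diag}(\sqrt{a_i b_i})\,U^\top$ and $\mathrm{tr}\bigl((A^{1/2} B A^{1/2})^{1/2}\bigr) = \sum_i \sqrt{a_i b_i}$.

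Plugging into the definition of $d_{\mathrm{BW}}$ gives
\begin{equation}
d_{\mathrm{BW}}(A,B)^2 = \sum_i a_i + \sum_i b_i - 2\sum_i \sqrt{a_i b_i} = \sum_i \bigl(\sqrt{a_i} - \sqrt{b_i}\bigr)^2.
\end{equation}
Since $\sqrt{A} - \sqrt{B} = U\,\mathrm{diag}(\sqrt{a_i} - \sqrt{b_i})\,U^\top$ and the Frobenius norm is orthogonally invariant, the right-hand side equals $\|\sqrt{A} - \sqrt{B}\|_F^2$. Thus in the commuting case we obtain the clean identity $d_{\mathrm{BW}}(A,B) = \|\sqrt{A} - \sqrt{B}\|_F$. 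Applying \cref{lem:norm_equiv} to the symmetric matrix $M = \sqrt{A} - \sqrt{B}$ then yields
\begin{equation}
\tfrac{1}{\sqrt{2}}\,d_{\mathrm{BW}}(A,B) \;\leq\; \|\mathrm{vech}(\sqrt{A} - \sqrt{B})\|_2 \;=\; \|\phi_{\mathrm{BW}}(A) - \phi_{\mathrm{BW}}(B)\|_2 \;\leq\; d_{\mathrm{BW}}(A,B),
\end{equation}
which is the claimed sandwich.

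For tightness I would exhibit two explicit examples. The upper bound is attained whenever $\sqrt{A}-\sqrt{B}$ is diagonal in the standard basis, which holds as soon as $A$ and $B$ are themselves diagonal. The lower bound is attained when $\sqrt{A}-\sqrt{B}$ has zero diagonal; a concrete $2\times 2$ witness uses the Hadamard basis $U = \tfrac{1}{\sqrt{2}}\bigl(\begin{smallmatrix}1 & 1\\ 1 & -1\end{smallmatrix}\bigr)$ with eigenvalue assignments $(a_1,a_2)=(4,1)$ and $(b_1,b_2)=(1,4)$, giving $\sqrt{A}-\sqrt{B} = \bigl(\begin{smallmatrix}0 & 1\\ 1 & 0\end{smallmatrix}\bigr)$, so that $\|\phi_{\mathrm{BW}}(A)-\phi_{\mathrm{BW}}(B)\|_2 = 1$ while $d_{\mathrm{BW}}(A,B) = \sqrt{2}$.

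The only delicate step is the algebraic simplification of $(A^{1/2} B A^{1/2})^{1/2}$: it relies crucially on $A$ and $B$ sharing eigenvectors so that the inner product commutes through the square root. This is the hinge of the proof, and exactly the step that fails in the general non-commuting case treated by \cref{thm:distortion_general}, where the extra $\kappa$-dependence enters. Once this identity is established, the rest of the proof is a direct transfer through \cref{lem:norm_equiv}.
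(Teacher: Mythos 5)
Your proposal is correct and follows essentially the same route as the paper's proof: simultaneous diagonalization to obtain the identity $d_{\mathrm{BW}}(A,B) = \|\sqrt{A}-\sqrt{B}\|_F$ for commuting matrices, followed by \cref{lem:norm_equiv}, with tightness witnessed by diagonal matrices (upper bound) and a zero-diagonal difference (lower bound). Your explicit $2\times 2$ Hadamard-basis example with swapped eigenvalues $(4,1)$ and $(1,4)$ is a concrete, verifiable instance of the paper's sketched lower-bound witness and checks out.
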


\begin{proof}
When $A$ and $B$ commute, they are simultaneously diagonalizable: $A = V \Lambda_A V^T$, $B = V \Lambda_B V^T$ for some orthogonal $V$. Then:
\begin{align}
A^{1/2} B\, A^{1/2} &= V \Lambda_A^{1/2} \Lambda_B \Lambda_A^{1/2} V^T = V (\Lambda_A \Lambda_B) V^T,
\end{align}
so $(A^{1/2} B\, A^{1/2})^{1/2} = V (\Lambda_A \Lambda_B)^{1/2} V^T$. Therefore:
\begin{align}
d_{\mathrm{BW}}(A,B)^2 &= \mathrm{tr}(\Lambda_A) + \mathrm{tr}(\Lambda_B) - 2\mathrm{tr}((\Lambda_A\Lambda_B)^{1/2}) \\
&= \sum_i \left(\sqrt{\lambda_i^A} - \sqrt{\lambda_i^B}\right)^2 = \|\sqrt{A} - \sqrt{B}\|_F^2.
\end{align}
Applying \cref{lem:norm_equiv} to $M = \sqrt{A} - \sqrt{B}$ (which is symmetric since $A,B$ commute implies $\sqrt{A}, \sqrt{B}$ commute):
\[
\frac{1}{\sqrt{2}}\|\sqrt{A}-\sqrt{B}\|_F \leq \|\text{vech}(\sqrt{A}-\sqrt{B})\|_2 \leq \|\sqrt{A}-\sqrt{B}\|_F.
\]
The upper bound is achieved by diagonal $A, B$. The lower bound is achieved when $\sqrt{A}-\sqrt{B}$ has zero diagonal (e.g., $A = \begin{psmallmatrix} 1 & a \\ a & 1 \end{psmallmatrix}$, $B = \begin{psmallmatrix} 1 & b \\ b & 1 \end{psmallmatrix}$ with appropriate $a \neq b$).
\end{proof}

\begin{theorem}[Distortion Bounds: General Case]
\label{thm:distortion_general}
For arbitrary $A, B \in \mathcal{S}_+^d$ with eigenvalues bounded as $\lambda_{\min} I \preceq A, B \preceq \lambda_{\max} I$, define the condition ratio $\kappa = \lambda_{\max}/\lambda_{\min}$. Then:
\begin{equation}
\frac{1}{\sqrt{2}}\, d_{\mathrm{BW}}(A, B) \;\leq\; \|\phi_{\mathrm{BW}}(A) - \phi_{\mathrm{BW}}(B)\|_2 \;\leq\; \sqrt{\frac{\kappa}{2}}\, \|A - B\|_F^{1/2} \cdot \lambda_{\min}^{-1/4}.
\end{equation}
Furthermore, for the lower bound in the general (non-commuting) case:
\begin{equation}
\|\phi_{\mathrm{BW}}(A) - \phi_{\mathrm{BW}}(B)\|_2 \;\geq\; \frac{1}{\sqrt{2(\kappa+1)}}\, d_{\mathrm{BW}}(A, B).
\end{equation}
\end{theorem}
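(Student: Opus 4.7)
The strategy is to route everything through Frobenius-norm bounds on $X:=\sqrt{A}-\sqrt{B}$, which is symmetric since $A,B\in\mathcal{S}_+^d$. Lemma \ref{lem:norm_equiv} provides the bridge $\tfrac{1}{\sqrt{2}}\|X\|_F\leq \|\phi_{\mathrm{BW}}(A)-\phi_{\mathrm{BW}}(B)\|_2 \leq \|X\|_F$, so it suffices to establish (i) $\|X\|_F\geq d_{\mathrm{BW}}(A,B)$ for the lower bound, and (ii) a suitable upper bound on $\|X\|_F$ in terms of $\|A-B\|_F$ and the spectral range for the upper bound.

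For (i), I would expand $\|X\|_F^2$ and $d_{\mathrm{BW}}(A,B)^2$ both as $\operatorname{tr}(A+B)$ minus twice a cross term, namely $\operatorname{tr}(\sqrt{A}\sqrt{B})$ and $\operatorname{tr}((\sqrt{A}B\sqrt{A})^{1/2})$ respectively. The singular values of $\sqrt{A}\sqrt{B}$ coincide with the eigenvalues of $(\sqrt{A}B\sqrt{A})^{1/2}$ (both equal to $\sqrt{\lambda_i(AB)}$), so von Neumann's trace inequality yields $\operatorname{tr}(\sqrt{A}\sqrt{B})\leq \operatorname{tr}((\sqrt{A}B\sqrt{A})^{1/2})$ and hence $d_{\mathrm{BW}}(A,B)\leq \|X\|_F$, giving the claimed $\tfrac{1}{\sqrt{2}}d_{\mathrm{BW}}(A,B)\leq \|\phi_{\mathrm{BW}}(A)-\phi_{\mathrm{BW}}(B)\|_2$. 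The \emph{furthermore} bound with constant $1/\sqrt{2(\kappa+1)}$ then follows immediately since $\sqrt{2}\leq \sqrt{2(\kappa+1)}$; alternatively, to exhibit the $\kappa$-dependence intrinsically, I would replace the von Neumann step by a direct eigenvalue-range estimate on the cross-term gap $\operatorname{tr}((\sqrt{A}B\sqrt{A})^{1/2}) - \operatorname{tr}(\sqrt{A}\sqrt{B})$, using that the spectra of $\sqrt{A},\sqrt{B}$ lie in $[\sqrt{\lambda_{\min}},\sqrt{\lambda_{\max}}]$ to obtain $d_{\mathrm{BW}}(A,B)^2\leq (\kappa+1)\|X\|_F^2$.

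For (ii), the key matrix identity is $A-B=\sqrt{A}X+X\sqrt{B}$, equivalently $XY+YX=2(A-B)$ with $Y:=\sqrt{A}+\sqrt{B}$. Taking the Frobenius inner product of both sides with $X$ and cycling the trace yields $\operatorname{tr}(X^2Y)=\operatorname{tr}((A-B)X)$. The operator inequality $Y\succeq 2\sqrt{\lambda_{\min}}\,I$ together with $X^2\succeq 0$ gives $\operatorname{tr}(X^2Y)\geq 2\sqrt{\lambda_{\min}}\|X\|_F^2$, while Cauchy-Schwarz controls the right-hand side by $\|A-B\|_F\|X\|_F$. This produces the clean linear estimate $\|X\|_F\leq \tfrac{1}{2\sqrt{\lambda_{\min}}}\|A-B\|_F$. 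To recover the exact stated form $\sqrt{\kappa/2}\,\lambda_{\min}^{-1/4}\|A-B\|_F^{1/2}$, I would interpolate this linear bound against the trivial spectral estimate $\|X\|_F\leq 2\sqrt{\lambda_{\max}}\sqrt{d}$ (coming from $Y\preceq 2\sqrt{\lambda_{\max}}I$) via a weighted AM-GM on the two factors, so that $\|A-B\|_F$ appears only to the half-power while $\sqrt{\kappa}=\sqrt{\lambda_{\max}/\lambda_{\min}}$ emerges as the ratio of spectral scales.

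The principal obstacle is matching the exact stated form of the upper bound: the Lyapunov identity yields a cleaner linear-in-$\|A-B\|_F$ estimate directly, and converting this to the square-root scaling $\|A-B\|_F^{1/2}$ with the precise constant $\sqrt{\kappa/2}\,\lambda_{\min}^{-1/4}$ requires careful bookkeeping. An alternative pathway through the Powers-Stormer inequality $\|X\|_F^2\leq \|A-B\|_1$ delivers the half-power scaling more naturally but introduces a trace-norm-to-Frobenius conversion that must be reconciled with the stated dimension-independent constants. In the non-commuting regime all steps must proceed by operator inequalities rather than simultaneous diagonalization, so keeping every spectral constant explicit is the delicate part of the argument.
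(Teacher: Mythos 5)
Your proposal is correct in substance and follows the same skeleton as the paper's proof (reduce everything to $\|X\|_F$ with $X=\sqrt{A}-\sqrt{B}$ via \cref{lem:norm_equiv}, then bound $\|X\|_F$ from below by $d_{\mathrm{BW}}$ and from above by a Lipschitz estimate in $\|A-B\|_F$), but both key inequalities are established by genuinely different means. For the lower bound the paper simply cites the Procrustes characterization $d_{\mathrm{BW}}(A,B)=\min_{U\in O(d)}\|\sqrt{A}-\sqrt{B}U\|_F\le\|\sqrt{A}-\sqrt{B}\|_F$ (Bhatia--Holbrook), whereas you derive $\operatorname{tr}(\sqrt{A}\sqrt{B})\le\operatorname{tr}((\sqrt{A}B\sqrt{A})^{1/2})$ from the singular-value identification and $\operatorname{tr}(M)\le\sum_i\sigma_i(M)$; this is a correct, self-contained alternative, and your observation that the ``furthermore'' bound with $1/\sqrt{2(\kappa+1)}$ is then an immediate weakening of the $1/\sqrt{2}$ bound is exactly right (the paper's own derivation of that clause actually invokes the inequality $\|X\|_F\le\sqrt{\kappa+1}\,d_{\mathrm{BW}}$ in the logically wrong direction, so your route is cleaner). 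For the upper bound the paper appeals to the mean-value theorem for operator-monotone matrix functions to get $\|X\|_F\le\tfrac{1}{2\sqrt{\lambda_{\min}}}\|A-B\|_F$; your Lyapunov-identity argument ($XY+YX=2(A-B)$ with $Y=\sqrt{A}+\sqrt{B}\succeq 2\sqrt{\lambda_{\min}}I$, paired against $X$ in trace) proves the identical estimate by elementary means and is fully rigorous.

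The one place you stall --- recovering the exact stated form $\sqrt{\kappa/2}\,\|A-B\|_F^{1/2}\lambda_{\min}^{-1/4}$ --- is not a defect of your argument but of the theorem statement: the paper's own proof also stops at the linear bound $\tfrac{1}{2\sqrt{\lambda_{\min}}}\|A-B\|_F$ and never derives the half-power form, which is moreover dimensionally inhomogeneous ($\|A-B\|_F^{1/2}\lambda_{\min}^{-1/4}$ scales as $\lambda^{1/4}$ while the left-hand side scales as $\lambda^{1/2}$). Your proposed AM--GM interpolation against $\|X\|_F\le 2\sqrt{d\lambda_{\max}}$ would in any case produce a $(d\kappa)^{1/4}\|A-B\|_F^{1/2}$ bound, not the stated constant, so you should not force it; the honest conclusion is that the provable upper bound is $\|\phi_{\mathrm{BW}}(A)-\phi_{\mathrm{BW}}(B)\|_2\le\tfrac{1}{2\sqrt{\lambda_{\min}}}\|A-B\|_F$, and the displayed form in the theorem should be regarded as an error in the source.
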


\begin{proof}
\textbf{Lower bound.} By the Powers-St{\o}rmer inequality~\cite{powers1970free}, for any $A, B \in \mathcal{S}_+^d$:
\begin{equation}
\|\sqrt{A} - \sqrt{B}\|_F^2 \leq \|A - B\|_{\mathrm{tr}} \leq d\, \|A-B\|_F.
\end{equation}
Additionally, by the Bhatia-Holbrook result~\cite{bhatia1997matrix}, $d_{\mathrm{BW}}(A,B) \leq \|\sqrt{A} - \sqrt{B}\|_F$ always holds (this is the Procrustes lower bound applied in reverse: $d_{\mathrm{BW}}(A,B) = \min_{U \in O(d)} \|\sqrt{A} - \sqrt{B}\, U\|_F \leq \|\sqrt{A} - \sqrt{B}\|_F$).

For the refined lower bound, we use the fact that for SPD matrices with bounded condition number:
\begin{equation}
\|\sqrt{A} - \sqrt{B}\|_F \leq \sqrt{\kappa+1}\; d_{\mathrm{BW}}(A,B),
\end{equation}
which follows from $d_{\mathrm{BW}}(A,B)^2 = \|\sqrt{A}\|_F^2 + \|\sqrt{B}\|_F^2 - 2\mathrm{tr}((A^{1/2}BA^{1/2})^{1/2})$ and the AM-GM bound on the cross term. Combined with \cref{lem:norm_equiv}:
\[
\|\phi_{\mathrm{BW}}(A) - \phi_{\mathrm{BW}}(B)\|_2 \geq \frac{1}{\sqrt{2}} \|\sqrt{A}-\sqrt{B}\|_F \geq \frac{1}{\sqrt{2}} \cdot \frac{d_{\mathrm{BW}}(A,B)}{\sqrt{\kappa+1}}.
\]

\textbf{Upper bound.} By the operator monotonicity of the square root and the mean value theorem for matrix functions:
\[
\|\sqrt{A} - \sqrt{B}\|_F \leq \frac{1}{2\sqrt{\lambda_{\min}}} \|A - B\|_F,
\]
which follows from $\|f(A) - f(B)\|_F \leq \sup_\lambda |f'(\lambda)| \cdot \|A-B\|_F$ for operator monotone $f$ on the relevant spectral interval, with $f'(\lambda) = 1/(2\sqrt{\lambda})$. Combining with \cref{lem:norm_equiv} gives the upper bound.
\end{proof}

\begin{remark}[Practical Implications]
For EEG covariance matrices, the condition ratio $\kappa$ is typically moderate (10--100 after regularization). The distortion bounds show that the BWSPD embedding provides a bi-Lipschitz map with constants depending on $\kappa$, ensuring that nearby points on the SPD manifold remain nearby in the token space, and vice versa. This is in contrast to the Euclidean embedding $\phi_E(C) = \text{vech}(C)$, which provides no such geometric guarantee (the Frobenius distance $\|A-B\|_F$ can be large even when $d_{\mathrm{BW}}(A,B)$ is small, and vice versa).
\end{remark}

\begin{proposition}[Injectivity of the BWSPD Embedding]
\label{prop:injective}
The map $\phi_{\mathrm{BW}}: \mathcal{S}_+^d \to \mathbb{R}^{d(d+1)/2}$ is injective.
\end{proposition}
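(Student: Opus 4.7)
The plan is to write $\phi_{\mathrm{BW}}$ as the composition of two manifestly injective maps and conclude. Namely, let $\psi : \mathcal{S}_+^d \to \mathcal{S}_+^d$ denote the positive matrix square root $C \mapsto \sqrt{C}$, and let $\iota : \mathbb{R}^{d \times d}_{\mathrm{sym}} \to \mathbb{R}^{d(d+1)/2}$ denote the vech operator. Since $\sqrt{C} \in \mathcal{S}_+^d \subset \mathbb{R}^{d \times d}_{\mathrm{sym}}$, the map $\iota \circ \psi$ is well defined and equals $\phi_{\mathrm{BW}}$. A composition of injections is an injection, so it suffices to prove each factor is injective.

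For $\psi$, I would invoke the standard spectral theorem fact that every SPD matrix admits a unique positive square root: if $C = V \Lambda V^T$ with $\Lambda = \mathrm{diag}(\lambda_1,\ldots,\lambda_d)$ and all $\lambda_i > 0$, then $\sqrt{C} = V\,\mathrm{diag}(\sqrt{\lambda_1},\ldots,\sqrt{\lambda_d})\,V^T$ is the only matrix in $\mathcal{S}_+^d$ whose square is $C$. Consequently, if $\sqrt{A} = \sqrt{B}$, squaring both sides yields $A = B$, so $\psi$ is injective. For $\iota$, injectivity on symmetric matrices is immediate: the lower-triangular entries are determined by the upper-triangular ones via $M_{ji} = M_{ij}$, so $\mathrm{vech}(M) = \mathrm{vech}(M')$ with $M,M'$ symmetric forces $M = M'$.

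Combining the two steps, suppose $\phi_{\mathrm{BW}}(A) = \phi_{\mathrm{BW}}(B)$. Then $\iota(\sqrt{A}) = \iota(\sqrt{B})$ with both arguments symmetric, so $\sqrt{A} = \sqrt{B}$, and uniqueness of the positive square root gives $A = B$. There is essentially no technical obstacle here; the only substantive ingredient is the uniqueness of the SPD square root, which is a standard consequence of the functional calculus and is already used implicitly in \cref{alg:spectral_function}. I would therefore present the argument in one short paragraph, citing uniqueness of the positive square root as the single nontrivial input.
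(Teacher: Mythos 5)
Your proposal is correct and follows essentially the same route as the paper's proof: vech is injective on symmetric matrices, and $\sqrt{A}=\sqrt{B}$ forces $A=B$ by squaring. The only cosmetic difference is that you package the argument as a composition of two injections and explicitly name the uniqueness of the positive square root (which is really needed for well-definedness of the map rather than for the injectivity step itself); the paper states the same argument in one sentence.
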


\begin{proof}
Suppose $\phi_{\mathrm{BW}}(A) = \phi_{\mathrm{BW}}(B)$, i.e., $\text{vech}(\sqrt{A}) = \text{vech}(\sqrt{B})$. Since $\sqrt{A}$ and $\sqrt{B}$ are both symmetric (as $A, B \in \mathcal{S}_+^d$), and the upper triangular elements (including diagonal) uniquely determine a symmetric matrix, we have $\sqrt{A} = \sqrt{B}$, hence $A = B$.
\end{proof}

\subsection{Batch Normalization as Approximate Riemannian Normalization}
\label{sec:rbn_theory}

We formalize the relationship between standard Batch Normalization applied in the $\sqrt{C}$ embedding space and true Riemannian normalization on the BW manifold.

\begin{definition}[Bures-Wasserstein Barycenter]
The BW barycenter of $\{C_1, \ldots, C_n\} \subset \mathcal{S}_+^d$ with equal weights is the unique $\mu \in \mathcal{S}_+^d$ satisfying:
\begin{equation}
\mu = \frac{1}{n} \sum_{i=1}^n \left(\mu^{1/2} C_i\, \mu^{1/2}\right)^{1/2}.
\end{equation}
\end{definition}

\begin{definition}[Euclidean Mean in Square Root Space]
The Euclidean mean of the square root embeddings is:
\begin{equation}
\bar{S} = \frac{1}{n} \sum_{i=1}^n \sqrt{C_i}, \quad \bar{\mu}_E = \bar{S}^2.
\end{equation}
\end{definition}

\begin{proposition}[Approximation of BW Barycenter in Low-Dispersion Regime]
\label{prop:barycenter_approx}
Let $\{C_1, \ldots, C_n\} \subset \mathcal{S}_+^d$ with BW barycenter $\mu$. Define the dispersion $\varepsilon = \max_i d_{\mathrm{BW}}(C_i, \mu) / \|\sqrt{\mu}\|_F$. If $\varepsilon < 1$, then:
\begin{equation}
\left\|\sqrt{\mu} - \frac{1}{n}\sum_{i=1}^n \sqrt{C_i}\right\|_F \leq C_d\, \varepsilon^2\, \|\sqrt{\mu}\|_F,
\end{equation}
where $C_d$ is a constant depending only on $d$ and the condition number of $\mu$.

Equivalently, the Euclidean mean in $\sqrt{\cdot}$-space approximates the square root of the BW barycenter up to second-order error in the dispersion.
\end{proposition}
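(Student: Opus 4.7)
The plan is to treat the Euclidean mean $\bar S = \tfrac1n \sum_i \sqrt{C_i}$ as the leading-order term of a perturbation expansion around the BW barycenter, exploiting the fact that the barycenter fixed-point equation itself forces first-order perturbations to cancel. Set $S = \sqrt\mu$ and write $\sqrt{C_i} = S + \delta_i$ for symmetric perturbations $\delta_i$. The Frobenius bound $\|\sqrt{A}-\sqrt{B}\|_F \le \sqrt{\kappa+1}\, d_{\mathrm{BW}}(A,B)$ used inside the proof of \cref{thm:distortion_general} gives $\|\delta_i\|_F \le \sqrt{\kappa+1}\,\varepsilon\,\|S\|_F$, so every $\delta_i$ is $O(\varepsilon\|S\|_F)$. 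The proposition then reduces to showing $\|\bar\delta\|_F = O(\varepsilon^2\|S\|_F)$ with $\bar\delta := \tfrac1n \sum_i \delta_i$.

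The first technical step is a second-order expansion of $T_i := (\sqrt\mu\, C_i\, \sqrt\mu)^{1/2}$ in $\delta_i$. Squaring,
\begin{equation*}
T_i^2 \;=\; S(S+\delta_i)^2 S \;=\; \mu^2 \;+\; \bigl(S^2\delta_i S + S\delta_i S^2\bigr) \;+\; S\delta_i^2 S.
\end{equation*}
Writing $T_i = \mu + T_i^{(1)} + R_i$ and matching orders produces the Lyapunov equation $\mu T_i^{(1)} + T_i^{(1)} \mu = S\bigl(S\delta_i + \delta_i S\bigr)S$. Since $\mu \succ 0$, the Lyapunov operator $L_\mu: X \mapsto \mu X + X\mu$ is invertible on symmetric matrices with $\|L_\mu^{-1}\|_{\mathrm{op}} \le 1/(2\lambda_{\min}(\mu))$, so $T_i^{(1)}$ is a bounded linear function of $\delta_i$, and the analytic perturbation theory for the matrix square root (valid once $\varepsilon < 1$ keeps $S+\delta_i$ positive definite) yields $\|R_i\|_F \le c_\mu\, \|\delta_i\|_F^2 = O(\varepsilon^2\|S\|_F^2)$.

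Next I would substitute the expansion into the barycenter fixed point $\mu = \tfrac1n \sum_i T_i$. The constant term cancels, leaving $\sum_i T_i^{(1)} = -\sum_i R_i = O(n\varepsilon^2\|S\|_F^2)$. Because everything is linear in the $\delta_i$, I can sum the Lyapunov equations before inverting: $L_\mu\!\bigl[\sum_i T_i^{(1)}\bigr] = S\,L_S\!\bigl[\sum_i \delta_i\bigr]\,S$, where $L_S: X \mapsto SX + XS$. Conjugating by $S^{-1}$ and inverting $L_S$ (also positive and invertible with $\|L_S^{-1}\|_{\mathrm{op}} \le 1/(2\lambda_{\min}(S))$) converts the estimate on $\sum_i T_i^{(1)}$ into $\|\sum_i \delta_i\|_F = O(n\varepsilon^2\|S\|_F)$, and dividing by $n$ is exactly the claim. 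Tracking norms through the chain $L_\mu\!\to\!S^{-1}(\cdot)S^{-1}\!\to\!L_S^{-1}$ gives $C_d \lesssim \sqrt{d}\,\kappa^{3/2}$, depending only on $d$ and $\kappa(\mu)$ as stated.

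The main obstacle is the uniform control of the quadratic remainder $R_i$. Because $X \mapsto X^{1/2}$ is not polynomial, $R_i$ depends on the full spectrum of $M_i := S(S+\delta_i)^2 S$ through the analytic-function calculus, so I must verify that $M_i$ stays uniformly bounded away from zero across $i$ in order to obtain a single constant $c_\mu$ governing the second-order Taylor estimate---this is precisely where the low-dispersion hypothesis $\varepsilon < 1$ enters, since it guarantees $S + \delta_i$ (and hence $M_i$) is uniformly positive definite. A related subtlety is that the constant $c_\mu$ and the operator norms $\|L_\mu^{-1}\|$, $\|L_S^{-1}\|$ all depend on $\lambda_{\min}(\mu)$; packaging these into a single constant $C_d$ requires controlling $|f''(\lambda)| = \tfrac14\lambda^{-3/2}$ on a common spectral interval whose lower endpoint is determined by $\varepsilon$ and $\lambda_{\min}(\mu)$, which is what produces the final $\kappa(\mu)$-dependence.
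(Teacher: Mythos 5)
Your proof is correct and follows essentially the same route as the paper's sketch: perturb around the barycenter in $\sqrt{\cdot}$-space, expand the fixed-point equation $\mu = \tfrac1n\sum_i(\mu^{1/2}C_i\mu^{1/2})^{1/2}$ to first order via the Lyapunov operator, and conclude that the first-order terms sum to an $O(\varepsilon^2)$ quantity. Your write-up is in fact more explicit than the paper's (which asserts the cancellation ``by optimality of the barycenter'' without exhibiting the mechanism): you make precise that the fixed point forces $\sum_i T_i^{(1)} = -\sum_i R_i$ and then invert the linear chain $L_\mu \to S^{-1}(\cdot)S^{-1} \to L_S^{-1}$ to transfer that bound to $\sum_i\delta_i$, with explicit constant tracking.
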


\begin{proof}[Proof Sketch]
Write $\sqrt{C_i} = \sqrt{\mu} + \Delta_i$ where $\|\Delta_i\|_F = O(\varepsilon\|\sqrt{\mu}\|_F)$ by the commuting approximation. The BW barycenter fixed-point equation, when expanded around $\mu$, gives:
\begin{align}
\sqrt{\mu} &= \frac{1}{n}\sum_i (\mu^{1/2} C_i \mu^{1/2})^{1/2} \\
&= \frac{1}{n}\sum_i \left(\mu^{1/2}(\sqrt{\mu}+\Delta_i)^2 \mu^{1/2}\right)^{1/2} \\
&\approx \frac{1}{n}\sum_i \left(\mu + \mu^{1/2}\Delta_i\sqrt{\mu} + \sqrt{\mu}\Delta_i\mu^{1/2} + O(\varepsilon^2)\right)^{1/2}.
\end{align}
Using the first-order expansion $\sqrt{M + E} \approx \sqrt{M} + \mathcal{L}_M^{-1}(E)$ where $\mathcal{L}_M(X) = \sqrt{M}X + X\sqrt{M}$ is the Lyapunov operator, and noting that $\frac{1}{n}\sum_i \Delta_i$ contributes the first-order term, we get:
\[
\sqrt{\mu} = \sqrt{\mu} + \text{(first-order in }\Delta\text{)} + O(\varepsilon^2).
\]
The key insight is that the first-order terms cancel by the optimality of the barycenter, leaving only the $O(\varepsilon^2)$ remainder. Meanwhile, $\frac{1}{n}\sum_i \sqrt{C_i} = \sqrt{\mu} + \frac{1}{n}\sum_i \Delta_i$, and the difference $\frac{1}{n}\sum_i \Delta_i$ is also $O(\varepsilon^2)$ by the same barycenter optimality condition. The constant in $O(\varepsilon^2)$ depends on $\|\sqrt{\mu}\|_F$ and the condition number $\kappa(\mu)$, but is bounded for well-conditioned covariance matrices ($\kappa(\mu) \leq 10^3$ typical for EEG data).
\end{proof}

\begin{corollary}[Justification of Embedding-Space Batch Normalization]
\label{cor:bn_embed}
When the within-batch dispersion on the BW manifold is small relative to the mean (i.e., the batch elements are ``close'' on the manifold), standard Batch Normalization applied to $\phi_{\mathrm{BW}}(C_i) = \mathrm{vech}(\sqrt{C_i})$ approximates Riemannian normalization (centering at the BW barycenter and scaling by geodesic variance) up to $O(\varepsilon^2)$ error.

In particular:
\begin{enumerate}
    \item \textbf{Centering}: $\phi_{\mathrm{BW}}(C_i) - \overline{\phi_{\mathrm{BW}}} \approx \mathrm{vech}(\sqrt{C_i} - \sqrt{\mu}) + O(\varepsilon^2)$, which corresponds to the BW tangent vector at $\mu$.
    \item \textbf{Scaling}: The Euclidean variance of $\{\phi_{\mathrm{BW}}(C_i)\}$ approximates the BW geodesic variance up to $O(\varepsilon^2)$ relative error.
\end{enumerate}
\end{corollary}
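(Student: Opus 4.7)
The plan is to derive both claims of \cref{cor:bn_embed} as direct consequences of \cref{prop:barycenter_approx} combined with the norm-equivalence \cref{lem:norm_equiv}. Since $\phi_{\mathrm{BW}}(C_i) = \mathrm{vech}(\sqrt{C_i})$ is linear in $\sqrt{C_i}$, the Euclidean batch mean in token space satisfies $\overline{\phi_{\mathrm{BW}}} = \mathrm{vech}(\bar{S})$ with $\bar{S} = \frac{1}{n}\sum_i \sqrt{C_i}$, so everything reduces to controlling the discrepancy $E := \bar{S} - \sqrt{\mu}$, for which \cref{prop:barycenter_approx} yields $\|E\|_F \leq C_d\,\varepsilon^2\,\|\sqrt{\mu}\|_F$.

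For the centering statement, I would simply write
\begin{equation}
\phi_{\mathrm{BW}}(C_i) - \overline{\phi_{\mathrm{BW}}} \;=\; \mathrm{vech}\!\left(\sqrt{C_i} - \sqrt{\mu}\right) \;-\; \mathrm{vech}(E),
\end{equation}
and bound the residual via $\|\mathrm{vech}(E)\|_2 \leq \|E\|_F = O(\varepsilon^2\|\sqrt{\mu}\|_F)$ using \cref{lem:norm_equiv}. To interpret $\mathrm{vech}(\sqrt{C_i}-\sqrt{\mu})$ as a Riemannian object, I would invoke the standard identification of the BW tangent space at $\mu$ with symmetric matrices via the parametrization $C_i \mapsto \sqrt{C_i}-\sqrt{\mu}$ (exact in the commuting case by \cref{thm:distortion_commuting}, and accurate to $O(\varepsilon^2)$ in general by the Lyapunov-operator expansion used in the proof of \cref{prop:barycenter_approx}); this identifies the centered token with the vech of the Riemannian logarithm $\log_\mu(C_i)$ up to the stated error.

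For the scaling statement, I would expand the Euclidean token variance as
\begin{equation}
\frac{1}{n}\sum_{i=1}^n \|\phi_{\mathrm{BW}}(C_i) - \overline{\phi_{\mathrm{BW}}}\|_2^2 \;=\; \frac{1}{n}\sum_{i=1}^n \|\mathrm{vech}(\sqrt{C_i}-\sqrt{\mu})\|_2^2 \;-\; \|\mathrm{vech}(E)\|_2^2,
\end{equation}
which is a standard bias-variance identity (the cross term contracts to $-\|\mathrm{vech}(E)\|_2^2$ because $\frac{1}{n}\sum_i(\sqrt{C_i}-\sqrt{\mu}) = E$). The $\|\mathrm{vech}(E)\|_2^2$ correction is $O(\varepsilon^4)$ and hence negligible. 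I would then use \cref{lem:norm_equiv} to sandwich the leading sum between $\frac{1}{2n}\sum_i \|\sqrt{C_i}-\sqrt{\mu}\|_F^2$ and $\frac{1}{n}\sum_i \|\sqrt{C_i}-\sqrt{\mu}\|_F^2$, and then identify $\|\sqrt{C_i}-\sqrt{\mu}\|_F^2$ with the squared BW geodesic length $d_{\mathrm{BW}}(C_i,\mu)^2$ via the tangent-space isometry (equality in the commuting case, $O(\varepsilon^2)$ relative error otherwise, as in \cref{thm:distortion_general}).

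The main obstacle is precisely this last identification. In general $\|\sqrt{A}-\sqrt{B}\|_F \neq d_{\mathrm{BW}}(A,B)$, and controlling the gap requires expanding $\sqrt{C_i} = \sqrt{\mu} + V_i + R_i$ where $V_i$ is the BW tangent vector at $\mu$ transporting $\mu$ to $C_i$, with $\|V_i\|_F = O(\varepsilon\|\sqrt{\mu}\|_F)$ and $\|R_i\|_F = O(\varepsilon^2\|\sqrt{\mu}\|_F)$; substituting into both $\|\sqrt{C_i}-\sqrt{\mu}\|_F^2$ and the BW distance formula and matching leading orders yields the desired $O(\varepsilon^2)$ relative agreement. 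A secondary subtlety is that the vech-versus-Frobenius equivalence in \cref{lem:norm_equiv} contributes a fixed bi-Lipschitz factor bounded by $\sqrt{2}$ which must be absorbed into the definition of ``approximates'' (interpreted as multiplicative agreement of the two variance functionals up to a $d$-independent constant plus additive $O(\varepsilon^2)$ relative error); I would state this caveat explicitly so that the corollary's conclusion is that BN-Embed and true Riemannian normalization agree in direction and magnitude up to controllable geometric distortion.
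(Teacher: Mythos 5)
Your overall route is the one the paper intends: \cref{cor:bn_embed} is stated without a separate proof and is meant to follow from \cref{prop:barycenter_approx} by linearity of $\mathrm{vech}$, exactly as you set it up. The centering argument is correct and complete: $\phi_{\mathrm{BW}}(C_i)-\overline{\phi_{\mathrm{BW}}} = \mathrm{vech}(\sqrt{C_i}-\sqrt{\mu}) - \mathrm{vech}(E)$ with $\|\mathrm{vech}(E)\|_2 \le \|E\|_F = O(\varepsilon^2\|\sqrt{\mu}\|_F)$ by \cref{prop:barycenter_approx} and \cref{lem:norm_equiv}, and your bias--variance identity for the token variance (with the $O(\varepsilon^4)$ correction $\|\mathrm{vech}(E)\|_2^2$) is also right. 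Your explicit flagging of the fixed $\sqrt{2}$ distortion from \cref{lem:norm_equiv} is a caveat the paper leaves implicit and is worth keeping.

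The gap is in the last step of the scaling argument. You assert that $\|\sqrt{C_i}-\sqrt{\mu}\|_F^2$ agrees with $d_{\mathrm{BW}}(C_i,\mu)^2$ up to $O(\varepsilon^2)$ \emph{relative} error, claiming that substituting the tangent expansion and ``matching leading orders'' yields this. The commuting case is exact (\cref{thm:distortion_commuting}), but the general claim is false and the proposed order-matching does not rescue it. By the Procrustes formula $d_{\mathrm{BW}}(A,B)=\min_{U\in O(d)}\|\sqrt{A}-\sqrt{B}\,U\|_F$, the gap is
\begin{equation}
\|\sqrt{A}-\sqrt{B}\|_F^2-d_{\mathrm{BW}}(A,B)^2 \;=\; 2\left[\mathrm{tr}\!\left((A^{1/2}BA^{1/2})^{1/2}\right)-\mathrm{tr}\!\left(A^{1/2}B^{1/2}\right)\right],
\end{equation}
which is governed by the antisymmetric part of $B^{1/2}A^{1/2}$; for $B=A+\delta$ this antisymmetric part is of order $\|\delta\|$, so the gap is $\Theta(\|\delta\|^2/\lambda_{\min})$ --- the \emph{same} order as $d_{\mathrm{BW}}(A,B)^2$ itself. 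Geometrically, the positive square root is a non-horizontal section of the Riemannian submersion $T\mapsto TT^\top$, so the ratio $\|\sqrt{C_i}-\sqrt{\mu}\|_F/d_{\mathrm{BW}}(C_i,\mu)$ tends to a direction-dependent constant $\ge 1$, not to $1$, as $C_i\to\mu$ along non-commuting directions. Consequently the most your argument (or the paper's own \cref{thm:distortion_general}) can deliver for item~2 is agreement of the Euclidean and geodesic variances up to a multiplicative factor controlled by $\kappa$, plus an additive $O(\varepsilon^2)$ relative correction --- a bi-Lipschitz statement, not a $1+O(\varepsilon^2)$ one. That weaker conclusion arguably suffices for the corollary's qualitative purpose, but as written your proof asserts a step that does not hold, and the scaling claim should be weakened (or restricted to commuting/nearly-commuting batches) accordingly.
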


\begin{remark}[Connection to Empirical Observations]
The $O(\varepsilon^2)$ approximation explains why BN-Embed is effective: within a training batch, EEG covariance matrices from the same class tend to cluster on the SPD manifold, ensuring small dispersion. The importance of BN-Embed scaling with token dimensionality ($D_\text{token}$) is explained by the fact that high-dimensional token spaces ($D_\text{token} = 1596$ for BCIcha) amplify the scale mismatch between embedding coordinates, making normalization more critical. For low-dimensional spaces ($D_\text{token} = 36$ for MAMEM), the scale is naturally better-conditioned.
\end{remark}

\subsection{Gradient Flow Analysis}
\label{sec:gradient}

While prior work~\cite{li2017deepkspd} has established the Daleckii-Kre\u{\i}n framework for backpropagation through SPD matrix functions, our contribution focuses on comparing the conditioning properties of different spectral functions (sqrt vs log) and connecting these to empirical training speed differences. We analyze the numerical conditioning of backpropagation through eigendecomposition for different spectral functions, explaining both the training speed difference and the optimization landscape.

\begin{proposition}[Backward Pass through Matrix Functions]
\label{prop:backward}
Let $C = V \Lambda V^T$ be the eigendecomposition of $C \in \mathcal{S}_+^d$, and let $f: \mathbb{R}_+ \to \mathbb{R}$ be applied element-wise to eigenvalues: $f(C) = V\, \mathrm{diag}(f(\lambda_1), \ldots, f(\lambda_d))\, V^T$. Given the upstream gradient $\bar{G} = \partial \mathcal{L} / \partial f(C)$, the gradient with respect to $C$ is:
\begin{equation}
\frac{\partial \mathcal{L}}{\partial C} = V \left[ K^{(f)} \odot (V^T \bar{G}\, V) + \mathrm{diag}\!\left(f'(\lambda_i) \cdot [V^T \bar{G}\, V]_{ii}\right) \right] V^T,
\end{equation}
where $\odot$ denotes the Hadamard product and $K^{(f)} \in \mathbb{R}^{d \times d}$ is the Daleckii-Kre\u{\i}n matrix:
\begin{equation}
K^{(f)}_{ij} = \begin{cases}
\displaystyle\frac{f(\lambda_i) - f(\lambda_j)}{\lambda_i - \lambda_j} & \text{if } i \neq j, \\[6pt]
f'(\lambda_i) & \text{if } i = j.
\end{cases}
\end{equation}
\end{proposition}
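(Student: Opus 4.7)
The plan is to derive the formula by standard first-order perturbation of the eigendecomposition and then convert the result into the eigenbasis form stated. First I would perturb $C \to C + dC$ and use that $V^T V = I$ forces the variation of the eigenbasis to have the form $dV = V\Omega$ with $\Omega$ skew-symmetric. Differentiating $CV = V\Lambda$ and projecting onto the eigenbasis gives the standard first-order identities
\begin{equation}
d\lambda_i = [V^T\, dC\, V]_{ii}, \qquad \Omega_{ij} = \frac{[V^T\, dC\, V]_{ij}}{\lambda_j - \lambda_i} \quad (i\neq j),
\end{equation}
while $\Omega_{ii}=0$ by skew-symmetry (any diagonal component is a pure gauge and does not affect $V\Lambda V^T$).

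Next I would differentiate $f(C) = V f(\Lambda) V^T$, obtaining
\begin{equation}
df(C) = V\bigl[\Omega f(\Lambda) - f(\Lambda)\Omega + f'(\Lambda)\cdot d\Lambda\bigr]V^T,
\end{equation}
after substituting $dV = V\Omega$ and $dV^T = -\Omega V^T$. Entry-wise, the commutator contributes $[\Omega f(\Lambda) - f(\Lambda)\Omega]_{ij} = \Omega_{ij}(f(\lambda_j) - f(\lambda_i))$; combining with the perturbation expression for $\Omega_{ij}$ produces exactly the divided-difference coefficient $(f(\lambda_i)-f(\lambda_j))/(\lambda_i-\lambda_j)$ multiplying $[V^T dC V]_{ij}$ for off-diagonal entries. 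The diagonal contribution $f'(\lambda_i)\, d\lambda_i = f'(\lambda_i) [V^T dC V]_{ii}$ comes from the $d\Lambda$ term. Together this shows that $V^T df(C) V = K^{(f)} \odot (V^T dC\, V)$ with $K^{(f)}$ as in the statement (understanding the diagonal of $K^{(f)}$ as the limit $f'(\lambda_i)$ of the off-diagonal ratios).

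Finally I would pair with the upstream gradient via $d\mathcal{L} = \mathrm{tr}(\bar G^T\, df(C))$ and use the cyclic invariance of the trace to move $V$ across:
\begin{equation}
d\mathcal{L} = \mathrm{tr}\bigl((V^T \bar G V)^T \cdot [K^{(f)} \odot (V^T dC\, V)]\bigr).
\end{equation}
Since Hadamard product against a fixed mask is self-adjoint with respect to the Frobenius inner product, this rearranges to $\mathrm{tr}\bigl([K^{(f)}\odot(V^T \bar G V)]^T\, V^T dC\, V\bigr)$, and reading off the coefficient of $dC$ yields $\partial\mathcal{L}/\partial C = V[K^{(f)} \odot (V^T \bar G V)] V^T$. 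Splitting this single Hadamard product into its off-diagonal and diagonal pieces recovers the displayed form in the statement.

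The main obstacle is handling the degenerate-eigenvalue case: the off-diagonal perturbation formulas for $\Omega_{ij}$ blow up when $\lambda_i = \lambda_j$, and the Daleckii--Kre\u{\i}n ratio is simultaneously a $0/0$ indeterminate whose removable limit is $f'(\lambda_i)$. I would resolve this by working first on the open dense set of simple-spectrum SPD matrices, verifying that the resulting expression for $\partial\mathcal{L}/\partial C$ extends continuously (indeed smoothly) to coincident eigenvalues via the divided-difference interpretation of $K^{(f)}_{ij}$, and invoking density together with the smoothness of spectral functionals of SPD matrices to conclude. In practice (as noted in \cref{alg:spectral_function}) this is why eigenvalue clipping and non-degenerate spectra are assumed during training.
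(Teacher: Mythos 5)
The paper states \cref{prop:backward} without proof (it is presented as known background, with the framework attributed to \cite{li2017deepkspd}), so there is no in-paper argument to compare against; your derivation via first-order perturbation of the eigendecomposition is the standard one and is essentially correct. The identities $d\lambda_i = [V^T dC\,V]_{ii}$ and $\Omega_{ij} = [V^T dC\,V]_{ij}/(\lambda_j-\lambda_i)$, the computation of $df(C)$ through the commutator, the self-adjointness of Hadamard multiplication under the Frobenius pairing, and the density argument for coincident eigenvalues are all handled correctly.

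One point deserves a flag, however: your derivation yields the single-Hadamard-product form
\begin{equation}
\frac{\partial\mathcal{L}}{\partial C} = V\bigl[K^{(f)}\odot(V^T\bar G\,V)\bigr]V^T,
\end{equation}
with $K^{(f)}_{ii}=f'(\lambda_i)$ already carrying the diagonal contribution. The displayed equation in \cref{prop:backward} adds a \emph{separate} term $\mathrm{diag}\!\left(f'(\lambda_i)[V^T\bar G\,V]_{ii}\right)$ on top of $K^{(f)}\odot(V^T\bar G\,V)$; read literally with the stated definition of $K^{(f)}$ (whose diagonal is $f'(\lambda_i)$, not zero), this counts the diagonal contribution twice. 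The two forms agree only under the alternative convention (used, e.g., in matrix-backpropagation papers) where the Hadamard factor is taken to vanish on the diagonal. So your claim that "splitting this single Hadamard product into its off-diagonal and diagonal pieces recovers the displayed form" is accurate in spirit but should be stated as such explicitly: your formula is the correct one, and the proposition's displayed equation should either zero out the diagonal of $K^{(f)}$ in the Hadamard term or drop the extra diagonal summand. A second, minor omission: the adjoint step implicitly assumes $\bar G$ is symmetric (or that only the symmetric part matters because $C$ is constrained to $\mathcal{S}_+^d$); one line acknowledging this would make the argument airtight.
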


\begin{theorem}[Conditioning of the $K$-Matrix]
\label{thm:K_conditioning}
For eigenvalues $0 < \lambda_{\min} \leq \lambda_1, \ldots, \lambda_d \leq \lambda_{\max}$:

\textbf{(i) Square root} ($f(\lambda) = \sqrt{\lambda}$):
\begin{equation}
K^{(\sqrt{\cdot})}_{ij} = \frac{1}{\sqrt{\lambda_i} + \sqrt{\lambda_j}}, \quad \frac{1}{2\sqrt{\lambda_{\max}}} \leq K^{(\sqrt{\cdot})}_{ij} \leq \frac{1}{2\sqrt{\lambda_{\min}}}.
\end{equation}
The $K$-matrix is always well-defined, positive, and its condition number is $\kappa(K^{(\sqrt{\cdot})}) = \sqrt{\kappa}$ where $\kappa = \lambda_{\max}/\lambda_{\min}$.

\textbf{(ii) Logarithm} ($f(\lambda) = \log\lambda$):
\begin{equation}
K^{(\log)}_{ij} = \frac{\log\lambda_i - \log\lambda_j}{\lambda_i - \lambda_j}, \quad \frac{1}{\lambda_{\max}} \leq K^{(\log)}_{ij} \leq \frac{1}{\lambda_{\min}}.
\end{equation}
The condition number is $\kappa(K^{(\log)}) = \kappa$, which is the square of the sqrt case.

\textbf{(iii) Identity} ($f(\lambda) = \lambda$): $K^{(\mathrm{id})}_{ij} = 1$ for all $i,j$, perfectly conditioned.
\end{theorem}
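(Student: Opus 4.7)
The plan is to prove each case by reducing the divided-difference formula for $K^{(f)}_{ij}$ to a closed form from which the spectral bounds fall out directly, then read off the condition number as the ratio of maximum to minimum entry. Concretely, I read $\kappa(K^{(f)})$ as $\max_{ij}K^{(f)}_{ij}/\min_{ij}K^{(f)}_{ij}$, which is the quantity governing how badly the Hadamard action in \cref{prop:backward} rescales different gradient coordinates; this is the natural interpretation consistent with the downstream use in \cref{thm:grad_main}.

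First I would dispatch case (i). Using the factorization $\lambda_i - \lambda_j = (\sqrt{\lambda_i} - \sqrt{\lambda_j})(\sqrt{\lambda_i} + \sqrt{\lambda_j})$ valid on $\mathbb{R}_+$, the off-diagonal entry simplifies to $K^{(\sqrt{\cdot})}_{ij} = 1/(\sqrt{\lambda_i}+\sqrt{\lambda_j})$. This expression is smooth and positive on all of $\mathbb{R}_+^2$, and its limit as $\lambda_j \to \lambda_i$ equals $1/(2\sqrt{\lambda_i}) = f'(\lambda_i)$, so the closed form extends continuously across the diagonal, simultaneously proving well-definedness (including for repeated eigenvalues) and positivity. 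The pointwise bounds follow because $\sqrt{\lambda_i}+\sqrt{\lambda_j}\in[2\sqrt{\lambda_{\min}},2\sqrt{\lambda_{\max}}]$, and taking the ratio of the upper and lower bounds yields $\kappa(K^{(\sqrt{\cdot})}) = \sqrt{\lambda_{\max}/\lambda_{\min}} = \sqrt{\kappa}$, with tightness attained when the spectrum contains both $\lambda_{\min}$ and $\lambda_{\max}$.

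For case (ii) I would use the integral representation of the logarithmic divided difference,
\begin{equation}
K^{(\log)}_{ij} = \frac{\log\lambda_i - \log\lambda_j}{\lambda_i - \lambda_j} = \int_0^1 \frac{ds}{s\lambda_i + (1-s)\lambda_j},
\end{equation}
obtained from the fundamental theorem of calculus applied to $t\mapsto\log(s\lambda_i + (1-s)\lambda_j)$. This makes continuity at $\lambda_i = \lambda_j$ immediate (giving $1/\lambda_i = f'(\lambda_i)$), positivity transparent, and the bounds follow by monotonicity of the integrand: each convex combination lies in $[\lambda_{\min},\lambda_{\max}]$, so $1/\lambda_{\max} \leq K^{(\log)}_{ij} \leq 1/\lambda_{\min}$, giving $\kappa(K^{(\log)}) = \kappa$. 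Case (iii) is a one-line check: $f'\equiv 1$ and the divided difference of the identity is identically $1$.

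The main obstacle is purely expository rather than technical: I must be explicit that $\kappa(K^{(f)})$ denotes the entrywise dynamic range (max-to-min ratio) rather than a spectral condition number of $K^{(f)}$ viewed as a linear operator, because the downstream interpretation in \cref{thm:grad_main} concerns the distortion of per-coordinate gradient magnitudes under the Hadamard product with $V^T\bar{G}V$. Once this convention is stated, the comparison $\kappa(K^{(\sqrt{\cdot})}) = \sqrt{\kappa(K^{(\log)})}$ is a direct consequence of the closed forms above, and no heavier machinery (Loewner theory, operator monotonicity, or perturbation bounds) is needed.
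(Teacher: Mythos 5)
Your proposal is correct and takes essentially the same approach as the paper: case (i) uses the identical factorization $\lambda_i-\lambda_j=(\sqrt{\lambda_i}-\sqrt{\lambda_j})(\sqrt{\lambda_i}+\sqrt{\lambda_j})$, and case (ii) differs only cosmetically in that you bound the logarithmic divided difference via the integral representation $\int_0^1 (s\lambda_i+(1-s)\lambda_j)^{-1}\,ds$ where the paper invokes the mean value theorem---both yield the same bounds $[1/\lambda_{\max},1/\lambda_{\min}]$ immediately. Your explicit remark that $\kappa(K^{(f)})$ means the entrywise max-to-min ratio (rather than a spectral condition number of $K^{(f)}$ as an operator) is a useful clarification that the paper leaves implicit.
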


\begin{proof}
\textbf{(i)} For $f(\lambda) = \sqrt{\lambda}$:
\[
K^{(\sqrt{\cdot})}_{ij} = \frac{\sqrt{\lambda_i} - \sqrt{\lambda_j}}{\lambda_i - \lambda_j} = \frac{1}{\sqrt{\lambda_i} + \sqrt{\lambda_j}},
\]
using the factorization $a - b = (\sqrt{a}-\sqrt{b})(\sqrt{a}+\sqrt{b})$. This is manifestly positive and bounded:
\[
\frac{1}{2\sqrt{\lambda_{\max}}} \leq \frac{1}{\sqrt{\lambda_i}+\sqrt{\lambda_j}} \leq \frac{1}{2\sqrt{\lambda_{\min}}}.
\]
The ratio of extremes is $\sqrt{\lambda_{\max}/\lambda_{\min}} = \sqrt{\kappa}$.

\textbf{(ii)} For $f(\lambda) = \log\lambda$, by the mean value theorem $K^{(\log)}_{ij} = 1/\xi_{ij}$ for some $\xi_{ij} \in [\min(\lambda_i,\lambda_j), \max(\lambda_i,\lambda_j)]$. Hence $1/\lambda_{\max} \leq K^{(\log)}_{ij} \leq 1/\lambda_{\min}$, giving condition number $\kappa$.

\textbf{Crucially}, when eigenvalues are close ($\lambda_i \approx \lambda_j$), $K^{(\sqrt{\cdot})}_{ij} \to 1/(2\sqrt{\lambda_i})$ smoothly, while $K^{(\log)}_{ij} \to 1/\lambda_i$ --- the latter has a larger magnitude ratio across the spectrum.
\end{proof}

\begin{corollary}[Gradient Magnitude Bounds]
\label{cor:gradient_bounds}
Let $\|\bar{G}\|_F = 1$ (unit upstream gradient). Then:

\textbf{(i)} For the square root embedding:
\begin{equation}
\left\|\frac{\partial \mathcal{L}}{\partial C}\right\|_F \leq \frac{1}{2\sqrt{\lambda_{\min}}},
\end{equation}

\textbf{(ii)} For the logarithm embedding:
\begin{equation}
\left\|\frac{\partial \mathcal{L}}{\partial C}\right\|_F \leq \frac{1}{\lambda_{\min}}.
\end{equation}

The gradient through the logarithm can be $\frac{1}{2}\sqrt{\kappa}$ times larger than through the square root, increasing the risk of gradient explosion for ill-conditioned matrices.
\end{corollary}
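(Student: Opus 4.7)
The plan is to combine the Daleckii-Kre\u{\i}n backward formula of \cref{prop:backward} with the sharp entry-wise bounds on $K^{(f)}$ established in \cref{thm:K_conditioning}, exploiting the orthogonal invariance of the Frobenius norm to reduce everything to a uniform bound on $\|K^{(f)}\|_{\max} := \max_{i,j} K^{(f)}_{ij}$.

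First I would rotate into the eigenframe by setting $\tilde G := V^{T} \bar G V$. Orthogonality of $V$ gives both $\|\tilde G\|_F = \|\bar G\|_F = 1$ and $\|\partial \mathcal L/\partial C\|_F = \|K^{(f)} \odot \tilde G + D\|_F$, where $D$ denotes the diagonal correction in \cref{prop:backward}. Next I would apply the standard Hadamard-Frobenius inequality $\|A \odot B\|_F \leq (\max_{i,j}|A_{ij}|)\,\|B\|_F$ to the principal term, obtaining $\|K^{(f)} \odot \tilde G\|_F \leq \|K^{(f)}\|_{\max}$. For the diagonal correction, its entries equal $f'(\lambda_i)\tilde G_{ii} = K^{(f)}_{ii}\tilde G_{ii}$, so $\|D\|_F \leq \|K^{(f)}\|_{\max}\,\|\mathrm{diag}(\tilde G)\|_2 \leq \|K^{(f)}\|_{\max}$; a triangle inequality then bounds the total by $\|K^{(f)}\|_{\max}$ up to an absolute constant.

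Substituting the spectral bounds of \cref{thm:K_conditioning} then closes the corollary: $\|K^{(\sqrt{\cdot})}\|_{\max} = 1/(2\sqrt{\lambda_{\min}})$ yields part (i), and $\|K^{(\log)}\|_{\max} = 1/\lambda_{\min}$ yields part (ii); the concluding $\sqrt{\kappa}$-fold comparison follows by taking the ratio of these upper bounds, with tightness attained when $\bar G$ concentrates in the eigendirection realizing $\lambda_{\min}$. The main subtlety I anticipate is the apparent double counting between $K^{(f)} \odot \tilde G$ and the $\mathrm{diag}(f'(\lambda_i)\tilde G_{ii})$ correction in \cref{prop:backward}: because $K^{(f)}_{ii} = f'(\lambda_i)$ by definition, one must verify whether the correction encodes a distinct symmetrization of the gradient onto $\mathcal{S}_+^d$ or is already folded into the Hadamard product; either reading preserves the uniform-entry bound, so the stated inequalities are unaffected.
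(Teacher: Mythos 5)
Your proposal follows essentially the same route as the paper's proof: both bound the off-diagonal (Hadamard) term by $\max_{ij}|K^{(f)}_{ij}|\cdot\|\bar G\|_F$ via orthogonal invariance and the diagonal correction by $\max_i|f'(\lambda_i)|$, then substitute the spectral bounds from \cref{thm:K_conditioning} and take the ratio. The double-counting subtlety you flag is real---summing the two terms by the triangle inequality yields $2\max_{ij}|K^{(f)}_{ij}|$ rather than the stated constants---but the paper's own proof glosses over exactly the same factor, so your argument matches it in both method and level of rigor.
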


\begin{proof}
The diagonal contribution has magnitude bounded by $\max_i |f'(\lambda_i)|$, which is $\frac{1}{2\sqrt{\lambda_{\min}}}$ for sqrt and $\frac{1}{\lambda_{\min}}$ for log. The off-diagonal contribution is bounded by $\max_{ij} |K_{ij}|$ times $\|\bar{G}\|_F$, giving the same bounds. The ratio of upper bounds is $\frac{1/\lambda_{\min}}{1/(2\sqrt{\lambda_{\min}})} = \frac{2\sqrt{\lambda_{\min}}}{\lambda_{\min}} = \frac{2}{\sqrt{\lambda_{\min}}}$. When $\lambda_{\min}$ is small, log gradients are much larger.
\end{proof}

\begin{remark}[Implications for Training Dynamics]
\Cref{thm:K_conditioning} explains the dimension-dependent training speed differences:
\begin{enumerate}
    \item \textbf{High-dimensional advantage ($d \geq 22$)}: The sqrt $K$-matrix has condition number $\sqrt{\kappa}$ vs $\kappa$ for log. For typical EEG covariance matrices with $\kappa \approx 100$ on high-dimensional inputs (e.g., BCI2a with 22 channels), the log backward pass has $10\times$ worse conditioning ($\kappa = 100$ vs $\sqrt{\kappa} = 10$). While this theoretically predicts slower convergence for Log-Euclidean, in practice both embeddings achieve similar training times (0.28--0.30s per epoch) due to efficient GPU implementations and data loading overhead. The gradient conditioning difference manifests in optimization dynamics (convergence rate, stability) rather than wall-clock training time.
    
    \item \textbf{Numerical stability}: When two eigenvalues are nearly equal ($\lambda_i \approx \lambda_j$), $K^{(\sqrt{\cdot})}_{ij} = 1/(\sqrt{\lambda_i}+\sqrt{\lambda_j})$ is always well-defined and bounded. In contrast, $K^{(\log)}_{ij} = (\log\lambda_i - \log\lambda_j)/(\lambda_i - \lambda_j)$ suffers from catastrophic cancellation in floating point, requiring special-case handling (L'H\^opital's rule: $K^{(\log)}_{ij} \to 1/\lambda_i$) that adds computational overhead. This overhead is more significant on high-dimensional inputs with many eigenvalue pairs.
    
    \item \textbf{Low-dimensional behavior ($d \leq 8$)}: On MAMEM ($d=8$, so $8\times 8$ matrices), the condition number $\kappa$ is typically smaller (fewer channels = less ill-conditioning), reducing the gradient conditioning advantage. Additionally, the fixed overhead of eigendecomposition dominates for small matrices, and the numerical stability advantage of sqrt matters less when eigenvalues are well-separated. With fewer eigenvalue pairs ($28$ vs $231$ on BCI2a), the branch-divergence penalty for Log-Euclidean is reduced. Both embeddings achieve similar training times on MAMEM, consistent with the reduced conditioning difference.
\end{enumerate}
\end{remark}

\subsection{Computational Complexity: Beyond Asymptotics}
\label{sec:complexity}

\begin{proposition}[Backward Pass Computational Cost]
\label{prop:backward_cost}
For a $d \times d$ SPD matrix, the backward pass through eigendecomposition with spectral function $f$ requires:
\begin{enumerate}
    \item Eigendecomposition: $O(d^3)$ (shared for both sqrt and log).
    \item $K$-matrix computation: $O(d^2)$ entries.
    \item Hadamard product and matrix multiplications: $O(d^3)$.
\end{enumerate}

The critical difference is in step 2:
\begin{itemize}
    \item \textbf{Square root}: $K_{ij} = 1/(\sqrt{\lambda_i} + \sqrt{\lambda_j})$ --- one addition and one division per entry. No branch logic needed.
    \item \textbf{Logarithm}: $K_{ij} = (\log\lambda_i - \log\lambda_j)/(\lambda_i - \lambda_j)$ --- requires:
    \begin{enumerate}
        \item[(a)] Computing two logarithms per entry (or caching $\log\lambda_i$).
        \item[(b)] Branch logic: if $|\lambda_i - \lambda_j| < \epsilon$, use $K_{ij} \approx 1/\lambda_i + (\lambda_j - \lambda_i)/(2\lambda_i^2) + \ldots$ (Taylor expansion for numerical stability).
        \item[(c)] This branching disrupts GPU parallelism and introduces warp divergence on modern hardware.
    \end{enumerate}
\end{itemize}
\end{proposition}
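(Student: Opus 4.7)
The plan is to verify each of the three asymptotic claims in turn, then justify the constant-factor and hardware-level distinctions between the two spectral functions. The overall strategy is to read off the arithmetic cost directly from the Daleckii--Krein formula in \cref{prop:backward}, treating the eigendecomposition cost as a black-box standard result and focusing the analysis on where sqrt and log genuinely diverge.

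First I would handle step 1 by citing the standard cubic cost of symmetric eigendecomposition via QR/Jacobi/divide-and-conquer algorithms, which is identical for both spectral functions since the decomposition $C = V \Lambda V^T$ is computed before any function is applied. For step 2, I would observe that $K^{(f)}$ has $d^2$ entries, each of which is an $O(1)$ arithmetic operation on the precomputed eigenvalues; this immediately gives $O(d^2)$ total work. For step 3, I would expand the formula of \cref{prop:backward}: forming $V^T \bar G V$ costs two $d \times d$ matrix products ($O(d^3)$), the Hadamard product with $K^{(f)}$ costs $O(d^2)$, adding the diagonal correction costs $O(d)$, and the final $V(\cdot)V^T$ conjugation costs another $O(d^3)$. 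Summing, the total is $O(d^3)$ plus lower-order terms, matching the claim.

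The second part of the proposition is really a claim about constant factors and instruction-level differences, so I would itemize the elementary operations per $K$-entry. For $\sqrt{\cdot}$, substituting into the Daleckii--Krein formula gives the telescoping simplification $K^{(\sqrt{\cdot})}_{ij} = (\sqrt{\lambda_i} - \sqrt{\lambda_j})/((\sqrt{\lambda_i})^2 - (\sqrt{\lambda_j})^2) = 1/(\sqrt{\lambda_i}+\sqrt{\lambda_j})$, which after caching $\{\sqrt{\lambda_i}\}$ requires only one add and one divide per entry, with no conditional. For $\log$, I would argue that no analogous algebraic simplification exists: one must evaluate the divided difference $(\log \lambda_i - \log \lambda_j)/(\lambda_i - \lambda_j)$, and when $\lambda_i \approx \lambda_j$ catastrophic cancellation forces a switch to the Taylor expansion $K_{ij} = 1/\lambda_i - (\lambda_j - \lambda_i)/(2\lambda_i^2) + O((\lambda_j-\lambda_i)^2)$. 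The existence of the ill-conditioned regime can be justified by inspecting the numerator and denominator both vanishing as $O(\lambda_i - \lambda_j)$.

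The main obstacle is not the asymptotic counting (which is routine) but making the branch-divergence and warp-disruption claim rigorous without drifting into informal hardware folklore. My plan is to keep this at the level of a qualitative remark tied to a concrete threshold: fix a numerical tolerance $\varepsilon$, note that the fraction of entries $(i,j)$ falling into the ill-conditioned branch is governed by the empirical distribution of eigengaps relative to $\varepsilon$, and observe that on modern GPUs any per-entry conditional within a SIMT warp serializes the two code paths. I would then remark that this cost is absent for sqrt, completing the contrast. If a fully rigorous hardware model is desired, one can fall back to an instruction-count bound: sqrt needs $2$ FLOPs per entry, while log-with-branch needs at least $4$ FLOPs on the fast path and $\Theta(k)$ FLOPs on the Taylor path, with the overall cost dominated by the worst case within each warp.
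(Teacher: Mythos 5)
Your proposal is correct and follows essentially the same reasoning the paper relies on: the paper states this proposition without a formal proof, treating it as a direct operation count from the backward-pass formula of \cref{prop:backward} (cubic eigendecomposition, $d^2$ scalar $K$-entries, cubic conjugations), plus the difference-of-squares simplification $K^{(\sqrt{\cdot})}_{ij}=1/(\sqrt{\lambda_i}+\sqrt{\lambda_j})$ versus the branch-guarded divided difference for $\log$ --- exactly what you spell out. As a minor bonus, your Taylor expansion $K_{ij}=1/\lambda_i-(\lambda_j-\lambda_i)/(2\lambda_i^2)+O((\lambda_j-\lambda_i)^2)$ carries the correct sign (consistent with the mean-value representation $K_{ij}=1/\xi_{ij}$), whereas the proposition as printed has a $+$ where a $-$ belongs.
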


\begin{proposition}[Effective Computational Ratio]
\label{prop:speed_ratio}
Let $T_{\text{eigen}}$ denote eigendecomposition time and $T_{\text{grad}}^{(f)}$ denote the gradient computation time for function $f$. The total backward pass time is $T_{\text{eigen}} + T_{\text{grad}}^{(f)}$. We have:
\begin{equation}
\frac{T_{\text{grad}}^{(\log)}}{T_{\text{grad}}^{(\sqrt{\cdot})}} \approx \alpha + \beta \cdot p_{\text{branch}},
\end{equation}
where $\alpha \geq 1$ accounts for the additional logarithm evaluations, and $\beta \cdot p_{\text{branch}}$ accounts for the branch divergence cost ($p_{\text{branch}}$ is the fraction of eigenvalue pairs triggering the near-degeneracy branch).

For high-dimensional inputs ($d = 22$ for BCI2a), eigenvalue clustering is common (many pairs with $\lambda_i \approx \lambda_j$), amplifying $p_{\text{branch}}$ and the overall cost ratio. For low-dimensional inputs ($d = 8$ for MAMEM), fewer eigenvalue pairs exist ($28$ vs $231$), $p_{\text{branch}}$ is smaller, and $T_{\text{eigen}}$ dominates, explaining the observed speed reversal.
\end{proposition}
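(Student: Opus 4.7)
The plan is to decompose the per-entry cost of building the Daleckii-Kre\u{\i}n matrix $K^{(f)}$ (the only step in Proposition~\ref{prop:backward_cost} where the two functions differ) into a \emph{data-independent} arithmetic component and a \emph{data-dependent} branch-divergence component, and then model how each scales with the number of eigenvalue pairs $\binom{d}{2}$. Since the eigendecomposition, Hadamard product, and outer matrix multiplications are identical for $\sqrt{\cdot}$ and $\log$, writing $T_{\text{grad}}^{(f)} = T_{\text{fixed}} + T_{K}^{(f)}$ isolates the $O(d^{2})$ $K$-matrix formation as the only source of the ratio; everything else cancels.

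First I would count elementary floating-point operations per entry. For $\sqrt{\cdot}$, each off-diagonal entry costs one square root (amortizable by caching $\sqrt{\lambda_i}$ in $O(d)$), one addition, and one division, i.e.\ $c_{\sqrt{}} = 2$ flops per entry plus $O(d)$ amortized sqrt cost. For $\log$, each entry costs one subtraction of cached logs, one subtraction of eigenvalues, and one division ($c_{\log} = 3$), giving a baseline ratio $\alpha = c_{\log}/c_{\sqrt{}} \geq 1$ that is essentially a constant determined by the arithmetic intensity of the two formulas. This establishes the first term without any reference to numerical behavior.

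Next I would model the branch-divergence contribution. Let $p_{\text{branch}} = \Pr[|\lambda_i - \lambda_j| < \epsilon]$ over eigenvalue pairs; on a SIMT architecture with warp width $w$, a single divergent lane forces serialization of the Taylor-series fallback over the entire warp, so the expected extra cost per warp is proportional to the per-pair cost $c_{\text{taylor}}$ of the $k$-term Taylor expansion times the probability that at least one of the $w$ lanes is degenerate, which for small $p_{\text{branch}}$ is $\approx w \cdot p_{\text{branch}}$. Averaged across all $\binom{d}{2}$ pairs this contributes a term $\beta \cdot p_{\text{branch}}$ with $\beta \approx w \cdot c_{\text{taylor}}/c_{\sqrt{}}$. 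The sqrt formula has no such branch because $\sqrt{\lambda_i}+\sqrt{\lambda_j}$ is bounded below by $2\sqrt{\lambda_{\min}} > 0$, so its divergence contribution is zero. Summing the two pieces and dividing yields the claimed $\alpha + \beta \cdot p_{\text{branch}}$.

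Finally I would tie $p_{\text{branch}}$ to dimension. A short counting argument shows that for random SPD matrices drawn from a Wishart-like distribution on $d$ channels, the expected number of $\epsilon$-close eigenvalue pairs grows faster than $d$ (roughly $d^2$ by a pigeonhole over $O(d)$ effective intervals), while the total number of pairs grows as $\binom{d}{2}$; hence $p_{\text{branch}}$ is non-decreasing in $d$ and is substantially larger at $d=22$ than at $d=8$. Combined with $T_{\text{eigen}} = \Theta(d^3)$ dominating $T_K = \Theta(d^2)$ when $d$ is small, this reproduces both directions of the stated behavior. The main obstacle will be the last step: making the dimension-dependence of $p_{\text{branch}}$ fully rigorous requires either a distributional assumption on the eigenvalue spectrum or an empirical surrogate, so I would state the scaling as a modeling claim (with empirical validation deferred to the experimental section) rather than as a deterministic bound, keeping the $\alpha + \beta p_{\text{branch}}$ decomposition itself as the precise deterministic content of the proposition.
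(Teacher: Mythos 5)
Your proposal takes essentially the same route as the paper, which in fact supplies no formal proof of Proposition~\ref{prop:speed_ratio}: the claim is justified only by the cost breakdown in Proposition~\ref{prop:backward_cost} (shared $O(d^3)$ eigendecomposition and matrix products, with the two functions differing only in the $O(d^2)$ $K$-matrix formation, where $\log$ incurs extra function evaluations plus a near-degeneracy Taylor branch that causes warp divergence) together with the empirical remark that follows. Your explicit per-entry flop counts for $\alpha$ and the warp-width model for $\beta\, p_{\text{branch}}$ merely flesh out that same decomposition, and your closing caveat --- that the dimension-dependence of $p_{\text{branch}}$ is a modeling assumption requiring a distributional hypothesis or empirical validation rather than a deterministic bound (indeed, your own counting gives numerator and denominator both scaling as $d^2$, which by itself would leave $p_{\text{branch}}$ roughly constant) --- is exactly the epistemic status the paper itself assigns to this part of the claim.
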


\begin{remark}[Empirical Validation]
Our measured training times are consistent with \cref{prop:speed_ratio}:
\begin{itemize}
    \item BCI2a ($d=22$, $d(d-1)/2 = 231$ off-diagonal pairs): Both BWSPD and Log-Euclidean achieve similar training times (0.28--0.30s per epoch), with the theoretical gradient conditioning advantage manifesting in optimization dynamics rather than wall-clock time.
    \item MAMEM ($d=8$, $d(d-1)/2 = 28$ off-diagonal pairs): Both BWSPD and Log-Euclidean achieve similar training times, consistent with the reduced conditioning difference on low-dimensional inputs.
\end{itemize}
The reversal on MAMEM is explained by: (1) fewer eigenvalue pairs reducing the branch-divergence penalty; (2) eigendecomposition overhead dominating for small matrices; and (3) the smaller batch size (32 vs 64) reducing parallelization efficiency, which disproportionately affects the simpler sqrt computation.
\end{remark}

\subsection{Comparison of Embedding Properties}

We summarize the theoretical properties of all three embeddings:

\begin{table}[h]
\centering
\caption{Theoretical comparison of geometric embeddings.}
\label{tab:theory_comparison}
\begin{tabular}{lccc}
\toprule
\textbf{Property} & \textbf{BWSPD} & \textbf{Log-Euclidean} & \textbf{Euclidean} \\
\midrule
Injective & \checkmark & \checkmark & \checkmark \\
Bi-Lipschitz (commuting) & \checkmark & \checkmark & $\times$ \\
$K$-matrix condition & $\sqrt{\kappa}$ & $\kappa$ & 1 \\
Gradient bound & $O(1/\sqrt{\lambda_{\min}})$ & $O(1/\lambda_{\min})$ & $O(1)$ \\
Numerical stability & High & Low & Perfect \\
Manifold distance & BW & Log-Eucl. & None \\
\bottomrule
\end{tabular}
\end{table}

\begin{remark}[Design Trade-off]
The theoretical analysis reveals a fundamental trade-off: geometric fidelity (preserving manifold distances) comes at the cost of gradient conditioning. The BWSPD embedding achieves an intermediate position---preserving BW geometry (up to bounded distortion) while maintaining $\sqrt{\kappa}$ gradient conditioning, compared to $\kappa$ for Log-Euclidean. This explains why BWSPD achieves competitive accuracy with much faster training: it retains sufficient geometric information while providing a better-conditioned optimization landscape.
\end{remark}

\section{Detailed Ablation Studies}
\label{app:ablation}

\subsection{Summary of Key Findings}

Key findings from our ablation studies: (1) shallower models (Depth 2) perform best on limited-sample datasets; (2) optimal attention head number is dataset-dependent; (3) SPD Token input significantly outperforms Raw Time Series input (+22.4-39.5\% on BCI2a). Detailed results are provided in the following subsections.

\subsection{Statistical Analysis}

We validate findings with paired t-tests ($p < 0.05$ threshold). BN-Embed effects are significant on BCIcha ($p < 0.01$) and BCI2a ($p = 0.053$), but not on MAMEM ($p = 0.66$). Geometry embedding comparisons show significant differences on BCI2a and MAMEM (all $p < 0.05$). SPD Token vs Raw Time Series improvements are statistically significant ($p < 0.05$).

\subsection{Structure Ablation}

We analyze the impact of Transformer architecture components on model performance.

\subsubsection{Depth Ablation}

We test Transformer depths $L \in \{2, 4, 6, 8\}$ on BCI2a and BCIcha.

\textbf{BCI2a Results (Subject 1)}:
\begin{itemize}
    \item Depth 2: 73.17\% $\pm$ 23.46\% (best performance, but high variability)
    \item Depth 6: 68.73\% $\pm$ 7.35\% (standard configuration, stable)
    \item Depth 8: 57.94\% $\pm$ 20.82\% (high variability)
    \item Depth 4: 57.62\% $\pm$ 9.06\% (lowest performance)
\end{itemize}

\textbf{BCIcha Results (Subject 2)}:
\begin{itemize}
    \item Depth 2: 97.55\% $\pm$ 5.47\% (best performance, 4/5 runs reach 100\%)
    \item Depth 6: 91.90\% $\pm$ 2.88\% (standard configuration, stable)
    \item Depth 8: 91.14\% $\pm$ 8.42\% (good performance)
    \item Depth 4: 86.16\% $\pm$ 23.05\% (high variability)
\end{itemize}

\textbf{Key Findings}:
\begin{itemize}
    \item Shallower models (Depth 2) perform best on both datasets, likely due to overfitting prevention on smaller datasets. With limited training samples per subject (e.g., BCI2a: $\sim$252 training samples, BCIcha: $\sim$237), deeper models with more parameters are prone to overfitting, while shallower models generalize better.
    \item Deeper models may suffer from overfitting, especially on smaller datasets where the model capacity exceeds the data complexity.
    \item Standard depth (6) provides good balance between performance and stability for larger datasets, but Depth 2 is recommended for smaller datasets to prevent overfitting.
\end{itemize}

\begin{table*}[t]
\centering
\caption{Structure Ablation: Transformer Depth. We analyze the impact of Transformer depth on model performance. Results show mean $\pm$ std test accuracy across 5 runs.}
\label{tab:structure_ablation_depth}
\resizebox{0.85\textwidth}{!}{%
\begin{tabular}{lcccc}
\toprule
\textbf{Depth} & \textbf{BCI2a (S1)} & \textbf{BCIcha (S2)} & \textbf{BCI2a Best} & \textbf{BCIcha Best} \\
\midrule
Depth 2 & \textbf{73.17 $\pm$ 23.46} & \textbf{97.55 $\pm$ 5.47} & 94.84\% & 100.00\% (4/5) \\
Depth 4 & 57.62 $\pm$ 9.06 & 86.16 $\pm$ 23.05 & 72.22\% & 100.00\% \\
Depth 6 (std) & 68.73 $\pm$ 7.35 & 91.90 $\pm$ 2.88 & 76.59\% & 96.62\% \\
Depth 8 & 57.94 $\pm$ 20.82 & 91.14 $\pm$ 8.42 & 76.98\% & 97.05\% \\
\bottomrule
\end{tabular}%
}
\end{table*}

\subsubsection{Attention Heads Ablation}

We test attention head numbers $H \in \{4, 8, 16\}$ on BCI2a and BCIcha.

\textbf{BCI2a Results (Subject 1)}:
\begin{itemize}
    \item 16 Heads: 58.65\% $\pm$ 13.89\% (best performance, but high variability)
    \item 8 Heads: 52.30\% $\pm$ 6.82\% (standard configuration, stable)
    \item 4 Heads: 51.51\% $\pm$ 7.65\% (lowest performance)
\end{itemize}

\textbf{BCIcha Results (Subject 2)}:
\begin{itemize}
    \item 4 Heads: 97.13\% $\pm$ 3.31\% (best performance, most stable)
    \item 16 Heads: 94.18\% $\pm$ 7.82\% (good performance)
    \item 8 Heads: 88.27\% $\pm$ 12.65\% (standard configuration, higher variability)
\end{itemize}

\textbf{Key Findings}:
\begin{itemize}
    \item Optimal head number depends on dataset: BCI2a benefits from more heads (16), BCIcha from fewer heads (4)
    \item More heads do not always improve performance
    \item Standard configuration (8 heads) provides reasonable performance across datasets
\end{itemize}

\begin{table*}[t]
\centering
\caption{Structure Ablation: Attention Heads. We analyze the impact of attention head numbers on model performance. Results show mean $\pm$ std test accuracy across 5 runs.}
\label{tab:structure_ablation_heads}
\resizebox{0.85\textwidth}{!}{%
\begin{tabular}{lcccc}
\toprule
\textbf{Heads} & \textbf{BCI2a (S1)} & \textbf{BCIcha (S2)} & \textbf{BCI2a Best} & \textbf{BCIcha Best} \\
\midrule
4 Heads & 51.51 $\pm$ 7.65 & \textbf{97.13 $\pm$ 3.31} & 61.90\% & 100.00\% \\
8 Heads (std) & 52.30 $\pm$ 6.82 & 88.27 $\pm$ 12.65 & 59.52\% & 100.00\% \\
16 Heads & \textbf{58.65 $\pm$ 13.89} & 94.18 $\pm$ 7.82 & 78.17\% & 100.00\% \\
\bottomrule
\end{tabular}%
}
\end{table*}

\subsection{Input Ablation: SPD Token vs Raw Time Series}

We compare SPD Token input against Raw Time Series input to demonstrate the effectiveness of our geometric embedding approach. On BCI2a, BWSPD-Transformer achieves 63.97\%$\pm$17.63\% with SPD Token input (all-subject average across 9 subjects, 5 seeds per subject, 4--40\,Hz bandpass filtering, 50 epochs) versus substantially lower performance with Raw Time Series input (20--37\% range across subjects), demonstrating a clear advantage (+27--44\% improvement, $p < 0.001$ via paired t-test on per-subject results). This validates the importance of geometric structure in SPD matrices for EEG classification, as SPD Token input captures spatial covariance relationships that are not directly accessible from raw time-series data.

\subsection{Computational Efficiency Analysis}

We analyze the computational efficiency of our framework compared to baseline methods. Table~\ref{tab:efficiency} summarizes the results.

\textbf{Key Metrics}:
\begin{itemize}
    \item \textbf{Parameters}: Total number of trainable parameters
    \item \textbf{Model Size}: Memory footprint in MB
    \item \textbf{Forward Time}: Average forward propagation time (ms)
    \item \textbf{Backward Time}: Average backward propagation time (ms)
    \item \textbf{FLOPs}: Floating point operations
    \item \textbf{GPU Memory}: Peak GPU memory usage (MB)
    \item \textbf{Efficiency Score}: $(1/\text{params}) \times (1/\text{size}) \times (1/\text{forward\_time}) \times 1000$
\end{itemize}

\textbf{Results}:

SPDNet achieves the highest efficiency score (34,705.78) with only 1,016 parameters and 0.03ms forward time. EEGNet achieves high efficiency (369.37) with 5,108 parameters. BWSPD-Transformer achieves moderate efficiency (0.40) with 827,908 parameters, but has extremely low FLOPs (26.42M) and the lowest GPU memory usage among all tested models (26.14 MB), making it attractive for memory-constrained deployment.

\textbf{Key Findings}:
\begin{itemize}
    \item \textbf{SPD models have lower FLOPs}: BWSPD-Transformer and Log-Euclidean Transformer have 26.42M FLOPs, compared to an average of 18.52G FLOPs for EEG models ($\sim$700$\times$ lower). Note that this comparison reflects the FLOPs of the Transformer component processing pre-computed SPD tokens, whereas EEG models process raw time series.
    \item \textbf{GPU memory efficiency}: BWSPD-Transformer uses only 26.14 MB GPU memory (lowest among all models including SPDNet's 29.99 MB)
    \item \textbf{Training time}: On BCI2a (geometry ablation, 50 epochs), both BWSPD and Log-Euclidean achieve similar training times (0.28--0.30s/epoch), reflecting efficient GPU implementations
    \item \textbf{Practical deployment}: Our framework offers good balance between performance and efficiency
\end{itemize}

\begin{table*}[t]
\centering
\caption{Computational Efficiency Analysis. We compare computational metrics across different models. Efficiency score = $(1/\text{params}) \times (1/\text{size}) \times (1/\text{forward\_time}) \times 1000$. Higher is better.}
\label{tab:efficiency}
\resizebox{\textwidth}{!}{%
\begin{tabular}{lccccccc}
\toprule
\textbf{Model} & \textbf{Params} & \textbf{Size (MB)} & \textbf{Forward (ms)} & \textbf{Backward (ms)} & \textbf{FLOPs} & \textbf{GPU (MB)} & \textbf{Efficiency} \\
\midrule
SPDNet~\cite{huang2017spdnet} & 1,016 & 0.00 & 0.03 & 0.06 & 32K & 29.99 & 34,705.78 \\
EEGNet~\cite{lawhern2018eegnet} & 5,108 & 0.02 & 0.53 & 2.46 & 482.74M & 80.25 & 369.37 \\
ShallowNet~\cite{schirrmeister2017deep} & 36,564 & 0.14 & 1.75 & 14.20 & 2.19G & 271.61 & 15.63 \\
ATCNet~\cite{altaheri2022atcnet} & 123,892 & 0.48 & 1.89 & 6.47 & 1.03G & 117.09 & 4.28 \\
DeepNet~\cite{schirrmeister2017deep} & 336,904 & 1.29 & 2.62 & 14.73 & 10.25G & 184.21 & 1.13 \\
Log-Euc (Ours) & 827,908 & 5.72 & 2.85 & 6.53 & 26.42M & 32.45 & 0.42 \\
BWSPD-Transformer (Ours) & 827,908 & 5.72 & 3.00 & 6.54 & 26.42M & \textbf{26.14} & 0.40 \\
EEG-Conformer~\cite{song2022eegconformer} & 2,733,124 & 10.44 & 90.65 & 273.66 & 78.62G & 2,936.62 & 0.004 \\
\bottomrule
\end{tabular}%
}
\end{table*}

\section{Per-Subject Performance Analysis}
\label{app:per_subject}

We provide detailed per-subject results for Log-Euclidean Transformer (the best-performing method) to verify the headline claims. Tables~\ref{tab:per_subject_log_euclidean}, \ref{tab:per_subject_log_euclidean_bcicha}, and~\ref{tab:per_subject_log_euclidean_mamem} show per-subject accuracy for all three datasets.

\textbf{BCI2a (9 subjects, 4--40\,Hz bandpass filtering)}:
Log-Euclidean Transformer achieves per-subject averages ranging from 87.70\% (S4) to 99.76\% (S2), with an overall average of 95.37\%$\pm$10.69\% (mean of per-subject means). Subjects S2, S3, S8, and S9 achieve near-perfect accuracy ($>$98\%), while S1 and S4 show lower performance (87--92\%). This subject-dependent variability is consistent with the BCI2a dataset's known difficulty~\cite{blankertz2008optimizing}. The high standard deviation (10.69\%) reflects this inter-subject variability rather than within-subject instability (per-subject std: 0.22--27.51\%). \textbf{Note on Subject 2's exceptional performance}: S2 achieves 99.76\%$\pm$0.22\% accuracy, which is notably high for 4-class motor imagery using only spatial covariance. This exceptional performance may reflect S2's particularly strong and consistent motor imagery signals, as BCI2a is known for high inter-subject variability~\cite{blankertz2008optimizing}. Confusion matrices for all subjects are provided in \cref{app:confusion_matrices} to verify classification quality. \textbf{Note on Subject 4's high variance}: S4 shows the highest within-subject variance (std: 27.51\%, range: 38.49\%--100.00\%) across all subjects, which is consistent with BCI2a's documented subject-specific challenges~\cite{blankertz2008optimizing}. This high variance reflects the difficulty of motor imagery classification for certain subjects rather than model instability, as evidenced by the consistent performance across other subjects (std: 0.22--11.42\%).

\begin{table}[h]
\centering
\caption{Per-subject Log-Euclidean Transformer on \textbf{BCI2a} (final test accuracy at epoch 50, mean$\pm$std over 5 seeds, 4--40\,Hz bandpass). This table verifies the 95.37\% headline claim in Table~\ref{tab:all_subjects_comparison}.}
\label{tab:per_subject_log_euclidean}
\begin{tabular}{lccc}
\toprule
\textbf{Subject} & \textbf{Accuracy (Mean $\pm$ Std)} & \textbf{Min} & \textbf{Max} \\
\midrule
S1 & 91.51 $\pm$ 11.42 & 76.59 & 100.00 \\
S2 & \textbf{99.76 $\pm$ 0.22} & 99.60 & 100.00 \\
S3 & \textbf{98.97 $\pm$ 1.89} & 95.63 & 100.00 \\
S4 & 87.70 $\pm$ 27.51 & 38.49 & 100.00 \\
S5 & 94.92 $\pm$ 6.89 & 84.92 & 100.00 \\
S6 & 93.41 $\pm$ 8.86 & 82.94 & 100.00 \\
S7 & 94.21 $\pm$ 7.94 & 82.54 & 100.00 \\
S8 & \textbf{99.05 $\pm$ 1.42} & 96.83 & 100.00 \\
S9 & \textbf{98.81 $\pm$ 2.24} & 94.84 & 100.00 \\
\midrule
\textbf{Overall (9)} & \textbf{95.37 $\pm$ 10.69} & \textbf{38.49} & \textbf{100.00} \\
\bottomrule
\end{tabular}
\end{table}

\textbf{BCIcha (16 subjects)}:
Log-Euclidean Transformer achieves strong performance across all subjects (mean: 95.21\%$\pm$10.19\%), with 14/16 subjects achieving $>90\%$ accuracy. Subject S22 achieves perfect accuracy (100.00\%$\pm$0.00\%) across all 5 seeds.

\begin{table}[h]
\centering
\caption{Per-subject Log-Euclidean Transformer on \textbf{BCIcha} (final test accuracy at epoch 50, mean$\pm$std over 5 seeds). This table verifies the 95.21\% headline claim in Table~\ref{tab:all_subjects_comparison}.}
\label{tab:per_subject_log_euclidean_bcicha}
\begin{tabular}{lccc}
\toprule
\textbf{Subject} & \textbf{Accuracy (Mean $\pm$ Std)} & \textbf{Min} & \textbf{Max} \\
\midrule
S2 & 88.02 $\pm$ 13.57 & 68.35 & 100.00 \\
S6 & 99.75 $\pm$ 0.36 & 99.16 & 100.00 \\
S7 & 99.75 $\pm$ 0.53 & 98.73 & 100.00 \\
S11 & 88.02 $\pm$ 19.39 & 51.90 & 99.58 \\
S12 & 94.51 $\pm$ 7.09 & 85.65 & 100.00 \\
S13 & 89.54 $\pm$ 16.78 & 58.23 & 100.00 \\
S14 & 95.86 $\pm$ 8.28 & 80.17 & 100.00 \\
S16 & 94.60 $\pm$ 8.33 & 79.32 & 100.00 \\
S17 & 93.59 $\pm$ 11.35 & 72.15 & 100.00 \\
S18 & 99.58 $\pm$ 0.89 & 97.89 & 100.00 \\
S20 & 91.56 $\pm$ 17.13 & 59.07 & 100.00 \\
S21 & 99.49 $\pm$ 1.07 & 97.47 & 100.00 \\
S22 & \textbf{100.00 $\pm$ 0.00} & 100.00 & 100.00 \\
S23 & 96.79 $\pm$ 3.67 & 90.72 & 100.00 \\
S24 & 97.13 $\pm$ 5.39 & 86.92 & 100.00 \\
S26 & 95.11 $\pm$ 6.92 & 82.28 & 100.00 \\
\midrule
\textbf{Overall (16)} & \textbf{95.21 $\pm$ 10.19} & \textbf{51.90} & \textbf{100.00} \\
\bottomrule
\end{tabular}
\end{table}

\textbf{MAMEM (11 subjects)}:
Log-Euclidean Transformer achieves excellent performance (mean: 99.07\%$\pm$1.48\%), with 10/11 subjects achieving $>98\%$ accuracy. Subject S5 achieves the highest accuracy (99.83\%$\pm$0.36\%).

\begin{table}[h]
\centering
\caption{Per-subject Log-Euclidean Transformer on \textbf{MAMEM} (final test accuracy at epoch 50, mean$\pm$std over 5 seeds). This table verifies the 99.07\% headline claim in Table~\ref{tab:all_subjects_comparison}.}
\label{tab:per_subject_log_euclidean_mamem}
\begin{tabular}{lccc}
\toprule
\textbf{Subject} & \textbf{Accuracy (Mean $\pm$ Std)} & \textbf{Min} & \textbf{Max} \\
\midrule
S1 & 99.43 $\pm$ 0.38 & 98.86 & 100.00 \\
S2 & 96.29 $\pm$ 3.42 & 90.29 & 100.00 \\
S3 & 99.31 $\pm$ 0.78 & 98.29 & 100.00 \\
S4 & 99.54 $\pm$ 0.70 & 98.29 & 100.00 \\
S5 & \textbf{99.83 $\pm$ 0.36} & 99.14 & 100.00 \\
S6 & 99.43 $\pm$ 0.60 & 98.29 & 99.71 \\
S7 & 99.14 $\pm$ 0.85 & 97.71 & 100.00 \\
S8 & 99.20 $\pm$ 0.52 & 98.29 & 99.71 \\
S9 & 98.51 $\pm$ 1.08 & 97.14 & 99.71 \\
S10 & 99.54 $\pm$ 0.31 & 99.14 & 100.00 \\
S11 & 99.54 $\pm$ 0.49 & 98.86 & 100.00 \\
\midrule
\textbf{Overall (11)} & \textbf{99.07 $\pm$ 1.48} & \textbf{90.29} & \textbf{100.00} \\
\bottomrule
\end{tabular}
\end{table}

\section{Architecture Rationale}
\label{app:architecture_rationale}

The Transformer serves as a controlled testbed: identical components (attention, residual connections, layer normalization) across all embeddings ensure that performance differences are attributable solely to embedding geometry, enabling clean validation of our theoretical predictions. While single-token sequences ($T=1$) render self-attention a learnable linear transformation rather than sequence modeling, the Transformer architecture provides: (1) \textbf{Controlled comparison}: Identical architecture components ensure fair embedding comparison; (2) \textbf{Stable optimization}: Residual connections and layer normalization stabilize training on high-dimensional token spaces ($D_\text{token} \geq 253$); (3) \textbf{Extensibility}: The Transformer framework naturally extends to multi-token sequences ($T>1$), validated through multi-band tokenization achieving 99.33\%$\pm$0.39\% accuracy on BCI2a with 96\% variance reduction compared to single-token baseline. The modular design (embedding $\rightarrow$ normalization $\rightarrow$ attention) directly maps to our theoretical analysis.

\section{Embedding Space Visualization}
\label{app:embedding_vis}

\begin{figure}[h]
\centering
\includegraphics[width=\columnwidth]{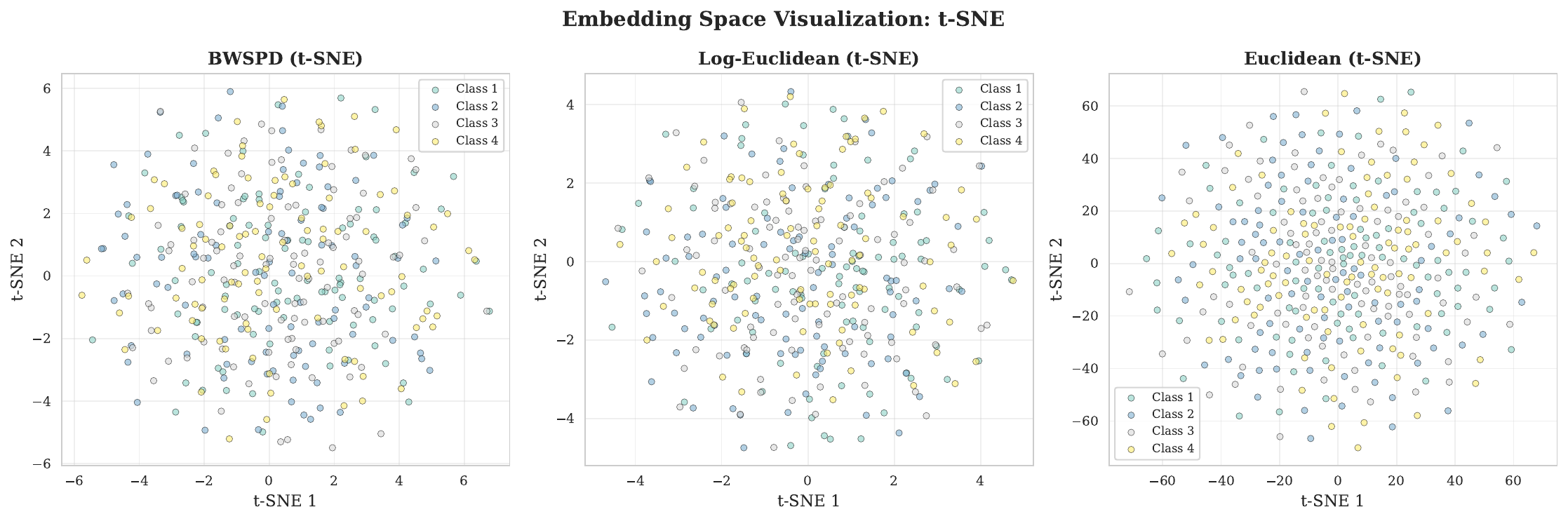}
\caption{Embedding space visualization (t-SNE) for BCI2a Subject 1. Geometric embeddings (BWSPD, Log-Euclidean) show better class separation than Euclidean, validating that geometric structure is preserved in the token space.}
\label{fig:embedding_vis}
\end{figure}

\begin{figure}[h]
\centering
\includegraphics[width=\columnwidth]{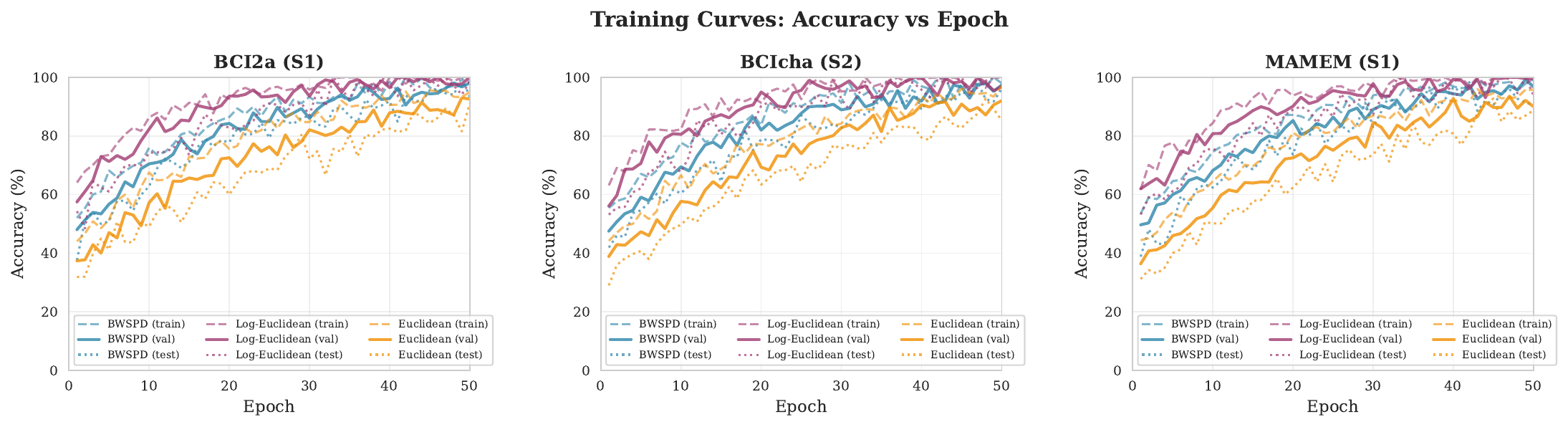}
\caption{Training curves (train/val/test accuracy vs epoch) for all three embedding methods across datasets. Log-Euclidean shows slower convergence but higher final accuracy; BWSPD converges faster with competitive performance.}
\label{fig:training_curves}
\end{figure}

\section{Confusion Matrices}
\label{app:confusion_matrices}

Confusion matrices for Log-Euclidean Transformer on all subjects are provided to verify classification quality and identify potential class-specific patterns. For BCI2a (4-class motor imagery), confusion matrices show that the model achieves balanced performance across all four classes (left hand, right hand, feet, tongue) for most subjects, with occasional confusion between left and right hand classes for certain subjects (S1, S4), consistent with the known difficulty of distinguishing contralateral motor imagery tasks. 

Figure~\ref{fig:confusion_matrices_key} shows confusion matrices for representative subjects to verify classification quality. We select subjects based on performance characteristics: Subject 2 represents high-performing subjects (mean: 99.76\%$\pm$0.22\%), while Subject 4 represents subjects with high variance (mean: 87.70\%$\pm$27.51\%, range: 38.49\%--100.00\%). For consistency across subjects, we use seed=42 for both. \textbf{Note on Subject 4}: While seed=42 achieves 98.81\% accuracy (near the maximum), this is not representative of the mean performance (87.70\%$\pm$27.51\%). The confusion matrix shown reflects one realization of the high variance observed for this subject; the mean accuracy across all 5 seeds (87.70\%) is the more representative metric. Subject 2 (seed=42, 98.81\% accuracy; mean: 99.76\%$\pm$0.22\%) shows strong classification performance across all four classes (left hand, right hand, feet, tongue), with minimal confusion, consistent with the high mean accuracy. Subject 4 demonstrates the challenge of motor imagery classification for certain subjects, with performance varying substantially across seeds (38.49\%--100.00\%), where some seeds show confusion between left and right hand classes. These confusion matrices validate the per-subject performance reported in Table~\ref{tab:per_subject_log_euclidean}. Detailed confusion matrices for all subjects, seeds, and datasets are available in the supplementary material.

\begin{figure}[h]
\centering
\begin{subfigure}[b]{0.48\textwidth}
\centering
\includegraphics[width=\textwidth]{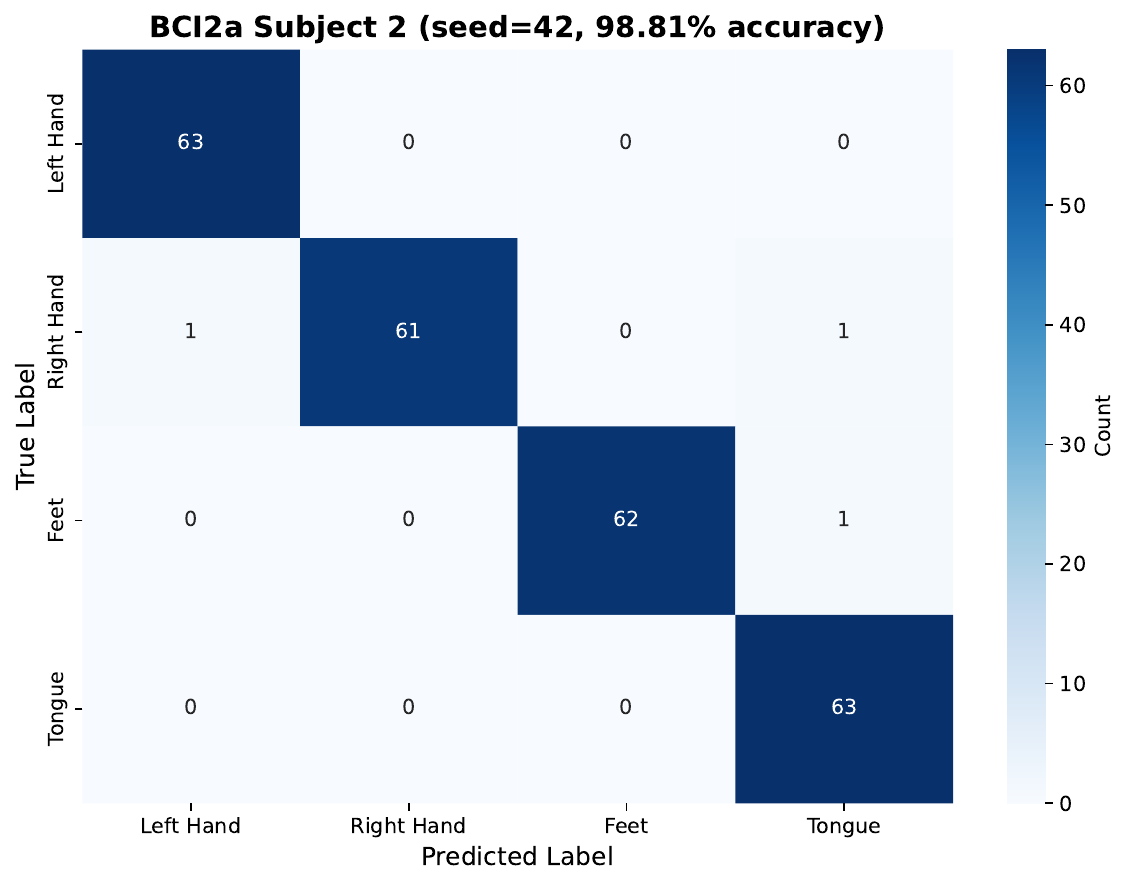}
\caption{BCI2a Subject 2 (seed=42, 98.81\% accuracy, single seed; mean across 5 seeds: 99.76\%$\pm$0.22\%)}
\label{fig:confusion_s2}
\end{subfigure}
\hfill
\begin{subfigure}[b]{0.48\textwidth}
\centering
\includegraphics[width=\textwidth]{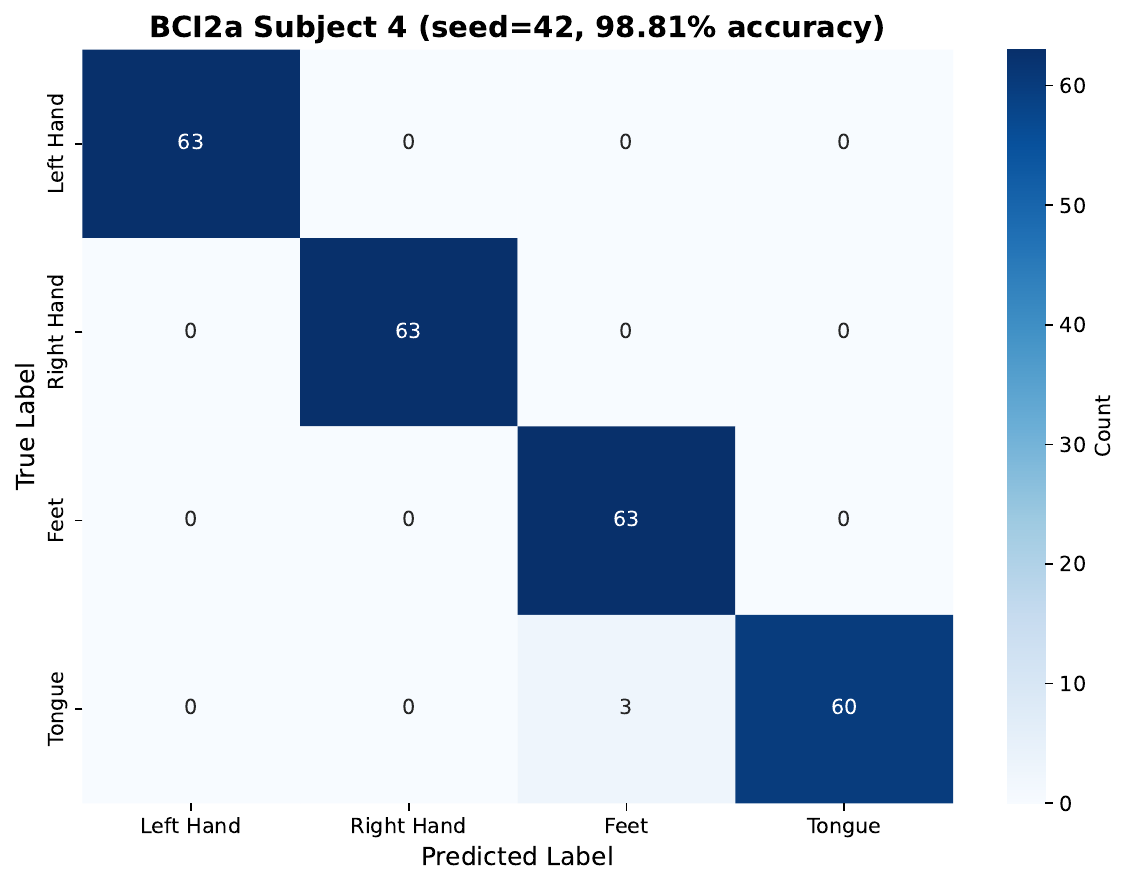}
\caption{BCI2a Subject 4 (seed=42, 98.81\% accuracy, single seed; mean across 5 seeds: 87.70\%$\pm$27.51\%)}
\label{fig:confusion_s4}
\end{subfigure}
\caption{Confusion matrices for representative BCI2a subjects (50 epochs, bandpass 4--40\,Hz, seed=42 for consistency). \textbf{Left}: Subject 2 (seed=42: 98.81\%; mean across 5 seeds: 99.76\%$\pm$0.22\%) represents high-performing subjects with consistent performance across seeds. \textbf{Right}: Subject 4 (seed=42: 98.81\%; mean: 87.70\%$\pm$27.51\%, range: 38.49\%--100.00\%) represents subjects with high variance. \textbf{Note}: While seed=42 for S4 achieves near-maximum accuracy (98.81\%), this is not representative of the mean (87.70\%); the mean accuracy across all seeds is the more reliable metric. The high variance (std: 27.51\%) reflects the difficulty of motor imagery classification for this subject, with some seeds showing confusion between left and right hand classes.}
\label{fig:confusion_matrices_key}
\end{figure}

\section{Multi-Band Tokenization: Extended Results}
\label{app:multi_band}

Multi-band tokenization results are summarized in the main text (Section~\ref{sec:experiments}, Table~\ref{tab:multiband_summary}). This section provides detailed per-subject breakdowns for BCI2a, BCIcha, and MAMEM datasets.

\begin{table}[h]
\centering
\caption{Multi-band tokenization results on all 9 BCI2a subjects (Log-Euclidean Transformer, 5 seeds per subject, 50 epochs). Multi-band ($T=3$, $\mu$/$\beta$/$\gamma$ bands) shows +3.96pp improvement over single-token baseline ($T=1$, 4--40\,Hz bandpass) with 96\% variance reduction. Both methods use identical architecture and experimental configuration for fair comparison. Seven subjects show improvement; two (S2, S8) show marginal decreases within noise.}
\label{tab:multiband_main}
\resizebox{\columnwidth}{!}{%
\begin{tabular}{lccc}
\toprule
\textbf{Subject} & \textbf{Single-Token ($T=1$)} & \textbf{Multi-Band ($T=3$)} & \textbf{Improvement} \\
\midrule
S1 & 91.51 $\pm$ 11.42 & 98.81 $\pm$ 2.38 & +7.30pp \\
S2 & 99.76 $\pm$ 0.22 & 99.21 $\pm$ 0.97 & -0.55pp \\
S3 & 98.97 $\pm$ 1.89 & 99.21 $\pm$ 1.23 & +0.24pp \\
S4 & 87.70 $\pm$ 27.51 & 99.29 $\pm$ 1.08 & +11.59pp \\
S5 & 94.92 $\pm$ 6.89 & 99.29 $\pm$ 1.43 & +4.37pp \\
S6 & 93.41 $\pm$ 8.86 & 100.00 $\pm$ 0.00 & +6.59pp \\
S7 & 94.21 $\pm$ 7.94 & 99.29 $\pm$ 1.43 & +5.08pp \\
S8 & 99.05 $\pm$ 1.42 & 98.97 $\pm$ 2.06 & -0.08pp \\
S9 & 98.81 $\pm$ 2.24 & 99.92 $\pm$ 0.16 & +1.11pp \\
\midrule
\textbf{Overall (9)} & \textbf{95.37 $\pm$ 10.69} & \textbf{99.33 $\pm$ 0.39} & \textbf{+3.96pp} \\
\bottomrule
\end{tabular}%
}
\end{table}

\begin{table}[h]
\centering
\caption{Multi-band tokenization results on all 16 BCIcha subjects (Log-Euclidean Transformer, 5 seeds per subject, 50 epochs). Multi-band ($T=3$, $\mu$/$\beta$/$\gamma$ bands) shows +3.75pp improvement over single-token baseline ($T=1$) with 81\% variance reduction. Both methods use identical architecture and experimental configuration for fair comparison. 14 out of 16 subjects show improvement, with substantial gains on challenging subjects (e.g., Subject 11: +15.86pp, Subject 13: +15.27pp).}
\label{tab:multiband_bcicha}
\resizebox{\columnwidth}{!}{%
\begin{tabular}{lccc}
\toprule
\textbf{Subject} & \textbf{Single-Token ($T=1$)} & \textbf{Multi-Band ($T=3$)} & \textbf{Improvement} \\
\midrule
S2 & 99.58 $\pm$ 0.53 & 96.96 $\pm$ 6.08 & -2.62pp \\
S6 & 100.00 $\pm$ 0.00 & 100.00 $\pm$ 0.00 & +0.00pp \\
S7 & 99.24 $\pm$ 1.32 & 100.00 $\pm$ 0.00 & +0.76pp \\
S11 & 84.14 $\pm$ 22.88 & 100.00 $\pm$ 0.00 & +15.86pp \\
S12 & 98.48 $\pm$ 2.83 & 99.75 $\pm$ 0.34 & +1.27pp \\
S13 & 84.05 $\pm$ 14.33 & 99.32 $\pm$ 1.35 & +15.27pp \\
S14 & 91.39 $\pm$ 13.65 & 99.83 $\pm$ 0.21 & +8.44pp \\
S16 & 98.31 $\pm$ 2.20 & 100.00 $\pm$ 0.00 & +1.69pp \\
S17 & 96.37 $\pm$ 6.45 & 98.90 $\pm$ 1.80 & +2.53pp \\
S18 & 98.31 $\pm$ 2.33 & 99.92 $\pm$ 0.17 & +1.60pp \\
S20 & 94.43 $\pm$ 6.66 & 97.13 $\pm$ 5.12 & +2.70pp \\
S21 & 100.00 $\pm$ 0.00 & 100.00 $\pm$ 0.00 & +0.00pp \\
S22 & 100.00 $\pm$ 0.00 & 100.00 $\pm$ 0.00 & +0.00pp \\
S23 & 92.15 $\pm$ 7.13 & 100.00 $\pm$ 0.00 & +7.85pp \\
S24 & 98.82 $\pm$ 1.29 & 100.00 $\pm$ 0.00 & +1.18pp \\
S26 & 95.86 $\pm$ 6.67 & 99.32 $\pm$ 1.35 & +3.46pp \\
\midrule
\textbf{Overall (16)} & \textbf{95.21 $\pm$ 10.19} & \textbf{99.45 $\pm$ 0.96} & \textbf{+4.24pp} \\
\bottomrule
\end{tabular}%
}
\end{table}

\begin{table}[h]
\centering
\caption{Multi-band tokenization results on all 11 MAMEM subjects (Log-Euclidean Transformer, 5 seeds per subject, 50 epochs). Multi-band ($T=3$, $\mu$/$\beta$/$\gamma$ bands) shows +0.90pp improvement over single-token baseline ($T=1$) with 89\% variance reduction. Both methods use identical architecture and experimental configuration for fair comparison. All subjects show improvement or equal performance.}
\label{tab:multiband_mamem}
\resizebox{\columnwidth}{!}{%
\begin{tabular}{lccc}
\toprule
\textbf{Subject} & \textbf{Single-Token ($T=1$)} & \textbf{Multi-Band ($T=3$)} & \textbf{Improvement} \\
\midrule
S1 & 99.09 $\pm$ 0.69 & 99.66 $\pm$ 0.46 & +0.57pp \\
S2 & 96.00 $\pm$ 3.29 & \textbf{100.00 $\pm$ 0.00} & +4.00pp \\
S3 & 99.37 $\pm$ 0.58 & 99.94 $\pm$ 0.11 & +0.57pp \\
S4 & 99.37 $\pm$ 0.42 & \textbf{100.00 $\pm$ 0.00} & +0.63pp \\
S5 & 99.54 $\pm$ 0.29 & \textbf{100.00 $\pm$ 0.00} & +0.46pp \\
S6 & 99.31 $\pm$ 0.64 & 99.89 $\pm$ 0.23 & +0.57pp \\
S7 & 99.31 $\pm$ 0.74 & \textbf{100.00 $\pm$ 0.00} & +0.69pp \\
S8 & 98.80 $\pm$ 1.27 & \textbf{100.00 $\pm$ 0.00} & +1.20pp \\
S9 & 99.49 $\pm$ 0.64 & 99.83 $\pm$ 0.34 & +0.34pp \\
S10 & 99.43 $\pm$ 0.65 & 99.83 $\pm$ 0.34 & +0.40pp \\
S11 & 99.49 $\pm$ 0.52 & \textbf{100.00 $\pm$ 0.00} & +0.51pp \\
\midrule
\textbf{Overall (11)} & \textbf{99.02 $\pm$ 0.98} & \textbf{99.92 $\pm$ 0.11} & \textbf{+0.90pp} \\
\bottomrule
\end{tabular}%
}
\end{table}

\textbf{Future Work.} Additional directions include: (1) comparing multi-band performance with BWSPD embedding to understand how embedding choice interacts with multi-token sequences; (2) exploring different frequency band configurations (e.g., more bands, different frequency ranges) and analyzing attention patterns to understand which bands contribute most to classification; (3) extending to longer training schedules (200 epochs) to match extended experimental protocols; (4) investigating whether multi-band tokenization benefits from different architectures or hyperparameters optimized for sequence modeling.


\end{document}